\newcommand{\rbb}{\ensuremath{\mathbb{R}}}
\newcommand{\pbb}{\ensuremath{\mathbb{P}}}
\newcommand{\qbb}{\ensuremath{\mathbb{Q}}}
\newcommand{\ebb}{\ensuremath{\mathbb{E}}}
\newcommand{\drm}{\ensuremath{\mathrm{d}}}
\newcommand{\ie}{i.e.\ }
\newcommand{\eg}{e.g.\ }
\newcommand{\R}{\mathbb{R}}
\crefname{algorithm}{Algorithm}{Algorithms}
\Crefname{algorithm}{Algorithm}{Algorithms}
\theoremstyle{plain}
\newtheorem{theorem}{Theorem}[section]
\newtheorem{proposition}[theorem]{Proposition}
\newtheorem{lemma}[theorem]{Lemma}
\theoremstyle{definition}
\newtheorem{setting}{Setting}
\theoremstyle{remark}
\newtheorem{remark}[theorem]{Remark}
\icmltitlerunning{Supervised Guidance Training for Infinite-Dimensional Diffusion Models}
\begin{document}

\twocolumn[
\icmltitle{Supervised Guidance Training for Infinite-Dimensional Diffusion Models}



  \icmlsetsymbol{equal}{*}

  \begin{icmlauthorlist}
    \icmlauthor{Elizabeth L. Baker}{equal,dtu}
    \icmlauthor{Alexander Denker\textsuperscript{$\dagger$}}{equal,ucl}
    \icmlauthor{Jes Frellsen}{dtu}
  \end{icmlauthorlist}

  \icmlaffiliation{dtu}{Department of Applied Mathematics and Computer Science, Technical University of Denmark, Denmark}
  \icmlaffiliation{ucl}{Helmholtz Imaging, Deutsches Elektronen-Synchrotron DESY, Germany}

  \icmlcorrespondingauthor{Elizabeth Baker}{eloba@dtu.dk}

  \icmlkeywords{ infinite-dimensional diffusion models, score-based diffusion models, inverse problems, stochastic differential equations, conditioning, Doob's h-transform}

  \vskip 0.3in
]



\printAffiliationsAndNotice{%
  \icmlEqualContribution
  \textsuperscript{$\dagger$}Work done while at University College London.
}

\begin{abstract} 
Score-based diffusion models have recently been extended to infinite-dimensional function spaces, with uses such as inverse problems arising from partial differential equations. In the Bayesian formulation of inverse problems, the aim is to sample from a posterior distribution over functions obtained by conditioning a prior on noisy observations. While diffusion models provide expressive priors in function space, the theory of conditioning them to sample from the posterior remains open. We address this, assuming that either the prior lies in the Cameron-Martin space, or is absolutely continuous with respect to a Gaussian measure. We prove that the models can be conditioned using an infinite-dimensional extension of Doob's $h$-transform, and that the conditional score decomposes into an unconditional score and a guidance term. As the guidance term is intractable, we propose a simulation-free score matching objective (called \textit{Supervised Guidance Training}) enabling efficient and stable posterior sampling. We illustrate the theory with numerical examples on Bayesian inverse problems in function spaces.  In summary, our work offers the first function-space method for fine-tuning trained diffusion models to accurately sample from a posterior.
\end{abstract}


\section{Introduction}
Inverse problems, which aim to infer unknown parameters from observations, are ubiquitous in science and engineering. These problems are typically ill-posed, meaning that the solution may be unstable in the presence of noise, and multiple parameters may map to the same observed state. 
The Bayesian framework offers a principled solution by targeting a posterior distribution conditioned on data rather than a single point estimate \cite{stuart2010inverse}. 
Frequently, these problems originate from partial differential equations (PDEs), where the unknown parameters exist in function spaces. This means that the target probability measures are inherently infinite-dimensional. 

While score-based diffusion models (SDMs) have recently been extended to infinite dimensions \citep{franzese2023continuous,pidstrigach2023infinite,lim2025score,franzese2025generative}, conditioning such models to sample from posterior distributions remains an open problem. In finite-dimensional settings, conditioning is often achieved via Doob’s $h$-transform \citep{rogers2000diffusions,zhao2025conditional}, where the conditional score decomposes into an unconditional score and a guidance term \cite{dhariwal2021diffusion,chung2022diffusion}. Various methods have been proposed to approximate this typically intractable guidance term, ranging from heuristic approximations \cite{chung2022diffusion,yu2023freedom,finzi2023user} to learning the term directly \cite{denker2024deft,zhang2023adding}.
Some of these methods have been extended to infinite-dimensional SDMs.  \citet{baldassari2023conditional} propose a framework to train a conditional infinite-dimensional SDM to directly estimate the posterior distribution. 
FunDPS \cite{yao2025guided} extends DPS \cite{chung2022diffusion} to infinite dimensions in the case where the data is contained in the Cameron-Martin space of the Wiener process.

Lifting these results into infinite dimensions is not straightforward. 
For example, Lebesgue densities do not exist, so score functions must be designed without referring to densities.
Furthermore, the definition of conditional scores relies on transition operators of the forward and reverse stochastic differential equations (SDEs) being differentiable; this does not always hold in infinite dimensions.
We must also contend with issues arising from the Feldman-Hájek theorem: shifting Gaussian measures by an element not in their Cameron-Martin space leads to mutually singular measures. This means we cannot rely on the existence of densities between the data and the measure induced by the Wiener process of the SDE.

At the same time, if we ignore the infinite-dimensional setting by discretising the data immediately, it is unclear how good the approximation is and whether the solutions will behave as the resolution of the discretisation is increased. 
Prior work has shown applying finite-dimensional architectures to higher resolutions often degrades in performance, see the discussion in Appendix~\ref{app:finite_vs_infinite}.
By treating the theory in infinite dimensions and only discretising in the last step, we can better understand the errors induced by discretisation. Moreover, we unlock alternative and more flexible options in terms of discretisation, training and loss functions. This results in a more general resolution-invariant framework for conditioning infinite-dimensional SDMs.

\paragraph{Contributions}
In this work, we develop a principled framework for conditioning diffusion models in function space using an infinite-dimensional version of Doob's $h$-transform. 
We prove the existence of the conditional diffusion process under two general settings: (1) the prior distribution lies in the Cameron-Martin space of the Wiener process used in the diffusion process, or (2) the prior is absolutely continuous with respect to a Gaussian measure. 
Further, we prove a decomposition of the conditional score term into the unconditional score term and a guidance term. 

We introduce \emph{Supervised Guidance Training}, an algorithm for learning the typically intractable guidance term. This approach is based on a denoising score matching loss and thus avoids the computational overhead of simulation-based training regimes.
Further, we explore connections between our framework, stochastic optimal control, and Tweedie-based approximations, where we  prove the approximation in FunDPS \cite{yao2025guided} with less restrictive assumptions.
Finally, we validate our method on Bayesian inverse problems, demonstrating that explicitly learning the infinite-dimensional guidance term yields superior accuracy compared to heuristic baselines.

\section{Background and Setup}

\subsection{Bayesian Inverse Problems}\label{sec:bip}
A major motivation for our work is Bayesian inverse problems in function spaces \citep{stuart2010inverse,Dashti2017}.
Consider a measurable forward operator $G: \mathcal{H} \to \mathcal{Y}$ for two Hilbert spaces $\mathcal{H}$ and $\mathcal{Y}$ and assume that we have access to observations of the form $y = G(f) + \eta$, where $f$ is the parameter to be identified and $\eta \in \mathcal{Y}$ denotes observation noise. 
We place a prior measure $\pi$ on $f \in \mathcal{H}$ and model the observation noise with a measure $\pi_0$ on $\mathcal{Y}$ such that $f$ and $\eta$ are independent. The conditional random variable $y|f$ then has a measure $\pi_f$, which is given as $\pi_0$ translated by $G(f)$. 
We assume $\pi_f \ll \pi_0$ for $f$ $\pi$-almost surely and for some potential $\Phi: \mathcal{H} \times \mathcal{Y} \to \rbb$, write the likelihood as 
\begin{align}
    \label{eq:likelihood}
    \frac{\drm \pi^f}{\drm \pi_0}(y) =\exp(-\Phi(f, y)).    
\end{align}
By the infinite-dimensional Bayes theorem (Theorem 14 in \citet{Dashti2017}), the posterior satisfies $\pi^y \ll \pi$ for almost every $y$ and for $\xi = \ebb_{f\sim \pi}[\exp(-\Phi(f, y))]$
\begin{align}\label{eq:posterior distribution}
\frac{\drm \pi^y}{\drm \pi}(f) = \frac{1}{\xi}\exp(-\Phi(f, y)).    
\end{align}
This framework supports Bayesian inference for PDE-based inverse problems in their natural function-space setting.

In many applications, the noise distribution can be selected or calibrated from prior experimental data, while the prior must be modelled to encode structural information about~$f$. 
In this work, we construct this prior using an infinite-dimensional diffusion model, enabling a data-driven representation of the underlying function space. 

\subsection{Conditioning in Finite Dimensions}
\label{sec:conditioning_finite_dim}
We start by conditioning SDMs in the finite-dimensional setting, with state space $\R^d$. 
Let $x_t$ be the forward process, satisfying an SDE with drift $\mathbf{f}(t, x)$ and diffusion $g(t)$, such that $x_0 \sim \pi_{\text{data}}$. 
Denote by $p_t(x)$ the marginal density of $x_t$. 
Letting $\tilde{t}\coloneqq T-t$ and $\mathbf{w}_t$ be a Wiener process, the time-reversal of the forward SDE satisfies
\begin{align}\label{eq:finite_time_rev}
    \mathrm{d}z_t = 
    \left[ \mathbf{f}(\tilde{t}, z_t)\!+
    \!g^2(\tilde{t}) 
    \nabla_z \log p_{\tilde{t}}(z_t) 
    \right] 
    \mathrm{d}t + g(\tilde{t}) \mathrm{d}\mathbf{w}_t,
\end{align}
with $z_0 \sim \text{Law}(x_T)\approx \mathcal{N}(0, C_{\pi_{\text{data}}})$, see \citet{song2020score}. 
To solve an inverse problem given observations $y$, we seek to sample from the posterior distribution $p_0(x_0|y)$. 
This requires conditioning the reverse process $z_t$ on $y$, which can be achieved using Doob's $h$-transform \citep{rogers2000diffusions}. In particular, conditioning $z_t$ is equivalent to adding a \textit{guidance term} $\nabla_z \log h(\tilde{t}, z_t)$, scaled by $g(\tilde{t})^2$, to the drift of $z_t$, which forces the trajectory towards the observations $y$. 
In finite dimensions, the optimal guidance is defined with 
\begin{align}\label{eq:finite h transform}
    h(t, x_t) \propto \mathbb{E}[p(y\mid x_0) \mid x_t] = p(y\mid x_t).
\end{align}
Using Bayes' rule, the guidance decomposes into two terms,
\begin{align}
    \label{eq:score_decomp_finite}
    \nabla_x \log p(y\mid x_t)=
    \nabla_x \log p(x_t \mid y) - \nabla_x \log p_t(x_t).
\end{align}
The first term corresponds to the conditional score, whereas 
the second term is learned by the unconditional diffusion model. 
Many approaches aim to approximate the expectation in \eqref{eq:finite h transform} and thus the guidance term \cite{chung2022diffusion,rout2024beyond}. 
As an alternative, some approaches learn the guidance term 
\cite{denker2024deft} or the conditional score
directly \cite{batzolis2021conditional}.
Extending these concepts to function spaces requires careful treatment of the underlying measures, as Lebesgue densities do not exist and the score decomposition in \eqref{eq:score_decomp_finite} must be justified via the $h$-transform in Hilbert spaces. 

\subsection{Infinite-dimensional Score-Based Diffusion Models}
Various formulations for infinite-dimensional diffusion models have been explored, see \eg  \citet{franzese2023continuous,lim2025score,hagemann2025multilevel,pieper2026infinite} or \citet{pidstrigach2023infinite}. 
We follow \citet{pidstrigach2023infinite}, which allows for a general setup, including the case where the data distribution is not supported on the Cameron-Martin space of the Wiener process. 
Suppose we have a data distribution $\pi$ that is supported on a Hilbert space $\mathcal{H}$.
Let $C$ be a trace-class covariance operator and $W^\mathcal{H}$ the cylindrical Wiener process associated to $\mathcal{H}$. Then we define an infinite-dimensional forward SDE starting from the prior data distribution $\pi$
\begin{align}
    \label{eq:inf_dim_forward}
    d X_t = - \frac{1}{2} X_t  dt + \sqrt{C}dW^\mathcal{H}_t, \quad X_0 \sim \pi,
\end{align}
 for which the marginal distribution of $X_t$ will converge to the stationary distribution $\mathcal{N}(0,C)$ as $t \to \infty$.

\begin{remark}
    For notational ease, we will consider SDEs of the form \eqref{eq:inf_dim_forward}. However, the arguments in this paper and by \citet{pidstrigach2023infinite} also apply to the commonly used variance-preserving SDE formulation.
\end{remark}

In this setting, the time-reversal $Z_t \coloneqq X_{T-t}$ satisfies
\begin{align}\label{eq:backwardSDE}
    \drm Z_t = \left[\frac{1}{2}Z_t + s(T-t, Z_t)\right] \drm t +\sqrt{C}\drm W^\mathcal{H}_t,
\end{align}
where $Z_0$ starts from noise, $Z_0 \sim \text{Law}(X_T)\approx \mathcal{N}(0,C)$, and the score $s(t, x)$ is defined as
\begin{align}\label{eq:inf_dim_score}
    s(t, x) &\coloneqq -\alpha_t\left(x - e^{-\frac{t}{2}}\mathbb{E}[X_0 | X_t = x]\right),
\end{align}
with $\alpha_t \coloneqq {(1-e^{-t})}^{-1}$.
In finite dimensions, this is equivalent to the standard score-based diffusion models, 
with  $s(t, x) = \nabla_x \log p_t(x)$ (see \eqref{eq:finite_time_rev}).

We consider the the infinite-dimensional process in \eqref{eq:backwardSDE} as the unconditional SDE, which samples from a prior $\pi$. The goal of this paper is to show that, under suitable assumptions, \eqref{eq:backwardSDE} can be conditioned to sample from the posterior distribution~$\pi^y$. 

Recently, \citet{baldassari2023conditional} studied the conditional sampling problem but with a different approach. Rather than assuming access to the unconditional score $s(t,x)$, they directly define a conditional score 
\begin{align}\label{eq:conditional_score}
    s^y(t, x) \coloneqq -\alpha_t\left(x - e^{-\frac{t}{2}}\mathbb{E}[X_0 | X_t = x, Y=y]\right),
\end{align}
where as before, we use $\alpha_t \coloneqq {(1-e^{-t})}^{-1}$. They show that replacing $s(t, x)$ by $s^y(t, x)$ in \eqref{eq:backwardSDE} leads to a conditional reverse SDE that samples from $\pi^y$ and establish conditions under which this SDE is well-posed.

\section{Conditioning via Doob's $h$-Transform}

Instead of learning the conditional score directly, we instead aim to condition the reverse SDE \eqref{eq:backwardSDE}. Similar to the finite-dimensional framework, we prove that we can do this by adding an additional term, given by Doob's $h$-transform, to the drift of the reverse SDE. 
We prove this under two commonly used settings:
\begin{setting}\label{setting:cameron-martin}
    The support of the prior $\pi$ is contained in a ball of radius $R$ in the Cameron-Martin space of $C$, with $C$ the covariance of the diffusion model SDE in \eqref{eq:inf_dim_forward};
\end{setting}
\begin{setting}\label{setting:gaussian}
    The prior is absolutely continuous with respect to a Gaussian distribution, 
    $\frac{\drm\pi}{\drm\mathcal{N}(0, C_{\pi})}(u)= \frac{1}{\chi}\exp(-\Psi(u))$, for some covariance operator $C_{\pi}$. Moreover, we ask that $E_0 \leq \Psi(u) \leq E_1 + E_2\|u\|^2$ and that $\nabla \Psi$ is Lipschitz continuous. We also make the assumption that $\Phi$ is also differentiable with $\nabla \Phi$ continuous.
\end{setting}

The first setting is studied by \citet{yao2025guided}, whereas the second is the main setting of \citet{baldassari2023conditional}. For both settings, the time-reversed SDE has a strong solution, by Theorem 12 and Theorem 13 by \citet{pidstrigach2023infinite}.

To illuminate these two settings, suppose the prior is Gaussian, $\pi = \mathcal{N}(0, C_\pi)$. 
Then the support of $\mathcal{N}(0, C_\pi)$ is not contained in the Cameron-Martin space of $C_\pi$. For \cref{setting:cameron-martin} we would thus need to choose $C$ such that its Cameron-Martin space is large enough to contain the support of $\mathcal{N}(0, C_\pi)$. Alternatively, we can work in \cref{setting:gaussian}, and set $C=C_\pi$ in the diffusion model \eqref{eq:inf_dim_forward}. 
Since the distance to the target measure $\pi$ is given partly by the Wasserstein-2-distance of $\mathcal{N}(0,C)$ to $\pi$ \cite{pidstrigach2023infinite}, choosing $C=C_\pi$ will minimise the approximation error.
On the other hand, if we do not have much information about the form of $\pi$ such as whether it has density with respect to a Gaussian, then indeed it makes sense to use \cref{setting:cameron-martin}, and just ensure that the Cameron-Martin space of $C$ is big enough to contain the support of $\pi$. 
 
We will be using an infinite-dimensional Doob's $h$-transform \citep{baker2024conditioning,pieper2025class} to condition the time-reversed SDE in \eqref{eq:backwardSDE}.
For this, we identify an appropriate function $h$ and then show that this is the \textit{correct} transform, in that if we sample from the conditioned SDE we sample from $\pi^y$. 
Finally, we show that the resulting score decomposes into the (already) learned score and a guidance term. 
With this setup, we define the $h$-transform
\begin{empheq}[box=\fbox]{align}
\label{eq:h-transform}
h^y(t, x) \coloneqq \xi^{-1}\ebb_{\pbb}\!\left[\exp(-\Phi(X_0, y))\mid X_t = x\right],
\end{empheq}
for $\xi = \ebb_{u\sim \pbb}[\exp(-\Phi(u, y))]$ and $\Phi$ the potential from \eqref{eq:likelihood}.

\begin{theorem}\label{thm:inf_doobs}
Let $\pbb$ be the path measure of the unconditional time-reversal $Z_t$ in \eqref{eq:backwardSDE}, with $h^y$ as in \eqref{eq:h-transform}. 
We assume either \cref{setting:cameron-martin} or \cref{setting:gaussian}. 
Then the regular conditional probability $\pbb(\cdot \mid Y=y)$, denoted as $\mathbb{P}^y$, satisfies
    \begin{align}
        \drm \mathbb{P}^y = h^y(T, Z_T) \drm {\pbb}.
    \end{align}
Moreover, the conditional process of $Z_t$ given $Y=y$, which we denote by $Z_t^y$, satisfies 
\begin{align}
    &\drm Z^y_t =  b(t, Z^y_t)\drm t + \sqrt{C}\mathrm{d}W_t,~ \, Z^y_0 \sim \text{Law}(X_T),
    \\
    &b(t, z) \coloneqq \frac{1}{2}z + s(T-t, z) + C \nabla \log h^y(T-t, z),
\end{align}
where $\nabla \log h^y(t, z)$ is the Riesz representation of the Fréchet derivative of the logarithm of $h$ with respect to~$z$.
\end{theorem}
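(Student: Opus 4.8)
The plan is to establish the two claims in sequence: first identify the Radon--Nikodym derivative of the regular conditional probability $\pbb^y$ with respect to $\pbb$, and then apply an infinite-dimensional Girsanov theorem to read off the drift of the conditioned process $Z_t^y$. The organising principle is that $Y$ is conditionally independent of the whole trajectory given the terminal state $Z_T = X_0$, so conditioning on $Y=y$ acts as a pure terminal reweighting of the path measure, and the function $h^y$ of \eqref{eq:h-transform} is exactly the associated space--time harmonic martingale.

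For the first claim I would start from the joint law of $(Z_{[0,T]}, Y)$ built from $\pbb$ together with the likelihood kernel $\drm\pi^f/\drm\pi_0(y)=\exp(-\Phi(f,y))$ with $f=Z_T$. Since $Y$ depends on the path only through $Z_T$, the disintegration theorem gives a regular conditional probability with density
$$\frac{\drm\pbb^y}{\drm\pbb} = \frac{\exp(-\Phi(Z_T,y))}{\ebb_\pbb[\exp(-\Phi(Z_T,y))]} = \xi^{-1}\exp(-\Phi(Z_T,y)),$$
using $\ebb_\pbb[\exp(-\Phi(Z_T,y))]=\ebb_{u\sim\pi}[\exp(-\Phi(u,y))]=\xi$ because $Z_T=X_0\sim\pi$. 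This is precisely the $h$-transform of \eqref{eq:h-transform} evaluated at the terminal (data) endpoint. Setting $M_t\coloneqq h^y(T-t,Z_t)$, the Markov property of $Z_t$ and the tower rule give $M_t=\ebb_\pbb[\xi^{-1}\exp(-\Phi(Z_T,y))\mid\mathcal{F}_t]$, so $M_t$ is a nonnegative, mean-one $\pbb$-martingale closing at $M_T=\drm\pbb^y/\drm\pbb$; that it is a genuine (not merely local) martingale follows from $M_T\in L^1(\pbb)$, which holds since $\xi<\infty$.

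For the second claim I would apply Girsanov's theorem in the Hilbert-space setting (\eg Da Prato--Zabczyk). Computing the stochastic logarithm of $M_t$ by It\^o's formula, and using that the only martingale part of $Z$ enters through $\sqrt{C}\,\drm W_t$, yields $\drm M_t=\langle\sqrt{C}\,\nabla h^y(T-t,Z_t),\drm W_t\rangle$, so the integrand of $\drm\log M_t$ is $\phi_t=\sqrt{C}\,\nabla\log h^y(T-t,Z_t)$. Under $\drm\pbb^y=M_T\,\drm\pbb$ the process $\tilde W_t\coloneqq W_t-\int_0^t\phi_s\,\drm s$ is a cylindrical Wiener process, and substituting into \eqref{eq:backwardSDE} turns the drift into $\tfrac12 z+s(T-t,z)+\sqrt{C}\phi_t=\tfrac12 z+s(T-t,z)+C\,\nabla\log h^y(T-t,z)$, which is exactly $b$. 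Crucially the perturbation $C\nabla\log h^y=\sqrt{C}(\sqrt{C}\,\nabla\log h^y)$ lies in the range of $\sqrt{C}$, so it is an admissible Cameron--Martin shift and Girsanov applies; the Feldman--H\'ajek obstruction to shifting Gaussians outside their Cameron--Martin space is thereby avoided.

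The main obstacle is showing that $h^y(t,\cdot)$ is Fr\'echet differentiable, so that $\nabla\log h^y$ (its Riesz representative) exists; this is exactly where transition operators can fail to be differentiable in infinite dimensions, and it must be handled separately in each setting. Writing $h^y(t,x)$ as a ratio of integrals against the conditional law of $X_0$ given $X_t=x$, the $x$-dependence enters only through the Cameron--Martin density of the Gaussian transition $\mathcal{N}(e^{-t/2}u,(1-e^{-t})C)$ relative to $\mathcal{N}(0,(1-e^{-t})C)$. In \cref{setting:cameron-martin}, $u$ ranges over a bounded set in the Cameron--Martin space of $C$, so the shift $e^{-t/2}u$ is admissible and its density is smooth in $x$ with uniform bounds, permitting differentiation under the integral. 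In \cref{setting:gaussian}, with $C=C_\pi$, the assumed bounds on $\Psi$, Lipschitz continuity of $\nabla\Psi$, and differentiability of $\Phi$ with continuous $\nabla\Phi$ give the same conclusion, after which the martingale and Girsanov steps carry over verbatim.
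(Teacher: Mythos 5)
Your overall architecture matches the paper's: identify $h^y(T,Z_T)$ as the conditional density via Bayes/disintegration (the paper does this by an equivalent direct computation in \cref{prop:doobs_gives_conditional}), check that $M_t = h^y(T-t,Z_t)$ is a positive mean-one martingale, then change measure. The first half of your argument is correct. The genuine gaps are in the second half. First, your differentiability argument for \cref{setting:gaussian} does not work as sketched: you claim the $x$-dependence of $h^y$ enters only through the Cameron--Martin density of $\mathcal{N}(e^{-t/2}u,(1-e^{-t})C)$ with respect to $\mathcal{N}(0,(1-e^{-t})C)$, but in \cref{setting:gaussian} the support of $\pi$ is \emph{not} assumed to lie in the Cameron--Martin space of $C$, so that density need not exist for $\pi$-a.e.\ $u$ --- this is exactly the Feldman--H\'ajek obstruction the theorem must circumvent, and it is why the two settings are treated separately. (You also quietly impose $C=C_\pi$, which the setting does not require; the paper's proof explicitly notes that $C$ and $C_\pi$ may differ.) The paper's \cref{lemma:gaussian_frechet} instead changes the reference process: it rewrites $h^y$ as a ratio of expectations under the auxiliary Ornstein--Uhlenbeck process $V_t$ started from $\mathcal{N}(0,C_\pi)$, weighted by $\exp(-\Psi)$; since $V_0\mid V_t=x\sim\mathcal{N}(K_t x,P_t)$ exactly, a translation of variables moves all $x$-dependence into the arguments $\Psi(v+K_t x)$ and $\Phi(v+K_t x)$, so differentiability follows from that of $\Phi$ and $\Psi$ alone.

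Second, your Girsanov step computes $\drm M_t$ by It\^o's formula, which in this Hilbert-space setting requires $h^y$ to be twice Fr\'echet differentiable in $z$ and once in $t$ with continuous derivatives; neither your sketch nor the standing assumptions supply this --- in \cref{setting:gaussian} only continuity of $\nabla\Phi$ and Lipschitzness of $\nabla\Psi$ are assumed, so second derivatives are unavailable. The paper sidesteps this by invoking Theorem 5.1 of \citet{baker2024conditioning} and observing that its proof needs only first-order Fr\'echet differentiability in $z$ (time-differentiability coming from their Lemma 5.3); a self-contained version of your argument would have to replace the It\^o computation with a martingale-representation argument identifying the integrand as $\sqrt{C}\,\nabla\log h^y$. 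Relatedly, you never secure existence of a strong solution to the time-reversal \eqref{eq:backwardSDE}, which the paper imports from Theorems 12 and 13 of \citet{pidstrigach2023infinite} and which is a prerequisite for the change-of-measure step, not a consequence of it.
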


\begin{proof}
    The idea is to apply the results of the infinite-dimensional Doob's $h$-transform in \citet{baker2024conditioning}. 
    The main result to show is that $h^y$ is differentiable. 
    In \cref{setting:gaussian} this comes from differentiability of $\Phi$, whereas in \cref{setting:cameron-martin}, this comes from an application of the Cameron-Martin theorem. 
    See \cref{proof:inf_doobs} for details. 
\end{proof}

Theorem \ref{thm:inf_doobs} shows that $h^y$ in \eqref{eq:h-transform} is the correct transform for sampling from the posterior $\pi^y$. In the finite-dimensional framework, we can split the posterior score into two terms: one learned by the unconditional score model, and the second a guidance term, see \eqref{eq:score_decomp_finite}. In the next Theorem, we show that this decomposition also holds in infinite dimensions. In particular, the conditional score from \citet{baldassari2023conditional} can be written as $s^y(t,x) = s(t,x) + C \nabla \log h^y(t, x).$
\begin{theorem}
\label{thm:h_transform_decomp}
Assume either \cref{setting:cameron-martin} or \cref{setting:gaussian}.
Then, setting $\beta_t = \frac{e^{-\frac{t}{2}}}{1-e^{-t}}$, we may decompose $C\nabla \log h(t, x)$ as
\begin{align}
    &C \nabla \log h^y(t, x) = s^y(t, x) - s(t, x) 
    \\
    &= 
    \beta_t
    \left(\ebb[X_0 \mid X_t = x, Y=y] - \ebb[X_0 \mid X_t = x]\right).
\end{align} 
Therefore, letting $Z_0^y \sim \text{Law}(X_T)$, then $Z^y_t$ satisfies
\begin{align}
   &\mathrm{d}Z_t^y = \frac{1}{2} Z_t^y \mathrm{d}t + s^y(T-t, Z_t^y)\mathrm{d}t +\sqrt{C}\mathrm{d}W^{H}_t.
\end{align}
\end{theorem}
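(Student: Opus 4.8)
The plan is to establish the two displayed equalities in turn and then read off the SDE from \cref{thm:inf_doobs}. The second equality is purely algebraic: substituting the definitions \eqref{eq:inf_dim_score} of $s$ and \eqref{eq:conditional_score} of $s^y$, the linear terms $-\alpha_t x$ cancel and one is left with $s^y(t,x)-s(t,x)=\alpha_t e^{-t/2}\bigl(\ebb[X_0\mid X_t=x,Y=y]-\ebb[X_0\mid X_t=x]\bigr)$; since $\alpha_t e^{-t/2}=\frac{e^{-t/2}}{1-e^{-t}}=\beta_t$, this is exactly the claimed expression. No analysis is needed here, so the entire content lies in the first equality, $C\nabla\log h^y(t,x)=\beta_t\bigl(\ebb[X_0\mid X_t=x,Y=y]-\ebb[X_0\mid X_t=x]\bigr)$.

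For this I would avoid per-sample transition densities, which need not exist under \cref{setting:gaussian} by Feldman--H\'ajek, and instead recognise $h^y$ as a ratio of two marginal laws of $X_t$. Writing $\pi^y$ for the posterior in \eqref{eq:posterior distribution} and using the conditional independence of $X_t$ and $Y$ given $X_0$ (which holds because $X_t$ is driven by the diffusion noise while $Y=G(X_0)+\eta$ is driven by the independent observation noise), a disintegration of the joint law of $(X_0,X_t)$ gives $\ebb_{\pbb}[\exp(-\Phi(X_0,y))\mid X_t=x]=\xi\,\frac{\drm\,\mathrm{Law}_{\pi^y}(X_t)}{\drm\,\mathrm{Law}_{\pi}(X_t)}(x)$, so that $h^y(t,x)=\frac{\drm\,\mathrm{Law}_{\pi^y}(X_t)}{\drm\,\mathrm{Law}_{\pi}(X_t)}(x)$. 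Fixing the reference Gaussian $\nu_t:=\mathcal{N}(0,(1-e^{-t})C)$ and setting $q_t:=\frac{\drm\,\mathrm{Law}_{\pi}(X_t)}{\drm\nu_t}$ and $q_t^y:=\frac{\drm\,\mathrm{Law}_{\pi^y}(X_t)}{\drm\nu_t}$, the reference cancels in the ratio and $\log h^y=\log q_t^y-\log q_t$.

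I would then invoke the Tweedie-type identity underlying \eqref{eq:inf_dim_score}: for the diffusion \eqref{eq:inf_dim_forward} started from any prior $\rho$, the marginal log-density satisfies $C\nabla\log\frac{\drm\,\mathrm{Law}_{\rho}(X_t)}{\drm\nu_t}(x)=\beta_t\,\ebb_{\rho}[X_0\mid X_t=x]$, the leading factor $C$ cancelling the $C^{-1}$ produced by differentiating the Gaussian kernel so that the right-hand side is a genuine $\mathcal{H}$-valued object. Applying this with $\rho=\pi$ and with $\rho=\pi^y$, and noting $\ebb_{\pi^y}[X_0\mid X_t=x]=\ebb[X_0\mid X_t=x,Y=y]$ (again by $X_t\perp Y\mid X_0$), subtraction gives the first equality. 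The SDE for $Z^y_t$ is then immediate: by \cref{thm:inf_doobs} its drift is $\tfrac12 z+s(T-t,z)+C\nabla\log h^y(T-t,z)$, and substituting $C\nabla\log h^y=s^y-s$ collapses this to $\tfrac12 z+s^y(T-t,z)$.

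I expect the main obstacle to be justifying the Tweedie identity in infinite dimensions and, crucially, checking that it applies to the tilted prior $\rho=\pi^y$. Differentiability of $h^y$ is already guaranteed by \cref{thm:inf_doobs}, so the derivative exists; what remains is to identify it by differentiating under the mixture integral and controlling the $C$-rescaled difference quotients. This is cleanest when computed through directional derivatives along the regularising directions $Ck$ for $k\in\mathcal{H}$, which lie in the Cameron--Martin space of $\nu_t$ so that $C^{-1}$ never appears explicitly, with domination supplied by the settings: the uniform bound $\|X_0\|\le R$ on the support in \cref{setting:cameron-martin}, and the growth bound $\Psi(u)\le E_1+E_2\|u\|^2$ together with Lipschitz $\nabla\Psi$ and continuous $\nabla\Phi$ in \cref{setting:gaussian}. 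For the latter I would also verify that $\pi^y$ inherits \cref{setting:gaussian}, i.e.\ that $\Psi+\Phi(\cdot,y)$ satisfies the same growth and regularity bounds, so that the identity is legitimately available for $q_t^y$ as well as for $q_t$.
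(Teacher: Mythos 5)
Your argument goes through under \cref{setting:cameron-martin}, and there it is essentially the paper's own proof (\cref{proof:h_decomp_setting1}): the paper likewise shows $h^y(t,\cdot)\propto \drm\pi^y_t/\drm\pi_t$, factors this through the reference $\nu_t=\mathcal{N}(0,(1-e^{-t})C)$, and obtains the Tweedie-type identity $C\nabla\log(\drm\pi_t/\drm\nu_t)(x)=\beta_t\,\ebb[X_0\mid X_t=x]$ by differentiating the Cameron--Martin density $n_t(x_0,\cdot)$. Your algebraic reduction of the second equality and the final substitution into \cref{thm:inf_doobs} are also fine. The genuine gap is under \cref{setting:gaussian}: the densities $q_t=\drm\,\mathrm{Law}_\pi(X_t)/\drm\nu_t$ and $q_t^y$ on which your whole argument rests need not exist. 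You correctly avoided per-sample transition densities, but the same Feldman--H\'ajek obstruction resurfaces at the marginal level. Take the paper's own recommended choice $C=C_\pi$: then $\mathrm{Law}_\pi(X_t)\ll\mathcal{N}(0,C_t)$ with $C_t=e^{-t}C_\pi+(1-e^{-t})C=C$, while $\mathcal{N}(0,C)$ and $\nu_t=\mathcal{N}(0,(1-e^{-t})C)$ are mutually singular, since rescaling a covariance by a constant $\lambda\neq 1$ makes $(1/\lambda-1)I$ non--Hilbert--Schmidt. Mutual singularity of the dominating Gaussian with $\nu_t$ forces $\mathrm{Law}_\pi(X_t)\not\ll\nu_t$, so $q_t$ is undefined and no choice of differentiation directions or domination can repair a Radon--Nikodym derivative that does not exist. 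Your closing worry (whether $\pi^y$ inherits \cref{setting:gaussian}) is not the real issue; the reference measure is.

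The fix is to change the reference to $\mathcal{N}(0,C_t)$, but then differentiating the translated Gaussian no longer produces the clean factor $\beta_t$; instead one gets operators built from $K_t=e^{-t/2}C_\pi C_t^{-1}$ and $P_t=C_\pi-e^{-t}C_\pi C_t^{-1}C_\pi$, and the identification with $s^y-s$ requires a nontrivial operator identity. This is exactly the paper's route in \cref{proof:h_decomp_setting2}: it writes $h^y$ as a ratio of integrals against the Gaussian conditional $V_0\mid V_t\sim\mathcal{N}(K_tx,P_t)$ (reweighting the Gaussian-start process by $\exp(-\Psi)$), computes $C\nabla\log h^y(t,x)=CK_t^*\,\ebb[\nabla\Phi(X_0)\mid X_t=x]$ by differentiating under the integral, uses Stein's lemma to show $s^y(t,x)-s(t,x)=\beta_t P_t\,\ebb[\nabla\Phi(X_0)\mid X_t=x]$, and closes by verifying $\beta_t P_t=CK_t^*$ directly. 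Without this (or an equivalent) second computation, your proof covers only \cref{setting:cameron-martin}.
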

For a proof see \cref{proof:h_transform_decomp}. These two Theorems show that in order to condition the reverse SDE \eqref{eq:backwardSDE}, we require access to the \textit{guidance term} $C \nabla \log h^y(t,x)$. However, as the $h$-transform \eqref{eq:h-transform} is  intractable, due to its dependence on the full transition law of the infinite-dimensional process, it is not readily applicable and we require approximations. 

\section{Approximating the Guidance Term}
We will study different ways of estimating and approximating the guidance term $C \nabla \log h^y$. First, we discuss the known connection to stochastic optimal control (SOC), which leads to a simulation-based loss function. Then, we derive a training-free approximation based on an infinite-dimensional Tweedie approximation. Finally, we will discuss our simulation-free training framework in \cref{sec:SGT}. 

\subsection{Stochastic Optimal Control}\label{sec:soc} 
One can make use of SOC \cite{fabbri2017stochastic} to describe the guidance term, see \eg \citet{park2024stochastic} for diffusion bridges in function spaces. For this, let $\mathbb{Q}^u$ be the path measure induced by 
\begin{align}
    dZ_t^u = (b(t, Z_t^u) + C u(t, Z_t^u)) dt + \sqrt{C} d W_t^{\mathbb{Q}^u},
\end{align}
where $b(t,z)$ is the drift of the reverse SDE~\eqref{eq:backwardSDE}. 
We have that $\mathbb{Q}^{u^*}$ is equal to the conditional path measure $\mathbb{P}^y$ if $u^*$ minimises the following SOC loss 
\begin{align}
    \mathcal{J}(u) = \mathbb{E}_{\mathbb{Q}^u}\left[ \Phi(Z_T^u, y) + \frac{1}{2} \int_0^T \| \sqrt{C}u_t \|_{\mathcal{H}}^2 dt \right],
\end{align}
see Appendix \ref{app:soc} for details. We can actually use $\mathcal{J}(u)$ as an objective to learn the guidance term. Here, the main advantage is that $\mathcal{J}(u)$ does not rely on ground truth data and only requires access to the terminal cost $\Phi$. However, the objective is inherently simulation-based. 
Evaluating the loss requires sampling trajectories from the path measure $\mathbb{Q}^u$ induced by the current control. Consequently, the gradient with respect to the control must be estimated through the sampling process, typically via path-wise estimators such as REINFORCE \cite{williams1992simple}. These estimators can have a high variance and are particularly sensitive to the terminal cost $\Phi$. In early stages of training, when the control is poorly initialised, sampled trajectories may fail to reach informative regions of $\Phi$, which can result in flat or noisy gradient signals. For finite-dimensional diffusion models alternatives objectives have been proposed, see, e.g. \cite{domingo2024adjoint,denker2024deft,han2025stochastic,uehara2024fine,pidstrigach2025conditioning}, but we are not aware of any extensions to infinite dimensions.

\subsection{Infinite-Dimensional Tweedie Approximation}
\label{sec:recon_guidance}
Many finite-dimensional posterior sampling methods rely on heuristic approximation to the guidance term, see \eg DPS \cite{chung2022diffusion}. Recently, FunDPS \cite{yao2025guided} extended DPS to infinite-dimensional diffusion models. However, they only work in the more restrictive \cref{setting:cameron-martin}. Here, we show that it also holds in \cref{setting:gaussian}, where we only require absolute continuity with respect to a Gaussian measure. This generalises the applicability of FunDPS to a broader class of infinite-dimensional diffusion models. 

Central to the approximation is the ability to estimate clean data $X_0$ from the noisy state $X_t$. 
In finite dimensions, Tweedie's formula provides a link between the score function and the posterior mean \cite{vincent2011connection,efron2011tweedie}. Based on the definition of the infinite-dimensional score in \eqref{eq:inf_dim_score}, an analogue holds in our Hilbert space setting.

\begin{proposition}[Infinite-dimensional Tweedie Estimate] \label{prop:tweedie} Let $s(t,x)$ be the score function defined in \eqref{eq:inf_dim_score}. The conditional expectation of the initial state given the noisy state at time t,
is given exactly by 
\begin{align} 
\label{eq:inf_dim_tweedie}
\ebb[X_0 \mid X_t=x] = e^{\frac{t}{2}} \left( x + (1\!-\!e^{-t}) s(t, x) \right). \end{align} 
We denote this via $\hat{x}_{t}(x)\coloneqq \ebb[X_0\mid X_t=x]$.
\end{proposition}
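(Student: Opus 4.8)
The plan is to recognise that the claimed identity is simply the algebraic inverse of the definition of the score in \eqref{eq:inf_dim_score}, and to verify separately that the conditional expectation appearing there is a genuine element of $\mathcal{H}$. Writing $\hat{x}_t(x) = \ebb[X_0 \mid X_t = x]$, I would start from the definition
\begin{align}
s(t,x) = -\alpha_t\left(x - e^{-\frac{t}{2}}\hat{x}_t(x)\right), \qquad \alpha_t = (1-e^{-t})^{-1},
\end{align}
multiply through by $-\alpha_t^{-1} = -(1-e^{-t})$ to isolate the shift term, obtaining $e^{-t/2}\hat{x}_t(x) = x + (1-e^{-t})s(t,x)$, and finally multiply by $e^{t/2}$ to reach exactly \eqref{eq:inf_dim_tweedie}. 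No Lebesgue densities, Stein identities, or integration by parts enter, precisely because the score was \emph{defined} through the posterior mean rather than through $\nabla \log p_t$; the formula is therefore its own rearrangement.

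The only point that genuinely requires justification is that $\hat{x}_t(x)$ exists as a Bochner conditional expectation with a measurable, $\mathcal{H}$-valued pointwise version $x \mapsto \ebb[X_0\mid X_t=x]$. For this I would invoke the mild solution of the forward Ornstein--Uhlenbeck SDE \eqref{eq:inf_dim_forward}, namely $X_t = e^{-t/2}X_0 + \int_0^t e^{-(t-s)/2}\sqrt{C}\,\drm W_s^{\mathcal{H}}$, so that $X_t \mid X_0$ is Gaussian with mean $e^{-t/2}X_0$ and covariance $(1-e^{-t})C$. Since $X_0 \sim \pi$ is integrable in both regimes — bounded support under \cref{setting:cameron-martin}, and finite second moment relative to the Gaussian reference under \cref{setting:gaussian} — the regular conditional expectation is well-defined and $\mathcal{H}$-valued, and the well-posedness already supplied by Theorems 12 and 13 of \citet{pidstrigach2023infinite} guarantees that $s(t,x)$ is the object entering the reverse SDE. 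The rearrangement above then holds pointwise in $x$.

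To make the name ``Tweedie'' transparent rather than merely formal, I would append a short remark linking \eqref{eq:inf_dim_score} to the classical finite-dimensional identity: applying Stein's identity to the Gaussian kernel $X_t \mid X_0 \sim \mathcal{N}(e^{-t/2}X_0, (1-e^{-t})C)$ gives $-\alpha_t(x - e^{-t/2}\hat{x}_t(x)) = C\nabla \log p_t(x)$, which is precisely the reverse-SDE drift correction, so inverting it reproduces the same expression. In infinite dimensions $p_t$ admits no density, so \eqref{eq:inf_dim_score} is taken as the definition and the proposition is its inversion. I do not expect any substantive obstacle here: the content is essentially definitional, and the sole care needed is the integrability and measurability ensuring $\hat{x}_t(x)$ is well-defined, which is already furnished by the assumptions of \cref{setting:cameron-martin} and \cref{setting:gaussian}. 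If there is a subtlety, it is purely expository — emphasising that the ``formula'' is the rearrangement of a definition, since the density-based derivation available in finite dimensions does not transfer to the Hilbert space setting.
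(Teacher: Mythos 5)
Your proof is correct and coincides with the paper's own argument: the paper likewise states that the identity follows immediately from the definition of the score in \eqref{eq:inf_dim_score}, i.e.\ it is a pure algebraic rearrangement. Your added remarks on the well-definedness of the $\mathcal{H}$-valued conditional expectation and the Stein-identity interpretation of the name ``Tweedie'' are sound expository elaborations of the same essentially definitional argument, not a different route.
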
 

The result follows immediately from the definition of the score term $s(t,x)$ in \eqref{eq:inf_dim_score}. Note, using the time-reversal, we can equivalently write $\hat{x}_t(x) = \ebb[Z_T \mid Z_{T-t}=x]$.

We can derive an approximation as follows
\begin{align*}
    C \nabla \log h^y(t, z) &= C \nabla_z \log \ebb_{\pbb}[\exp(-\Phi(X_0, y))\mid X_t = z] \\
    &\approx- C \nabla_z \Phi(\hat{x}_{t}( z), y),
\end{align*}
where we replace the conditional expectation with the Tweedie estimate $\hat{x}_t(x)$. Taking the Fréchet derivative with respect to $z$, and applying the chain rule, we obtain the approximate guidance term for the score $u^y(t,z) =C\nabla_z \log h^y(t, z)$ with the additional scaling $\gamma >0$,
\begin{align} \label{eq:approx_guidance} 
u^y(t,z)\approx-\gamma C 
\left( \nabla_z \hat{x}_{t}( z) \right)^* \nabla_f \Phi(f, y)\big|_{{f=\hat{x}_t( z)}}.
\end{align} 
While \eqref{eq:approx_guidance} enables approximate conditional sampling, it introduces significant computational overhead. A key drawback of the DPS-style approximation is that evaluating the Tweedie estimate in \eqref{eq:inf_dim_tweedie} requires access to the score function. Consequently, as the guidance term in \eqref{eq:approx_guidance} involves the Fréchet derivative of the Tweedie map, it necessitates backpropagation through the score model at every discretisation step of the SDE.

\begin{algorithm}[t]
\caption{Supervised Guidance Training (SGT)}
\label{alg:guidance_training}
\begin{algorithmic}[1]
\REQUIRE Covariance operator $C$, pre-trained score model $s_\theta(t,z)$, loss weighting, training data $\{ X^{(i)}, Y^{(i)}\}_{i=1}^N$, batch size $B$, guidance function $u_\phi$
\WHILE{Metrics not good enough}
\STATE Subsample $\{ x^{(i)}_0, y^{(i)} \}_{i=1}^B$ from $\{ X^{(i)}, Y^{(i)}\}_{i=1}^N$
\STATE $\epsilon^{(i)} \sim \mathcal{N}(0,I)$ 
\STATE $t_i \sim \mathcal{U}([0,T])$
\STATE $\sigma_{t_i} = \sqrt{1 - e^{-t_i}}$
\STATE $x_t^{(i)} = e^{-\nicefrac{1}{2} t_i} x_0^{(i)} + \sigma_{t_i} C^{\nicefrac{1}{2}} \epsilon^{(i)}$ 
\STATE $\hat{s}_\theta^{(i)} = \colorbox{blue!10}{$\texttt{stopgrad}(s_\theta(x_t^{(i)}, t_i))$}$ \label{alg:line_stopgrad}
\STATE $\hat{u}_\phi^{(i)} = u_\phi(x_t^{(i)}, y^{(i)},t_i)$
\STATE $r^{(i)} = (\hat{s}_\theta^{(i)} + \hat{u}_\phi^{(i)}) + \sigma_{t_i}^{-1}\, C^{\nicefrac{1}{2}}\epsilon^{(i)}$ \label{alg:residual}
\STATE $L(\phi)\!= \sum_{i=1}^B \| C^{-\nicefrac{1}{2}} r^{(i)} \|_2^2$ \COMMENT{evaluate loss \eqref{eq:deft_loss}}
\STATE Gradient step with $\nabla_\phi L(\phi)$
\ENDWHILE
\end{algorithmic}
\end{algorithm}

\subsection{Learning the Guidance Term}
\label{sec:SGT}
In this section, we show how the guidance term can be learned using a simulation-free training framework. 
In particular, we start by defining a score matching loss, minimised over functions $u$, where the expectation is taken over the uniform distribution $t\sim \mathcal{U}[0, T],$ and the joint law $X_t, Y \sim \text{Law}(X_t, Y)$ 
\begin{align}\label{eq:score_matching_loss}
    \mathbb{E}
   [ \|u(t,X_t,Y) - C\nabla\log h^Y(t, X_t)]\|_K^2 ],
\end{align}
where $K$ is a Hilbert space, the choice of which we discuss in \cref{sec:differences}. Even though we have the representation $C\nabla\log h^y(t, x) = s^y(t, x) - s(t, x)$ and assume full knowledge of $s$, \eqref{eq:score_matching_loss} is intractable, as we do not know $s^y(t, x)$. 

We note we could learn $C\nabla \log h^y(t, X_t)$ directly, without relying on knowledge of the unconditional score $s(t, X_t)$ via a simulation based loss. This is done for non-linear stochastic differential equations in finite dimensions by \citet{heng2025simulating,baker2025score,pidstrigach2025conditioning,yangneural}, and also considered in infinite dimensions in \citet{baker2024conditioning,yang2024infinite}. 
Instead, we use the knowledge of $s$ (which we have already learned), as explored in finite dimensions by \citet{denker2024deft}. 

We can define a denoising score matching loss similar to the finite-dimensional framework. However, we have to be careful that the expressions remain bounded and therefore require an extra assumption.

\begin{proposition}{(Supervised Guidance Training)}
\label{prop:fine-tuning-loss}
Define
\begin{align}
    B \coloneqq -\frac{e^{-t}}{(1-e^{-t})^2}\ebb\left[\left\|\ebb[X_0\mid X_t, Y] - X_0\right\|_K^2\right].
\end{align}
Then if $B$ is bounded,
the minimiser of the score matching objective in \eqref{eq:score_matching_loss}
is the same as the minimiser of the denoising score matching objective minimising over functions $u$, with expectation taken over $t\sim \mathcal{U}[0, T],$ $X_0, Y \sim \text{Law}(X_0, Y),$ $X_t \sim \text{Law}(X_t\mid X_0)$
\begin{align}
    \label{eq:deft_loss}
    \begin{split}
       \mathbb{E}\left[\left\| \frac{X_t - e^{-\frac{t}{2}}X_0}{1-e^{-t}} +s(t,X_t) +u(t,X_t,Y) \right\|_K^2\right].
\end{split}
\end{align}
Furthermore, the minimiser $u^*$ is almost surely unique and is given by
\begin{align}
    u^*(t, x, y) = C \nabla \log h^y(t,x)
\end{align}

\end{proposition}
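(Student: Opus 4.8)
The plan is to show that the denoising objective \eqref{eq:deft_loss} and the score-matching objective \eqref{eq:score_matching_loss} differ only by an additive constant that is independent of $u$; once this is established they share the same minimiser, and identifying that minimiser finishes the proof. This is the infinite-dimensional counterpart of the denoising-score-matching identity, which I would carry out as an orthogonal projection in the Bochner space $L^2(K)$ of square-integrable $K$-valued functions of $(t, X_0, X_t, Y)$.

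First I would simplify the term inside the denoising loss. Writing $A \coloneqq \frac{X_t - e^{-t/2}X_0}{1-e^{-t}} + s(t, X_t)$ and substituting the definition of the score from \eqref{eq:inf_dim_score}, the terms in $X_t$ cancel and leave the clean expression $A = \beta_t\big(\ebb[X_0 \mid X_t] - X_0\big)$, with $\beta_t = \frac{e^{-t/2}}{1-e^{-t}}$ as in \cref{thm:h_transform_decomp}. Conditioning on $(t, X_t, Y)$ and using that $\ebb[X_0 \mid X_t]$ is already $X_t$-measurable, the tower property gives $\ebb[A \mid t, X_t, Y] = \beta_t\big(\ebb[X_0 \mid X_t] - \ebb[X_0 \mid X_t, Y]\big)$. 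By the score decomposition in \cref{thm:h_transform_decomp} this equals $-C\nabla\log h^Y(t, X_t)$, so the $L^2(K)$-projection $\tau \coloneqq -\ebb[A \mid t, X_t, Y]$ of $-A$ onto the $\sigma(t, X_t, Y)$-measurable functions is exactly the score-matching target $C\nabla\log h^Y(t, X_t)$.

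Next I would run the projection. Set $\rho \coloneqq A + \tau = A - \ebb[A \mid t, X_t, Y]$, which satisfies $\ebb[\rho \mid t, X_t, Y] = 0$ and simplifies to $\rho = \beta_t\big(\ebb[X_0 \mid X_t, Y] - X_0\big)$. For any candidate $u = u(t, X_t, Y)$ I expand $\|u + A\|_K^2 = \|u - \tau\|_K^2 + 2\langle u - \tau, \rho\rangle_K + \|\rho\|_K^2$. Taking the full expectation, the cross term vanishes by the tower property, since $u - \tau$ is $\sigma(t, X_t, Y)$-measurable while $\ebb[\rho \mid t, X_t, Y] = 0$, so that
\[
\ebb\big[\|u + A\|_K^2\big] = \ebb\big[\|u - C\nabla\log h^Y(t, X_t)\|_K^2\big] + \ebb\big[\|\rho\|_K^2\big].
\]
The first term on the right is precisely \eqref{eq:score_matching_loss}, while $\ebb[\|\rho\|_K^2] = \ebb_{t}\big[\beta_t^2\,\ebb[\|\ebb[X_0 \mid X_t, Y] - X_0\|_K^2]\big] = -\ebb_{t}[B]$ does not depend on $u$. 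Hence the two objectives differ by the constant $-\ebb_t[B]$.

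The boundedness of $B$ is exactly what makes this argument legitimate, and is where I expect the real work to lie: the prefactor $\beta_t^2$ blows up like $t^{-2}$ as $t \to 0$, so only a matching decay of $\ebb[\|\ebb[X_0\mid X_t,Y] - X_0\|_K^2]$ keeps $\rho$ in $L^2(K)$ and the constant $\ebb_t[B]$ finite; this integrability near $t = 0$ is the single point where the infinite-dimensional setting genuinely bites and where the cross-term cancellation could otherwise fail. Granting that $B$ is bounded, minimising \eqref{eq:deft_loss} and \eqref{eq:score_matching_loss} over $u$ yields the same argmin, and because $u \mapsto \ebb[\|u - C\nabla\log h^Y(t, X_t)\|_K^2]$ is a strictly convex quadratic whose unique $L^2(K)$-minimiser is attained at $u = C\nabla\log h^Y$, I conclude that $u^*(t, x, y) = C\nabla\log h^y(t, x)$ is the almost surely unique minimiser of both objectives.
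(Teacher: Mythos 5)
Your proposal is correct and is essentially the paper's own argument: the paper expands the score-matching loss \eqref{eq:score_matching_loss}, cancels the cross term via the tower property, and completes the square to arrive at the denoising loss \eqref{eq:deft_loss} plus the $u$-independent constant $B$, which is exactly your orthogonal-projection identity read in the opposite direction. The only cosmetic differences are that you start from the denoising loss and invoke \cref{thm:h_transform_decomp} explicitly to identify $\ebb[A \mid t, X_t, Y]$ with $-C\nabla \log h^Y(t,X_t)$, whereas the paper works with $s^Y - s$ directly (following Proposition 4 of \citet{baldassari2023conditional} and Lemma 7 of \citet{pidstrigach2023infinite}), and the identification of the minimiser and the uniqueness argument coincide.
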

The proof is similar to Proposition 4 in \citet{baldassari2023conditional} or Lemma 7 in \citet{pidstrigach2023infinite} and is provided in \cref{proof:fine-tuning-loss}.
We note that $B$ is bounded if one of the two conditions are met:
\begin{enumerate}[nosep, leftmargin=*]
    \item \cref{setting:cameron-martin} holds, $K$ is equal to the Cameron-Martin space of $C$, and $\ebb[\|X_0 - \ebb[X_0]\|_K^2]<\infty$;
    \item \cref{setting:gaussian} holds, and the support of $\pi^y$ and $\mathcal{N}(0, C_\pi)$ is contained in $K$.
\end{enumerate}
This is a consequence of Lemma 8 from \citet{pidstrigach2023infinite}, where the proof holds verbatim if we swap the conditional probability $\ebb[X_0 \mid X_t]$ with $\ebb[X_0 \mid X_t, Y].$

We refer to this training framework as supervised guidance training (SGT). The full training setting is provided in \cref{alg:guidance_training}. The SGT loss only requires the forward pass of the unconditional score model. We highlight this in Line \ref{alg:line_stopgrad} of the algorithm, but note that the \texttt{stopgrad} is not necessary for the implementation.

\begin{remark}
The loss function \eqref{eq:deft_loss} requires samples $(X_0, Y)$ from the joint distribution. For inverse problems, these pairs are available if the forward operator $G$ and the noise model $\eta$ are known, as one can generate synthetic measurements via $Y=G(X_0) + \eta$. Crucially, this only requires access to the ground truth data $X_0$, which can be sourced either from the original training set or by sampling from the pre-trained score model. 
\end{remark}

\subsection{Parametrisation of the Guidance Term}
\label{sec:likelihood_bias}
We parametrise the control $u$ using a neural network. In finite-dimensional settings it has been observed that advantageous to make use of a \textit{likelihood-informed inductive bias} and consider parametrisation of the control $u_\phi$ as
\begin{align}   
    \label{eq:h_trans_param}
u_\phi(t,\!x_t,\!y)\!=\!u_\phi^1(t,\!x_t,\!y)\!+\!u_\phi^2(t) \nabla_f \Phi(f,\!y)\big|_{{f=\hat{x}_t( x_t)}},
\end{align}
where $\nabla_f \Phi(f,y)$ is related to the approximated guidance term in \eqref{eq:approx_guidance}. However, crucially,  we approximate the Fréchet derivative of the Tweedie map as the identity, drastically reducing the computational time. 
Here, $u_\phi^1$ is a neural network mapping into $\mathcal{H}$ initialised as zero, and $u_\phi^2:\R \to \R$ is a time-dependent scalar scaling factor, initialised to a small constant.
Similar parametrisations have been successfully applied in finite-dimensional inverse problems \cite{denker2024deft}, sampling methods \cite{phillips2024particle,vargas2023denoising,zhang2021path}, or diffusion model fine-tuning \cite{venkatraman2024amortizing}.

We also encode regularity assumptions directly into the architecture. By preconditioning $u_\phi$ by the covariance operator $C$, we obtain
    $C \nabla \log h^y(t,x) \approx C^k u_\phi(t,x,y)$,
where we study three cases $k \in \{0, 1/2, 1\}$:

\begin{itemize}[nosep, leftmargin=*]
    \item $k\!=\!1$: In this setting, the network $u_\phi$ represents a \textit{pre-score} function. The application of $C$ ensures that the final term $C u_\phi$ possesses a high regularity, 
    \item $k\!=\!1/2$: In this setting, the network output is preconditioned such that $C^{\nicefrac{1}{2}}u_\phi$ naturally lies in the Cameron-Martin space. In this setting, the network operates in the same space as the driving noise of the SDE,
    \item $k\!=\!0$: The network is tasked with directly approximating the guidance term and $u_\phi$ must implicitly learn the smoothing properties of the operator $C$. 
\end{itemize}

\begin{figure*}
\centering
\begin{subfigure}{0.32\textwidth}
    \centering
    \includegraphics[width=\linewidth]{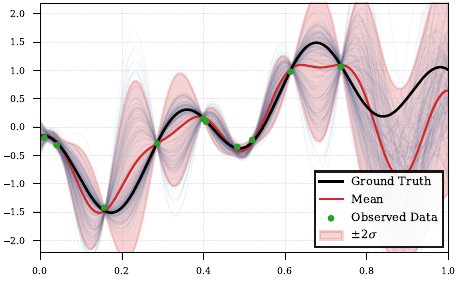}
    \caption{Diffusion Posterior Sampling}
\end{subfigure}\hfill
\begin{subfigure}{0.32\textwidth}
    \centering
    \includegraphics[width=\linewidth]{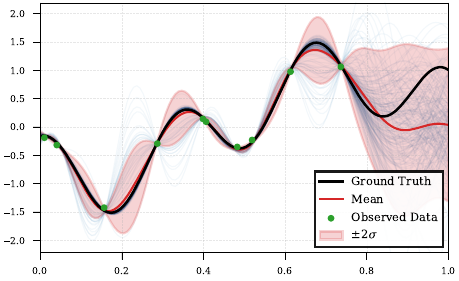}
    \caption{Conditional Diffusion}
\end{subfigure}\hfill
\begin{subfigure}{0.32\textwidth}
    \centering
    \includegraphics[width=\linewidth]{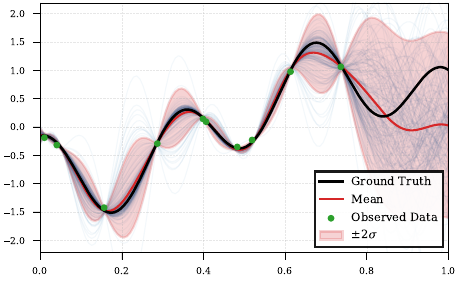}
    \caption{Supervised Guidance Training}
\end{subfigure}
\caption{Posterior sampling for the point evaluation. We show the ground truth signal in black, the mean sample in red and individual samples in light blue. Note, that the uncertainty is small on the observed points (green). SGT is evaluated with $k=0$ for the preconditioner.}
\label{fig:point_evaluation}
\vspace{-0.4cm}
\end{figure*}

\vspace{-0.2cm}
\section{Experiments}
We evaluate the proposed SGT framework. We compare against other function-space diffusion baselines. The first is the conditional diffusion approach of \cite{baldassari2023conditional}, in which a task-specific conditional diffusion model is trained from scratch. Under the assumption of optimal training, this serves as an effective upper bound on achievable performance. The second baseline is the FunDPS guidance \cite{yao2025guided}, implemented using the approximate guidance formulation in~\eqref{eq:approx_guidance}.

\subsection{Implementation Differences in Function Space}\label{sec:differences}
In this section we want to show what can be gained from looking at the conditional sampling problem from an infinite-dimensional perspective. That is, we go through some of the implementation differences between the infinite and finite dimension conditioning problems. 
In particular, there are more choices to consider, including:
\begin{enumerate}[nosep, leftmargin=*]
    \item The choice of discretisation.
    \item The choice of the covariance operator $C$.
    \item The choice of the Hilbert space to use for the loss.
\end{enumerate}
The first choice is the discretisation of the setup. Looking at the infinite-dimensional problem 
means we can choose the discretisation that best works for the data we have. This comes down to choosing a basis, or discretising on a grid as would be the usual choice for finite dimensions. Moreover, this choice can be built into the neural network architecture as with neural operators \citep{li2020neural}, leading to resolution invariant networks. 

The covariance operator $C$ in the conditional case should be chosen in line with the unconditional training. As we see from \cref{thm:inf_doobs}, the conditional SDE $Z_t \mid Y=y$, uses the same covariance operator $C$ as in the unconditional case. Hence, if the unconditional has already been trained, the choice of $C$ is set. For choice of $C$ in the unconditional case, we refer to Section 5 and 6 of \citet{pidstrigach2023infinite}. 

Thirdly, we need to choose which $K$ we use in the loss. From \cref{prop:fine-tuning-loss} we see that in \cref{setting:cameron-martin} it makes sense to take $K$ to be the Cameron-Martin space corresponding to $C$. In \cref{setting:gaussian}, we just need that the support of $\pi$ and $\mathcal{N}(0, C)$ is in $K$, so $H$ is a valid option. 

In the experiments, we adopt the following choices. As a base space we consider $L^2(D, \rbb^d)$, for some choice of domain $D$ and dimension $d$. Given a basis $e_i$ of this space, we consider covariance operators $C_{\nu}$ of the form
\begin{align}
    \label{eq:covariance}
    C_{\nu}f \coloneqq \sum_{k=1}^{\infty}\frac{1}{k^\nu} \langle f, e_k\rangle e_k.
\end{align}

\subsection{Sparse Observations}
We consider a one-dimensional inverse problem on $\Omega = [0,1]$, where the goal is to recover a real-valued function from finitely many noisy point observations. We generate functions as random superpositions of sinusoidal components with varying amplitudes, frequencies, and phases, yielding smooth and bounded functions. Pointwise observations are approximated by integration against a narrow Gaussian kernel, and additive Gaussian noise with standard deviation $\sigma = 0.01$ is applied. Additional details are provided in Appendix~\ref{app:sparse_observations}.

All score models are based on a Fourier Neural Operator (FNO) architecture \cite{li2020fourier}, augmented with time embeddings as in \citet{yang2024infinite}. Notably, SGT uses a smaller network with $8 \times$ fewer parameter and is trained for half of the gradient steps compared to the conditional diffusion  (full size). We also compare training the conditional diffusion with approximately the same number of parameters and training steps as SGT. We evaluate SGT under three choices of the preconditioner $C^k$ with $k \in \{0, \tfrac{1}{2}, 1\}$ (see \cref{sec:likelihood_bias}).

To assess the quality of the estimated posterior, we evaluate the approaches on $64$ independent signals, drawing $N=200$ samples per observation. We report both the mean root mean square error (RMSE) and mean energy score (ES) \cite{gneiting2007strictly}, see Appendix \ref{app:sparse_observations}. Quantitative results are given in \cref{tab:results_1d_pointeval} with samples in \cref{fig:point_evaluation}. We observe that SGT is able to beat the FunDPS approximation and comes close to a specifically trained full size conditional model. We also see that when the conditional diffusion is trained with approximately the same number of parameters and training steps as SGT, then SGT beats the conditional model. However, note that full SGT sampling involves both the unconditional model and the guidance model. Hence, the total number of parameters used during inference in SGT (unconditional and guidance networks) is larger than that of the conditional diffusion model. Nonetheless, whereas we only need to re-train the guidance network for a new observation setting, the conditional network has to be trained from scratch. 
Finally, we see that including the smoothness preconditioner into the architecture is able to produce better results, with $k\!=\!1$ giving the best results for SGT.


\begin{table}[]
\centering
\caption{Comparison of the different posterior sampling methods for sparse observations for $64$ different signals. (Mean $\pm$ Std)}
\label{tab:results_1d_pointeval}
\small
\begin{tabular}{lccc}
\toprule
Method & RMSE ($\downarrow$) & ES ($\downarrow$) \\ \midrule
FunDPS & $0.456 \pm 0.258$ & $3.709 \pm 2.109$ \\
Cond. Diff. (full size)& $0.347 \pm 0.213$ & $2.766 \pm 1.663$ \\ 
Cond. Diff. (SGT size)& $0.406\pm0.220$ & $3.230\pm1.749$\\
\noalign{\vskip 0.25ex}
\hdashline
\noalign{\vskip 0.25ex}
SGT $k\!=\!0$ & $0.390 \pm 0.214$ & $3.148 \pm 1.733$ \\
SGT $k\!=\!1/2$ & $0.385 \pm 0.218$ & $3.111 \pm 1.748$ \\ 
SGT $k\!=\!1$ & $0.373 \pm 0.210$ & $3.009 \pm 1.645$ \\ 
\bottomrule
\end{tabular}
\end{table}

\paragraph{Imperfect Score}  We evaluate SGT and FunDPS for three different base models: 1) untrained base model (randomly initialised), 2) partly trained base model ($50$ epochs) and 3) the fully trained model. Quantitative results are presented in Table~\ref{tab:imperfect_score}. The performance of both approaches degrades under a worse unconditional base model. However, SGT can partly compensate for a imperfect unconditional model due to the training of the guidance term and is able to obtain a better ES and RMSE for the partly trained model than FunDPS for a fully trained model. Even in the case of a fully random base model SGT is able to roughly recover the true function, whereas FunDPS fails to produce a realistic sample, see Figure~\ref{fig:random_base_model} in the Appendix.

\begin{table}[t]
\centering
\caption{Performance under an imperfect base model for FunDPS and SGT.}
\label{tab:imperfect_score}
\begin{tabular}{lcc}
\hline
& \textbf{ES} ($\downarrow$) & \textbf{RMSE} ($\downarrow$) \\
\hline
\multicolumn{3}{l}{\small\textit{Random base model}} \\
FunDPS ($\gamma=0.1$) & 20.685 & 558.25 \\
SGT & 6.591 & 1.238 \\
\hline
\multicolumn{3}{l}{\small\textit{Partly trained base model}} \\
FunDPS ($\gamma=1.0$) & 3.816 & 0.472 \\
SGT & 3.230 & 0.397 \\
\hline
\multicolumn{3}{l}{\small\textit{Fully trained base model}} \\
FunDPS ($\gamma=1.0$) & 3.608 & 0.447 \\
SGT & 3.190 & 0.392 \\
\hline
\end{tabular}
\end{table}

\subsection{Heat Equation}
Next, we consider a classical Bayesian inverse problem involving the one-dimensional heat equation. 
The heat equation over the domain $\Omega=[0,1]$ is given by 
\begin{align}
    \frac{\partial w}{\partial t} = \nu \frac{\partial^2 w}{\partial x^2}, \quad w(x,0)=f(x),
\end{align}
where $\nu>0$ is the diffusion coefficients, $f \in L^2(\Omega)$ is the initial temperature  and we impose a zero Dirichlet boundary. The inverse problem aims to recover the $f$ from noisy observations at time $T$. We define the forward operator $G:f \mapsto w(\cdot, T)$, such that $y=G(f)+\eta$, where $\eta \sim \mathcal{N}(0, \sigma^2 I)$ represents additive Gaussian noise with $\sigma=0.1$. We set $\nu=0.05$ and $T=0.2$. The training data consists of one-dimensional signals formed by the difference of two symmetric Gaussian bumps with randomly varying location, width, and amplitude.

\begin{figure*}[ht]
\centering
\begin{subfigure}{0.32\textwidth}
    \centering
    \includegraphics[width=\linewidth]{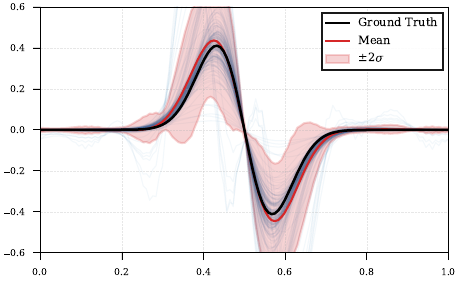}
    \caption{Diffusion Posterior Sampling}
\end{subfigure}\hfill
\begin{subfigure}{0.32\textwidth}
    \centering
    \includegraphics[width=\linewidth]{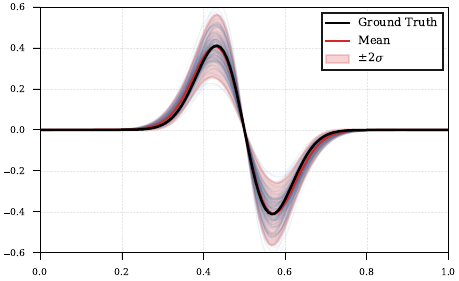}
    \caption{Conditional Diffusion}
\end{subfigure}\hfill
\begin{subfigure}{0.32\textwidth}
    \centering
    \includegraphics[width=\linewidth]{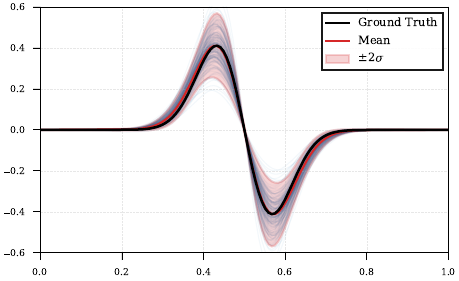}
    \caption{Supervised Guidance Training}
\end{subfigure}
\caption{Posterior sampling for the heat equation. We show the ground truth signal in black, the mean sample in red and individual samples in light blue. Note, that due to the boundary condition $f(0)=f(1)=0$ the uncertainty at the boundary of the domain is small.}
\label{fig:heat_samples}
\vspace{-0.5cm}
\end{figure*}

\begin{table}[]
\centering
\caption{Comparison of the different posterior sampling methods on the heat equation for $64$ different signals. (Mean $\pm$ Std)}
\label{tab:results_1d}
\small
\begin{tabular}{lccc}
\toprule
Method & RMSE ($\downarrow$) & ES ($\downarrow$) \\ \midrule
FunDPS & $0.0519 \pm 0.0311$ & $0.403 \pm 0.213$ \\
Cond. Diff. (full size) & $0.0450 \pm 0.0244$ & $0.351 \pm 0.182$ \\ 
Cond. Diff. (SGT size) & $0.0483\pm0.026$ & $0.383\pm0.183$ \\ \noalign{\vskip 0.25ex}
\hdashline
\noalign{\vskip 0.25ex}
SGT $k\!=\!0$ & $0.0443\pm0.0240$ & $0.346\pm0.177$ \\ 
SGT $k\!=\!1/2$ & $0.0443\pm0.0231$ & $0.345\pm0.167$ \\ 
SGT $k\!=\!1$& $0.0448\pm0.0235$ & $0.345\pm 0.174$ \\\bottomrule
\end{tabular}
\end{table}

Quantitative results are summarised in \cref{tab:results_1d}. We observe that both the conditional diffusion model and our learned guidance function significantly outperform FunDPS in terms of RMSE, with our method achieving competitive accuracy despite a shorter training schedule. 
However, the effect of the preconditioner $C^k$ is negligible for this operator. 
Qualitative samples are visualised in \cref{fig:heat_samples}.

\subsection{Shape Inpainting}
We study conditional sampling of shapes extracted from the MNIST dataset \cite{lecun2010mnist}. The binary images are converted into closed contours by extracting the outer boundary and sampling points uniform along the curve. We focus on the digit class ``3'' and its geometric variability. Example shapes from the test set are shown in Figure~\ref{fig:mnist_shape_dataset} in Appendix \ref{app:shape_inpainting}. 
Each shape is represented by ordered landmarks $(x_i, y_i), i=1,\dots,R$ sampled along the contour, where $R$ may vary across examples. 
Since landmark resolution is a discretisation choice rather than an intrinsic property of the shape, we seek a representation that is independent of $R$. 
To this end, we embed shapes into a function space by interpreting them as closed parametric curves $(x(t), y(t)),~ t\in[0,T]$, and represent these curves using elliptic Fourier descriptors (EFDs) \cite{kuhl1982elliptic}, which encode each closed contour as a vector of Fourier coefficients  $\alpha={(a_n,b_n,c_n,d_n)}_{n=1}^N \in \mathbb{R}^{4N}$, see Appendix \ref{app:shape_inpainting}. 
EFD provides a finite-dimensional shape representation that is invariant to landmark resolution. 

We formulate conditional sampling as an inverse problem in EFD space. Observations consist of the first $m<N$ Fourier modes, $y=P_m \alpha + \eta$, where $P_m$ projects onto the lowest $m$ modes and $\eta$ denotes additive noise. This corresponds to an inpainting problem in Fourier space; low-frequency shape information is observed, while high-frequency coefficients encoding finer geometric details are missing. 

We again compare a conditional diffusion model, the FunDPS approximation and our SGT. Performance is evaluated using the RMSE over the shape, the RMSE over the observed EFD coefficients and the ES. Results are reported in Table~\ref{tab:shape_diffusion_results}. Our learned guidance term is perform similar to the conditional diffusion model, in particular outperforming it in the case of only 2 and 4 observed coefficients. FunDPS exhibits higher error and reduced consistency with the observed coefficients. Training and implementation details are provided in Appendix \ref{app:shape_inpainting}. Qualitative reconstructions are shown in Figure~\ref{fig:shape_conditinal}. Both the conditional diffusion model and the learned $h$-transform produce realistic and coherent shape completions, whereas FunDPS displays higher variance and struggles to recover fine-scale geometric details. Additional qualitative results are provided in the Appendix.
\begin{table}[t]
\caption{Shape inpainting results for different numbers of EFD coefficients.}
\centering
\resizebox{0.48\textwidth}{!}{
\begin{tabular}{lccc}
\toprule
 & RMSE ($\downarrow$) & RMSE obs. ($\downarrow$) & ES ($\downarrow$) \\
\midrule
\multicolumn{4}{l}{\textbf{2 coefficients}} \\
FunDPS          & $2.40\pm0.42$ & $0.127\pm0.034$ & $1.367\pm0.413$ \\
Cond. Diff.     & $1.60\pm0.44$ & $0.032\pm0.005$  & $0.982\pm0.036$  \\
SGT \textbf{(ours)} & $1.60\pm0.39$ & $0.031\pm0.004$         & $0.926\pm0.295$ \\
\midrule
\multicolumn{4}{l}{\textbf{4 coefficients}} \\
FunDPS          & $1.80\pm0.40$ & $0.135\pm0.035$ & $0.997\pm0.289$\\
Cond. Diff.     & $0.72\pm0.09$ & $0.033\pm0.004$ & $0.430\pm0.075$ \\
SGT \textbf{(ours)} & $0.63\pm0.09$  & $0.036\pm0.004$ & $0.366\pm0.081$  \\
\midrule
\multicolumn{4}{l}{\textbf{6 coefficients}} \\
FunDPS          & $0.79\pm0.16$ & $0.065\pm0.012$ & $0.421\pm0.125$ \\
Cond. Diff.  & $0.46\pm0.06$ & $0.037\pm0.004$  & $0.244\pm0.054$ \\
SGT \textbf{(ours)} & $0.45\pm0.06$ & $0.037\pm0.004$ & $0.236\pm0.050$  \\
\midrule
\multicolumn{4}{l}{\textbf{8 coefficients}} \\
FunDPS          & $0.64\pm0.12$ & $0.055\pm0.010$ & $0.332\pm0.092$ \\
Cond. Diff.     & $0.50\pm0.05$  & $0.035\pm0.004$  & $0.265\pm0.036$ \\
SGT \textbf{(ours)} & $0.42\pm0.04$ & $0.038\pm0.003$ & $0.212\pm0.035$ \\
\bottomrule
\end{tabular}}
\label{tab:shape_diffusion_results}
\end{table}

\begin{figure}[h]
    \centering\includegraphics[width=0.88\linewidth]{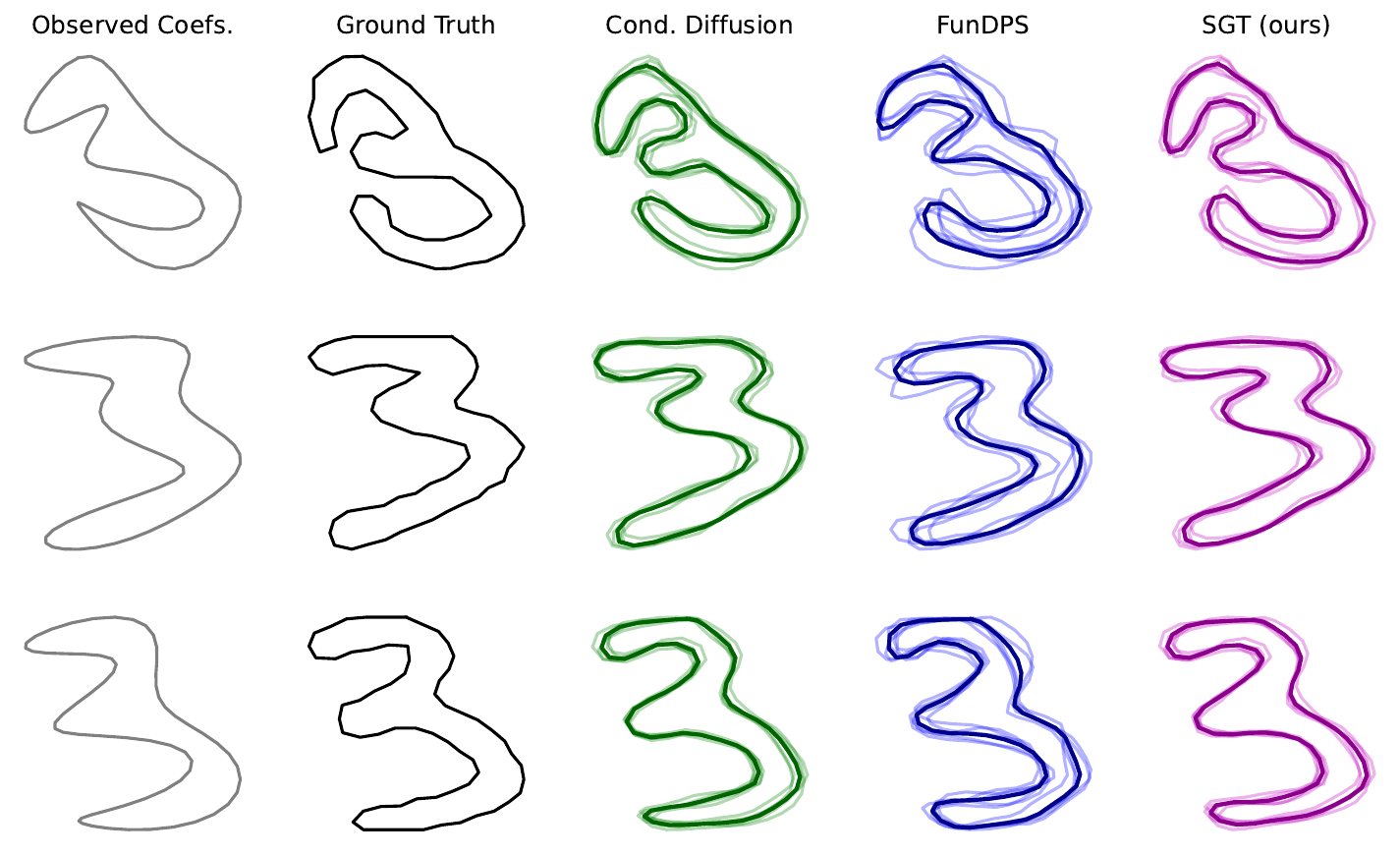}
    \caption{Conditional sampling results on shape data using the first six EFD modes. We show the reconstruction from the first six EFD modes, the ground-truth shape, and conditional samples generated by Conditional Diffusion, FunDPS, and our SGT. Mean predictions are shown in darker colours, with individual samples overlaid in lighter shades.}
    \label{fig:shape_conditinal}
    \vspace{-0.5cm}
\end{figure}

\section{Conclusion}
In this work, we present a framework for conditional sampling in infinite-dimensional function spaces. In particular, we propose \emph{Supervised Guidance Training}, as an extension of DEFT \cite{denker2024deft} to infinite-dimensional SDMs. We demonstrate that explicitly learning the guidance term leads to a better accuracy, compared to heuristic Tweedie approximations. An assumption of our framework is the availability of an exact score function for the unconditional process, which recovers the prior measure. In practice, score-matching leads to approximation errors in the unconditional score, which can propagate through the guidance term. While the infinite-dimensional framework can naturally deal with discretisation errors, studying how this approximation error influences the final approximation of the posterior remains a subject for further work.

\section*{Acknowledgments}
\vspace{-0.2cm}
The authors would like to thanks Jakiw Pidstrigach for useful discussions.
AD acknowledges support from the EPSRC (EP/V026259/1) and support from DESY (Hamburg, Germany), a member of the Helmholtz Association HGF. 
EB and JF were supported by funding from Villum Foundation Synergy project number 50091 entitled “Physics-aware machine learning” as well as the Center for Basic Machine Learning Research in Life Science (MLLS) through the Novo Nordisk Foundation (NNF20OC0062606). JF was further supported by funding from the Reinholdt W. Jorck og Hustrus Fond.

\section*{Impact Statement}
\vspace{-0.2cm}
Our work offers advancements in conditioning diffusion models for functions. This has applications in the field of PDEs, which can be applied to areas with potential societal consequences. Given the theoretical nature of this work, its potential societal impact is indirect and mediated through future applications, making it difficult to anticipate specific consequences at this stage.

\bibliography{bib}
\bibliographystyle{icml-styles/icml2026}

\newpage
\appendix
\onecolumn
\section{Proofs}

\subsection{Proof of \cref{thm:inf_doobs}}\label{proof:inf_doobs}

\begin{theorem}
    Let $\pbb$ be the path measure of $Z_t$, and $h^y$ defined in \eqref{eq:h-transform}. The regular conditional probability $\pbb(\cdot \mid Y=y)$ denoted $\mathbb{P}^y$ satisfies
    \begin{align}
        \drm \mathbb{P}^y = h^y(T, Z_T) \drm {\pbb}.
    \end{align}
Moreover, the conditional process of $Z_t$ given $Y=y$, which we denote by $Z_t^y$, satisfies 
\begin{align}\label{eq:app_Zty}
    &\drm Z^y_t =  b(t, Z^y_t)\drm t + \sqrt{C}\mathrm{d}W_t; \, Z^y_0 \sim \text{Law}(X_T),
    \\
    &b(t, z) \coloneqq \frac{1}{2}z + s(T-t, z) + C \nabla \log h^y(t, z),
\end{align}
where $\nabla \log h^y(t, x)$ is the Riesz representation of the Fréchet derivative of the logarithm of $h^y$ with respect to $x$.
\end{theorem}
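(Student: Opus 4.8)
The plan is to reduce the statement to the abstract infinite-dimensional Doob $h$-transform of \citet{baker2024conditioning}. Its hypotheses are essentially that the conditioning functional induces a space--time harmonic multiplier along the reverse process, and that the logarithm of this multiplier is Fréchet differentiable with enough integrability to apply Girsanov; everything then reduces to identifying the multiplier with $h^y$ and checking its regularity.

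First I would settle the measure-change claim. Since $Y$ is a noisy observation of $Z_T = X_0$ alone, conditioning the path measure $\pbb$ on $\{Y=y\}$ reweights it by the likelihood, so by the infinite-dimensional Bayes theorem the regular conditional probability satisfies $\frac{\drm\pbb^y}{\drm\pbb} = \xi^{-1}\exp(-\Phi(Z_T,y))$. I would then recognise this density as the terminal value of the process $t\mapsto h^y(T-t,Z_t) = \xi^{-1}\ebb_{\pbb}[\exp(-\Phi(X_0,y))\mid X_{T-t}]$, which is an $\mathcal{F}^Z$-martingale by the tower and Markov properties (it is simply the Doob martingale of the terminal functional). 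Evaluating it at the terminal reverse time returns the likelihood reweighting, and restricting to $Z_T$ recovers $\drm\pi^y/\drm\pi$, so the conditioned marginal of $Z_T^y$ is the posterior $\pi^y$, as claimed.

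Given this positive martingale multiplier, the transform of \citet{baker2024conditioning} adds $C\nabla_z\log h^y(T-t,z)$ to the drift of \eqref{eq:backwardSDE} (the operator $C$ coming from the $\sqrt C\,\sqrt C$ in the Girsanov drift), producing $b(t,z)=\tfrac12 z + s(T-t,z)+C\nabla\log h^y(T-t,z)$; the stated initialisation agrees with the unconditional one in the mixing regime where $h^y(T,\cdot)\to 1$. The only substantive things to verify are that $\log h^y$ is Fréchet differentiable in $z$ and that the induced exponential change of measure is a genuine (finite-energy) martingale.

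Differentiability of $h^y$ is the main obstacle, and it is where the two settings diverge. I would use the representation $h^y(t,x)=\frac{\drm\,\mathrm{Law}(X^y_t)}{\drm\,\mathrm{Law}(X_t)}(x)$ as the Radon--Nikodym derivative between the time-$t$ marginals of the forward process started from the posterior and from the prior, so that $\nabla\log h^y$ is a difference of marginal scores. In \cref{setting:cameron-martin}, since $\mathrm{supp}(\pi)$ lies in a ball of the Cameron--Martin space of $C$, the Cameron--Martin theorem supplies an explicit Gaussian density $d_t(x,f)$ for the transition $\mathcal{N}(e^{-t/2}f,(1-e^{-t})C)$ against $\mathcal{N}(0,(1-e^{-t})C)$; writing $h^y$ as a ratio of integrals of $d_t(x,f)$ against $\pi$ and differentiating under the integral sign --- licensed by the uniform bound on $\|f\|$ over the support --- gives the derivative. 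In \cref{setting:gaussian}, where $\pi$ is merely absolutely continuous with respect to a Gaussian, the same differentiation under the integral is instead driven by differentiability of $\Phi$ together with the growth bounds on $\Psi$ and the Lipschitzness of $\nabla\Psi$, which dominate the integrand and its gradient. In either case one must check that it is the preconditioned quantity $C\nabla\log h^y$ that is well defined --- the factor $C$ absorbing the otherwise unbounded $C^{-1}$ that appears on differentiating the Gaussian kernel --- which is precisely the guidance term entering $b$ and is consistent with \cref{thm:h_transform_decomp}.
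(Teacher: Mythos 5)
Your overall route is the same as the paper's: identify $h^y$ evaluated along $Z_t$ as the strictly positive Doob martingale of the likelihood, apply the infinite-dimensional $h$-transform of \citet{baker2024conditioning} (with strong existence of the time-reversal from Theorems 12 and 13 of \citet{pidstrigach2023infinite}), and reduce the whole statement to Fréchet differentiability of $h^y$, handled separately in the two settings. Your identification of the conditioned path measure via the infinite-dimensional Bayes theorem --- using that $Y$ depends on the path only through $Z_T=X_0$ --- is a legitimate substitute for the paper's direct tower-property computation in \cref{prop:doobs_gives_conditional}, and your \cref{setting:cameron-martin} argument (Cameron--Martin density of the transition kernel, $h^y$ as a ratio of integrals against $\pi$, differentiation under the integral dominated by the radius-$R$ bound on the support) is precisely \cref{lemma:cameron_martin_frechet}, with the domination step stated more explicitly than in the paper.

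The gap is in \cref{setting:gaussian}. You assert that ``the same differentiation under the integral'' goes through, now driven by differentiability of $\Phi$ and the growth/Lipschitz bounds on $\Psi$ --- but you never exhibit an integral representation of $h^y(t,x)$ whose $x$-dependence is smooth, and the Setting-1 representation is literally unavailable here: since $\pi$ need not charge the Cameron--Martin space of $C$, the kernel density $n_t(x_0,\cdot)=\drm\mathcal{N}(e^{-t/2}x_0,(1-e^{-t})C)/\drm\mathcal{N}(0,(1-e^{-t})C)$ fails to exist for $\pi$-typical $x_0$ by the Feldman--H\'ajek theorem, so there is no integrand for your domination argument to act on. The paper's \cref{lemma:gaussian_frechet} supplies the missing device: introduce the auxiliary OU process $V_t$ with $V_0\sim\mathcal{N}(0,C_\pi)$, reweight by $\exp(-\Psi)$ to pass from $V$ to $X$, use the Gaussian conditional $V_0\mid V_t=x\sim\mathcal{N}(K_tx,P_t)$ with $K_t=e^{-t/2}C_\pi C_t^{-1}$, and translate the Gaussian so that
\begin{align}
h^y(t,x)=\frac{\int \exp\left(-\Psi(v+K_tx)-\Phi(v+K_tx)\right)\drm\mathcal{N}(0,P_t)(v)}{\int \exp\left(-\Psi(v+K_tx)\right)\drm\mathcal{N}(0,P_t)(v)},
\end{align}
where the $x$-dependence sits entirely inside $\Phi,\Psi$ composed with the bounded linear map $K_t$; differentiability then follows from that of $\Phi$ and $\Psi$, with your proposed domination via the bounds $E_0\le\Psi\le E_1+E_2\|u\|^2$ and Lipschitz $\nabla\Psi$ justifying the interchange. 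Without this construction (or an equivalent one), your Setting-2 argument does not get off the ground, although the hypotheses you invoke are exactly the right ones. A minor further remark: your hedge about the initialisation is apt --- under the exact transform the law of $Z_0$ is reweighted by the conditional expectation of the terminal density, and the statement $Z_0^y\sim\mathrm{Law}(X_T)$ should be read in the same spirit as the paper's $\mathrm{Law}(X_T)\approx\mathcal{N}(0,C)$ convention.
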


We split the proof of this into a series of lemmas. First we show that we can define the probability measure $\pbb^y$ via Doob's $h$-transform. For this we require that the function $h^y$ is Fréchet differentiable. We show this is true in \cref{prop:doobs_works}. Next, we prove that this is the ``correct'' measure, \ie, $\pbb^y$ is the conditional measure $\pbb(\cdot \mid Y=y)$, in \cref{prop:doobs_gives_conditional}. 

\begin{proposition}\label{prop:doobs_works}
Let $\pbb$ be the path distribution of $Z_t$, and $h^y$ as defined in \eqref{eq:h-transform}. We assume we are either in \cref{setting:cameron-martin} or \cref{setting:gaussian}. Then we can define another distribution $\mathbb{Q}$ via
$$\drm \mathbb{Q} = h^y(T, Z_T) \drm {\pbb}.$$ 
Moreover, writing $Z_t$ with respect to the measure $\mathbb{Q}$ satisfies \eqref{eq:app_Zty}.
\end{proposition}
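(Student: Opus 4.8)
The plan is to reduce the proposition to the infinite-dimensional Doob $h$-transform theorem of \citet{baker2024conditioning}, whose two hypotheses are (i) that the stated Radon--Nikodym density is the terminal value of a strictly positive $\pbb$-martingale with unit $\pbb$-expectation, so that $\mathbb{Q}$ is a genuine probability measure, and (ii) that $x \mapsto h^y(t,x)$ is Fréchet differentiable with enough integrability to identify the transformed drift. Granting (i) and (ii), the cited theorem supplies the change of measure and, via a Girsanov argument, the augmented drift $b(t,z)=\tfrac12 z + s(T-t,z) + C\nabla\log h^y(T-t,z)$, driven by a $\mathbb{Q}$-cylindrical Wiener process, which is exactly \eqref{eq:app_Zty}. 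So the entire work lies in verifying (i) and (ii), together with the well-posedness of the reverse SDE, which is already guaranteed in either setting by Theorems 12--13 of \citet{pidstrigach2023infinite}.

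For (i) I would consider the conditional-expectation process underlying $h^y$, concretely $M_t \coloneqq h^y(T-t, Z_t)$ evaluated along the reverse process, and show it is a strictly positive martingale for the reverse filtration $\mathcal{G}_t=\sigma(Z_r:r\le t)$ whose terminal value is the density appearing in the statement. Since $Z_t=X_{T-t}$ and $h^y(s,x)=\xi^{-1}\ebb_\pbb[\exp(-\Phi(X_0,y))\mid X_s=x]$, the process $M_t$ is by construction a conditional expectation of the fixed terminal functional $\xi^{-1}\exp(-\Phi(X_0,y))$ given increasing information about $X_0$. The martingale property then follows from the Markov property of $X$ and the tower property: for $r\le t$ one has $\ebb_\pbb[M_t\mid\mathcal{G}_r]=\xi^{-1}\ebb_\pbb[\exp(-\Phi(X_0,y))\mid X_{T-r}]=M_r$. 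Strict positivity is immediate from $\exp(-\Phi)>0$, and the unit normalization is exactly the definition $\xi=\ebb_{u\sim\pbb}[\exp(-\Phi(u,y))]$. This part is routine.

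The substantive step is (ii), and I expect it to be the main obstacle, because in infinite dimensions there are no Lebesgue transition densities, so the $x$-dependence must be differentiated through the Gaussian/Cameron--Martin structure of the transition kernel $X_t\mid X_0\sim\mathcal{N}(e^{-t/2}X_0,(1-e^{-t})C)$. In \cref{setting:cameron-martin} I would use that $X_0$ lies in a radius-$R$ ball of the Cameron--Martin space of $C$, so $e^{-t/2}X_0$ is an admissible shift and the Cameron--Martin theorem expresses the shifted transition law as an absolutely continuous tilt of the centred reference Gaussian. This rewrites $h^y(t,x)$ as a ratio of expectations over $X_0\sim\pi$ of $\exp(-\Phi(X_0,y))$ against a factor depending affinely on $x$ through the Cameron--Martin pairing; the radius-$R$ bound then furnishes the uniform integrability needed to differentiate under the expectation and yields a bounded Fréchet derivative. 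In \cref{setting:gaussian} the prior is only absolutely continuous with respect to $\mathcal{N}(0,C_\pi)$ with density $\tfrac1\chi\exp(-\Psi)$, and $X_0$ need not lie in the Cameron--Martin space of $C$; here I would instead write $h^y(t,x)$ as a ratio of Gaussian integrals, absorbing $\exp(-\Psi)$ into the integrand and localising the $x$-dependence to the Gaussian transition factor. Differentiability of $\Phi$ with continuous $\nabla\Phi$, together with the two-sided bound $E_0\le\Psi\le E_1+E_2\|u\|^2$ and the Lipschitz gradient of $\Psi$, provide the domination required to interchange differentiation and integration.

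This second case is the crux: it is precisely where the growth assumptions on $\Psi$ and $\Phi$ are consumed, since they control the integrability that legitimises the interchange of the Fréchet derivative with the infinite-dimensional integral. Once differentiability and the martingale normalization are in hand, the logarithmic derivative $\nabla\log h^y$ is well defined as a Riesz representative, and invoking the $h$-transform theorem of \citet{baker2024conditioning} identifies the law of $Z_t$ under $\mathbb{Q}$ as the solution of \eqref{eq:app_Zty}, completing the proof.
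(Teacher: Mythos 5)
Your proposal matches the paper's proof essentially step for step: it reduces the statement to Theorem 5.1 of \citet{baker2024conditioning}, verifies that $h^y$ along the reverse process is a strictly positive unit-mean martingale via the tower/Markov argument (the paper discharges this by citing Lemma 5.2 there), takes well-posedness of the reverse SDE from Theorems 12--13 of \citet{pidstrigach2023infinite}, and proves Fréchet differentiability of $h^y$ exactly as the paper does --- via the Cameron--Martin tilt $n_t(x_0,x)$ of the transition kernel in \cref{setting:cameron-martin}, and in \cref{setting:gaussian} by rewriting $h^y$ as a ratio of integrals against the Gaussian conditional law $\mathcal{N}(K_t x, P_t)$ of the reference Ornstein--Uhlenbeck process and differentiating through $\Phi$ and $\Psi$ with the stated growth bounds providing domination. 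The only omission relative to the paper is the time-differentiability of $h^y$, a hypothesis of the cited theorem that the paper handles with a one-line appeal to Lemma 5.3 of \citet{baker2024conditioning}.
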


\begin{proof}

We will apply Theorem 5.1 from \citet{baker2024conditioning}, which says that if
\begin{enumerate}
    \item $h^y$ is twice Fréchet differentiable with respect to $z \in H$ and once differentiable with respect to $t$, with continuous derivatives,
    \item the time-reversal SDE in \eqref{eq:backwardSDE} has a strong solution,
    \item $h^y(t, Z_t)$ is a strictly positive martingale, with $h^y(0, Z_0) = 1$, and $\ebb[h^y(T, Z_T)]=1$,
\end{enumerate}
then $\qbb$ is well-defined and $Z_t$ satisfies \eqref{eq:app_Zty} under $\qbb$. We note however that the proof in Theorem 5.1 \citep{baker2024conditioning} only relies on $h^y$ being once Fréchet differentiable with respect to $z$, therefore this is what we will use.

That the time-reversal has a strong solution holds from Theorem 12 and Theorem 13 of \citet{pidstrigach2023infinite}, for \cref{setting:cameron-martin} or \cref{setting:gaussian} respectively.
Moreover, following Lemma 5.2 of \citet{baker2024conditioning} and noting that the proof does not depend on the SDE having a deterministic initial condition, we know that $h^y(t, Z_t)$ is a strictly positive martingale.

Hence, the main thing to check is that $h^y$ is twice Fréchet differentiable with respect to $z$ and once with respect to time. The differentiability with respect to $z$ holds in \cref{setting:gaussian} from \cref{lemma:gaussian_frechet} and in \cref{setting:cameron-martin} from \cref{lemma:cameron_martin_frechet}.
The time differentiability follows from Lemma 5.3 of \citet{baker2024conditioning}.
\end{proof}

\begin{lemma}\label{lemma:gaussian_frechet}
    Let \cref{setting:gaussian} hold. Here, the prior distribution $\pi$ is absolutely continuous with respect to a Gaussian distribution $\mathcal{N}(0, C_\pi)$ such that
    \begin{align}
        \frac{\drm \pi}{\drm \mathcal{N}(0, C_\pi)}(x) = \frac{\exp(-\Psi(x))}{\ebb_\pi[\exp(-\Psi(X))]}.
    \end{align}
    Similarly, we take
    \begin{align}
        \frac{\drm \pi^y}{\drm \pi}(x) = \frac{\exp(-\Phi(x))}{\ebb_{\pi^y}[\exp(-\Phi(X))]}.
    \end{align}
        Moreover, we assume that both $\Phi$ and $\Psi$ are both Fréchet differentiable and that $\ebb_\pi[\exp(-\Psi(X))]<\infty$, $\ebb_\pi[\exp(-\Phi(X))]<\infty$.

Then $h^y(t, x)$ is Fréchet differentiable with respect to $x$.
\end{lemma}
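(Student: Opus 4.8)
The plan is to rewrite $h^y(t,x)$ from \eqref{eq:h-transform} as a ratio of two Gaussian integrals in which the conditioning state $x$ enters only through the mean of a fixed Gaussian measure, and then to differentiate under the integral sign using the Fréchet differentiability of $\Phi$ and $\Psi$. The essential preliminary step is a change of reference measure that sidesteps the Cameron--Martin obstruction: writing the transition kernel of \eqref{eq:inf_dim_forward} directly as a density would require the shift $e^{-t/2}x_0$ to lie in the Cameron--Martin space of $(1-e^{-t})C$, which fails for $\pi$-almost every $x_0$ in \cref{setting:gaussian} by Feldman--H\'ajek. Instead I take $C=C_\pi$ (the natural choice in this setting) and introduce the stationary Ornstein--Uhlenbeck reference started at $\tilde X_0\sim\mathcal{N}(0,C_\pi)$, whose joint law $\text{Law}(\tilde X_0,\tilde X_t)$ is Gaussian on $\mathcal{H}\times\mathcal{H}$. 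Because the forward transition kernel is shared, the actual joint law obeys $\frac{\drm\,\text{Law}(X_0,X_t)}{\drm\,\text{Law}(\tilde X_0,\tilde X_t)}(x_0,x_t)\propto\exp(-\Psi(x_0))$, with the density depending only on the time-$0$ coordinate, where absolute continuity with respect to $\mathcal{N}(0,C_\pi)$ is exactly the assumption we are given.

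Disintegrating the jointly Gaussian reference, the conditional law $\text{Law}(\tilde X_0\mid\tilde X_t=x)$ is Gaussian with mean $e^{-t/2}x$ (a bounded linear function of $x$) and fixed covariance $(1-e^{-t})C$. A Bayes / change-of-measure computation then gives the representation
\[
h^y(t,x)=\frac{1}{\xi}\,\frac{\ebb_{w}\!\left[\exp\!\big(-\Phi(e^{-t/2}x+w,y)-\Psi(e^{-t/2}x+w)\big)\right]}{\ebb_{w}\!\left[\exp\!\big(-\Psi(e^{-t/2}x+w)\big)\right]},\qquad w\sim\mathcal{N}\big(0,(1-e^{-t})C\big),
\]
where the normalising constant $\xi$ is a strictly positive number independent of $x$. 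Call the numerator and denominator $N(x)$ and $D(x)$; both have the form $\ebb_w[g(e^{-t/2}x+w)]$ with $g$ Fréchet differentiable, being a composition of $\exp$ with the differentiable potentials.

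The key observation is that the $x$-dependence lives inside the argument of $g$ and not in a shift of the Gaussian $w$, so I can differentiate $g$ directly through the chain rule without ever requiring $e^{-t/2}h$ to belong to a Cameron--Martin space. For a direction $h\in\mathcal{H}$ the candidate G\^ateaux derivative is $DN(x)[h]=-e^{-t/2}\ebb_w\big[\exp(-\Phi-\Psi)(D\Phi+D\Psi)[h]\big]$ evaluated at $e^{-t/2}x+w$, and analogously for $D(x)$. The main obstacle, and the only genuine work, is to justify this differentiation under the integral sign and to upgrade the directional derivative to a continuous Fréchet derivative: this requires an integrable dominating function for the difference quotients, uniform over a neighbourhood of $x$. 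Here I would invoke the lower bound $\Psi\geq E_0$ and the quadratic upper bound $\Psi\leq E_1+E_2\|u\|^2$ together with the Lipschitz continuity of $\nabla\Psi$ and continuity of $\nabla\Phi$ to bound $\exp(-\Phi-\Psi)\,(\|\nabla\Phi\|+\|\nabla\Psi\|)$, and Fernique's theorem to ensure the corresponding Gaussian moments are finite.

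Finally, since the integrand defining $D(x)$ is strictly positive, $D(x)>0$ for every $x$, so once $N$ and $D$ are shown to be Fréchet differentiable with continuous derivatives the quotient rule yields Fréchet differentiability of $x\mapsto N(x)/D(x)$, and hence of $h^y(t,\cdot)$ and $\log h^y(t,\cdot)$. This is exactly the differentiability required in \cref{prop:doobs_works} to apply the infinite-dimensional $h$-transform, and it is obtained here purely from the differentiability of the potentials, in contrast to \cref{setting:cameron-martin} where one instead shifts the Gaussian and differentiates via Cameron--Martin.
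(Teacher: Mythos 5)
Your proposal follows the same route as the paper's proof: replace the prior-initialised forward process by a Gaussian-initialised Ornstein--Uhlenbeck reference $V_t$, use the shared transition kernel to get $\frac{\drm\,\text{Law}(X_0,X_t)}{\drm\,\text{Law}(V_0,V_t)}(x_0,x_t)\propto\exp(-\Psi(x_0))$, rewrite $h^y(t,x)$ as a ratio of two integrals against a \emph{fixed} Gaussian in which $x$ enters only through a bounded linear map inside the potentials, and differentiate by the chain rule --- thereby sidestepping the Cameron--Martin/Feldman--H\'ajek obstruction exactly as you identify. There is, however, one genuine restriction: you set $C=C_\pi$ at the outset, whereas \cref{setting:gaussian} (and hence the lemma) permits $C_\pi\neq C$, and the paper's proof explicitly notes that it does \emph{not} assume $C=C_\pi$. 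Your conditional law $\mathcal{N}\big(e^{-t/2}x,(1-e^{-t})C\big)$ is correct only in the stationary case $C=C_\pi$; in general one has $V_0\mid V_t=x\sim\mathcal{N}(K_t x,P_t)$ with
\begin{align}
C_t = e^{-t}C_\pi + (1-e^{-t})C,\qquad K_t = e^{-\frac{t}{2}}C_\pi C_t^{-1},\qquad P_t = C_\pi - e^{-t}C_\pi C_t^{-1}C_\pi,
\end{align}
which is what the paper uses. The repair is mechanical --- replace $e^{-t/2}x$ by $K_t x$ and $(1-e^{-t})C$ by $P_t$ throughout, and since $K_t$ is bounded linear your chain-rule argument goes through verbatim --- but as written your proof establishes only a special case of the stated lemma, not the lemma itself.

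On the analytic side you are, if anything, more careful than the paper: the paper's proof concludes with the bare assertion that differentiability of $h^y$ follows from that of $\Phi$ and $\Psi$, leaving the interchange of Fr\'echet derivative and Gaussian expectation implicit, whereas you explicitly flag the need for an integrable dominating function (using $E_0\leq\Psi\leq E_1+E_2\|u\|^2$, the Lipschitz/continuity assumptions on $\nabla\Psi$ and $\nabla\Phi$, and Fernique's theorem) to justify differentiation under the integral and to upgrade G\^ateaux to continuous Fr\'echet differentiability, and you check strict positivity of the denominator before the quotient rule. One caveat: \cref{setting:gaussian} gives no growth control on $\|\nabla\Phi\|$ beyond continuity, so the integrability of $\exp(-\Phi-\Psi)\,\|\nabla\Phi\|$ against the Gaussian still needs to be verified rather than asserted; this is a gap you share with, rather than introduce relative to, the paper's own one-line conclusion.
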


\begin{proof}
The idea of this proof is to use that the prior distribution $\pi$ has a density with respect to a Gaussian distribution. This is similar to how \citet[Theorem 13]{pidstrigach2023infinite} prove local Lipschitz continuity of the unconditional score. Using similar arguments, we can express everything in terms of the Ornstein-Uhlenbeck SDE started from $\mathcal{N}(0, C_\pi)$, which has a Gaussian form.
Let
\begin{align}
    \drm V_t = -\frac{1}{2}V_t \drm t +\sqrt{C}\drm W^{\mathcal{H}}_t, \quad V_0 \sim \mathcal{N}(0, C_\pi),
\end{align}
which has the same form as $X_t$, except that $V_0 \sim \mathcal{N}(0, C_\pi)$. Then:
\begin{align}
    V_t = e^{-\frac{t}{2}}V_0 + {\sqrt{(1-e^{-t})}}\epsilon, 
\end{align}
for $\epsilon \sim \mathcal{N}(0, C)$ (N.B. we do not assume $C$ is equal to $C_\pi$). From standard properties of Gaussian variables we therefore know
\begin{align}
    V_0 \mid V_t \sim \mathcal{N}(K_t V_t, P_t),
\end{align}
where
\begin{align}
C_t = e^{-t} C_\pi + (1 - e^{-t}) C \qquad
K_t = e^{-\frac{t}{2}} C_\pi C_t^{-1}, \qquad
P_t = C_\pi - e^{-t} C_\pi C_t^{-1}C_{\pi} .
\end{align}

Furthermore, we know that
\begin{subequations}
\begin{align}
\mathbb{E}_{\pi}\left[ \exp\left(-\Phi(X_0)\right) \mid X_t = x \right]
&= \frac{
\mathbb{E}\left[ \exp \left(-\Psi(V_0)\right)
      \exp \left(-\Phi(V_0)\right) \mid V_t = x \right]
}{
\mathbb{E}\left[ \exp \left(-\Psi(V_0)\right) \mid V_t = x \right]
} 
\\
&= \frac{
\int \exp \left(-\Psi(v) - \Phi(v)\right)
\drm \mathcal{N}(K_t x, P_t)(v)
}{
\int \exp(-\Psi(v)) \drm \mathcal{N}(K_t x, P_t)(v)
} 
\\
&= \frac{
\int \exp\left(-\Psi(v + K_t x) - \Phi(v + K_t x)\right)
\drm\mathcal{N}\left(0, P_t\right)(v)
}{
\int \exp\left(-\Psi(v + K_t x)\right)
\drm\mathcal{N}\left(0, P_t\right)(v)
}.
\end{align}
\end{subequations}
Therefore, $h^y$ is differentiable with respect to $x$ if $\Phi$ and $\Psi$ are. 
\end{proof}

\begin{lemma}\label{lemma:cameron_martin_frechet}
    Let the support of $\pi$ lie in the Cameron-Martin space of $C$, such as in \cref{setting:cameron-martin}. Then $h^y(t, x)$ is Fréchet differentiable with respect to $x$.
\end{lemma}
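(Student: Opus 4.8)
The plan is to exploit that, in \cref{setting:cameron-martin}, every element of the support of $\pi$ lies in the Cameron--Martin space $\mathcal{H}_C \coloneqq C^{1/2}(\mathcal{H})$ of $C$, so that the conditional law of $X_0$ given $X_t$ can be written through explicit Gaussian Radon--Nikodym densities. Writing $\sigma_t^2 = 1-e^{-t}$, the forward process gives $X_t \mid X_0 = x_0 \sim \mathcal{N}(e^{-t/2}x_0, \sigma_t^2 C)$. Since $e^{-t/2}x_0 \in \mathcal{H}_C$ for every $x_0$ in the support, the Cameron--Martin theorem makes $\mathcal{N}(e^{-t/2}x_0, \sigma_t^2 C)$ equivalent to the centred reference measure $\nu_t \coloneqq \mathcal{N}(0, \sigma_t^2 C)$, with density $G_t(x, x_0) \coloneqq \frac{\drm \mathcal{N}(e^{-t/2}x_0, \sigma_t^2 C)}{\drm \nu_t}(x) = \exp(\Lambda_t(x,x_0))$, where $\Lambda_t(\cdot,x_0)$ is affine in the Paley--Wiener functional of $x$ associated to the shift $e^{-t/2}x_0$. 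Bayes' rule then yields
\begin{align}
    h^y(t,x) = \frac{1}{\xi}\,\frac{\int \exp(-\Phi(x_0,y))\,G_t(x,x_0)\,\drm\pi(x_0)}{\int G_t(x,x_0)\,\drm\pi(x_0)}.
\end{align}

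First I would observe a structural simplification specific to this setting: the factor $\exp(-\Phi(x_0,y))$ does not depend on $x$, so all $x$-dependence of $h^y$ is carried by the Gaussian kernel $G_t(\cdot, x_0)$. Consequently---and in contrast with \cref{lemma:gaussian_frechet}---differentiability in $x$ does \emph{not} require any smoothness of $\Phi$; it is inherited entirely from the Gaussian density. I would therefore reduce the claim to (i) Fréchet differentiability of $x \mapsto G_t(x,x_0)$, with $\nabla_x G_t(x,x_0) = G_t(x,x_0)\,\nabla_x\Lambda_t(x,x_0)$, and (ii) the ability to differentiate the numerator and denominator under the integral sign. Since $D_t(x) \coloneqq \int G_t(x,x_0)\,\drm\pi(x_0) > 0$ is a strictly positive integral of strictly positive densities, the quotient rule then gives Fréchet differentiability of $h^y$ and of $\log h^y$.

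The interchange of differentiation and integration is where the radius-$R$ assumption does the work. Because the support is contained in a ball of radius $R$ in $\mathcal{H}_C$, the Cameron--Martin norms of the shifts $e^{-t/2}x_0$ are bounded uniformly by $e^{-t/2}R$, so the exponents $\Lambda_t(\cdot, x_0)$ and their gradients admit Gaussian-integrable dominating functions that are uniform over the support. This uniform control---the same mechanism used by \citet{pidstrigach2023infinite} to establish local Lipschitz regularity of the unconditional score---justifies dominated convergence and hence differentiation under the integral. Concretely, I would bound the incremental ratios $\bigl(G_t(x+\epsilon h, x_0) - G_t(x,x_0)\bigr)/\epsilon$ in $L^1(\nu_t)$ uniformly in $x_0$, using that $\Lambda_t$ is affine in the stochastic-extension functional and that $\exp(\Lambda_t)$ has all Gaussian moments bounded on the $R$-ball.

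The main obstacle is the regularity of the Paley--Wiener exponent itself: $\Lambda_t(\cdot, x_0)$ is only defined $\nu_t$-almost everywhere, and its formal gradient is a multiple of $C^{-1}x_0$, which lies in $C^{-1/2}(\mathcal{H})$ rather than in $\mathcal{H}$. The delicate point is thus to make precise in which sense $G_t(\cdot, x_0)$ is differentiable and to verify that the uniform bounds from the $R$-ball survive the passage to the ratio. The payoff of the estimate is that, after multiplication by $C$, the guidance $C\nabla\log h^y$ is tamed back into the Cameron--Martin space, consistent with its appearance as a drift in \cref{eq:app_Zty}. With differentiability in $x$ established, the time differentiability, positivity, and martingale properties needed in \cref{prop:doobs_works} follow from the cited lemmas of \citet{baker2024conditioning}.
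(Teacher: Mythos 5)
Your proposal follows essentially the same route as the paper's proof: both apply the Cameron--Martin theorem to the transition kernel $\mathcal{N}(e^{-t/2}x_0,(1-e^{-t})C)$ to obtain a Gaussian Radon--Nikodym density $n_t(x_0,x)$ (your $G_t$), write $h^y(t,x)$ as a ratio of integrals of this kernel against $\pi$ in which the factor $\exp(-\Phi(x_0,y))$ is $x$-independent (so no smoothness of $\Phi$ is needed), and deduce Fréchet differentiability in $x$ from that of the exponential kernel, leaning on the radius-$R$ bound exactly as in Theorem 12 of \citet{pidstrigach2023infinite}. If anything, you are more explicit than the paper about the two steps it leaves implicit — the dominated-convergence justification for differentiating under the integral, and the Paley--Wiener caveat that the formal gradient $C^{-1}x_0$ of the exponent need not lie in $\mathcal{H}$ and is only tamed back into $\mathcal{H}$ after multiplication by $C$.
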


\begin{proof}
The proof for this is similar to the proof that $s(t, x)$ is Lipschitz continuous in Theorem 12 by \citet{pidstrigach2023infinite}.
Let $X_t$ be a solution to the forward SDE in \eqref{eq:inf_dim_forward}. Then given $X_0 \in H$, we know that the transition kernel from $X_0$ at time $0$ to time $t$ is given by
\begin{align}
    \mathcal{N}(e^{-{\frac{t}{2}}}X_0, (1-e^{-t})C).
\end{align}

Define
\begin{align}
    n_t(x_0, x_t) \coloneqq \frac{\drm \mathcal{N}(e^{-{\frac{t}{2}}}x_0, (1-e^{-t})C)}{\drm\mathcal{N}(0, (1-e^{-t})C)}(x_t).
\end{align}

By the Cameron-Martin theorem, which we may apply as we assume the support of $\pi$ lies in the Cameron-Martin space $U$ of $C$, we get
\begin{align}
    n_t(x_0, x_t) = \exp\left(\frac{\langle e^{-\frac{t}{2}}x_0, x_t\rangle_U - e^{-t}\|x_0\|^2_U}{1-e^{-t}} \right).
\end{align}

By the proof of \citet[Theorem 12]{pidstrigach2023infinite} $n_t(x_0, x_t)$ is the joint distribution of $X_0, X_t$.

Moreover, for a function $g$, following the same proof of \citet[Theorem 12]{pidstrigach2023infinite} but replacing $x_0$ with $g(x_0)$ we get
\begin{align}
    \ebb[g(X_0)\mid X_t=x] = \frac{\int g(x_0) n_t(x_0, x)\drm \pi(x_0)}{\int n_t(x_0, x) \drm \pi(x_0)},
\end{align}
almost surely.
More precisely, by direct computation it holds
\begin{subequations}
\begin{align}
    \ebb\left[\delta_A \frac{\int g(x_0) n_t(x_0, X_t)\drm \pi(x_0)}{\int n_t(x_0, X_t) \drm \pi(x_0)}\right]
    &= \int_A \frac{\int_{\mathcal H} g(x_0) n_t(x_0, x_t) \drm \pi(x_0)}{\int_{\mathcal H} n_t(x_0, x_t) \drm \pi(x_0)}\drm \pbb_t(x_t)
    \\
&= \int_{\mathcal H} \int_A
\frac{\int_{\mathcal H} g(x_0) n_t(x_0,x_t) \drm\pi(x_0)}
     {\int_{\mathcal H} n_t(x_0,x_t) \drm\pi(x_0)}
 n_t(\tilde{x}_0,x_t)
\drm\mathcal N(0,(1-e^{-t}) C)(x_t)
\drm\pi(\tilde{x}_0) 
\\
&= \int_A \int_{\mathcal H}
g(x_0) n_t(x_0,x_t) \drm\pi(x_0)
\frac{\int_{\mathcal H} n_t(\tilde{x}_0,x_t) \drm\pi(\tilde{x}_0)}
     {\int_{\mathcal H} n_t(x_0,x_t) \drm\pi(x_0)}
 \drm\mathcal N(0,(1-e^{-t}) C)(x_t) 
\\
&= \int_A \int_{\mathcal H}
g(x_0) n_t(x_0,x_t) \drm\pi(x_0)
\drm\mathcal N(0, (1-e^{-t})C)(x_t)
= \mathbb{E}\left[\delta_A g(X_0)\right].
\end{align}
\end{subequations}
Hence we know that
\begin{subequations}
\begin{align}
    h^y(t, x) &= \ebb[\exp(-\Phi(X_0))\mid X_t = x]
    \\
    &= \frac{\int \exp(-\Phi(x_0)) n_t(x_0, x)\drm \pi(x_0)}{\int n_t(x_0, x) \drm \pi(x_0)}.
\end{align}
\end{subequations}
Then $h^y(t, x)$ is Fréchet differentiable with respect to $x$ if $n_t(x_0, x)$ is, and indeed by the form of $n_t$ coming from the Cameron-Martin theorem, $n_t$ is Fréchet differentiable in $x$ since the inner product is.
\end{proof}

\begin{proposition}\label{prop:doobs_gives_conditional}
    The process $Z^y_t$ is the conditional process $Z_t \mid Y = y$ and thus converges to $Z_T \sim \pi^y$.
\end{proposition}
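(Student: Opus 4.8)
The plan is to identify the Doob-transformed measure $\qbb$ from \cref{prop:doobs_works} with the regular conditional probability $\pbb^y=\pbb(\cdot\mid Y=y)$ by verifying the defining averaging property of the latter; uniqueness of regular conditional probabilities on the (Polish) path space then forces the two to agree for $\pbb_Y$-almost every $y$. I would work on the joint law of the path $Z_{[0,T]}$ and the observation $Y$, where $Z\sim\pbb$ is the reverse SDE \eqref{eq:backwardSDE} and, conditionally on the data endpoint $Z_T=X_0$, $Y$ is drawn from $\pi_0$ translated by $G(Z_T)$. The crucial structural fact is that $Y$ depends on the path only through $Z_T$, so that $\ebb[\psi(Y)\mid Z]=\int_{\mathcal Y}\psi(y)\exp(-\Phi(Z_T,y))\,\drm\pi_0(y)$ for bounded $\psi$.

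The first step is to recognise that the Doob density is exactly a normalised terminal likelihood. Since $Z$ is Markov and $Z_T=X_0$, the process $s\mapsto h^y(T-s,Z_s)=\xi^{-1}\ebb[\exp(-\Phi(Z_T,y))\mid Z_s]$ is the closed martingale generated by $\exp(-\Phi(Z_T,y))$; its terminal value at $s=T$ is the density appearing in \cref{prop:doobs_works}, namely $\xi^{-1}\exp(-\Phi(Z_T,y))$, and its value at $s=0$ is $1$ because $Z_T\sim\pi$ and $\xi=\ebb_{f\sim\pi}[\exp(-\Phi(f,y))]$. From the same disintegration I would record the marginal $\drm\pbb_Y(y)=\xi(y)\,\drm\pi_0(y)$.

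With these in hand the verification is short. Substituting $\ebb_{\qbb^y}[F(Z)]=\ebb_{\pbb}[F(Z)\,\xi(y)^{-1}\exp(-\Phi(Z_T,y))]$ into $\int_{\mathcal Y}\psi(y)\,\ebb_{\qbb^y}[F(Z)]\,\drm\pbb_Y(y)$, the normalisers cancel and one obtains $\int_{\mathcal Y}\psi(y)\,\ebb_{\pbb}[F(Z)\exp(-\Phi(Z_T,y))]\,\drm\pi_0(y)$. On the other hand, conditioning on the path and applying Fubini gives $\ebb_{\pbb}[F(Z)\psi(Y)]$ equal to the same expression, so $\qbb^y$ obeys the defining averaging identity of $\pbb^y$ and the two coincide $\pbb_Y$-a.s. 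Taking $F(Z)=\mathbf 1_A(Z_T)$ then yields $\pbb^y(Z_T\in A)=\int_A\xi^{-1}\exp(-\Phi(f,y))\,\drm\pi(f)=\pi^y(A)$ by \eqref{eq:posterior distribution}, so $Z^y_t$ is the conditional process and $Z_T^y\sim\pi^y$.

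I expect the genuine work to lie not in the algebra but in the infinite-dimensional measure theory: establishing existence and almost-sure uniqueness of the regular conditional probability on the path space, verifying that $y\mapsto\qbb^y$ is a measurable Markov kernel so the integrals above are meaningful, and justifying the Fubini interchange via integrability of $\exp(-\Phi)$ under \cref{setting:cameron-martin} and \cref{setting:gaussian}. A secondary point requiring care is the time-index convention, so that $h^y$ evaluated at the terminal time of the reverse process is genuinely the normalised terminal likelihood $\xi^{-1}\exp(-\Phi(Z_T,y))$; this is exactly the martingale representation above, but it should be stated explicitly to rule out an off-by-reversal error.
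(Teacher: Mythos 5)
Your proposal is correct and, in substance, rests on the same two facts as the paper's proof: that the Doob density is the normalised terminal likelihood $\xi(y)^{-1}\exp(-\Phi(Z_T,y))$ (equivalently, that $s\mapsto h^y(T-s,Z_s)$ is the closed martingale it generates), and that $Y$ depends on the path only through $Z_T=X_0$, with likelihood $\exp(-\Phi)$ relative to $\pi_0$. The difference is in how the identification with the conditional law is finished. The paper works pointwise in $y$: for $A\in\mathcal{F}_t$ it manipulates $\qbb^y(A)=\xi^{-1}\ebb_{\pbb}[\delta_A\,\ebb[\exp(-\Phi(X_0,y))\mid X_t]]$ via the tower property into $\xi^{-1}\int \pbb(A\mid x_0)\exp(-\Phi(x_0,y))\,\drm\pi(x_0)=\int \pbb(A\mid x_0)\,\drm\pi^y(x_0)$, and then asserts this equals $\pbb(A\mid Y=y)$ — implicitly invoking the disintegration of the conditional path law over $X_0$. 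You instead integrate against test functions $\psi(y)F(Z)$ under the joint law, verify the defining averaging identity of the regular conditional probability (using the marginal $\drm\pbb_Y=\xi\,\drm\pi_0$, which the paper never needs), and conclude by almost-sure uniqueness of regular conditional probabilities on the Polish path space. Your route makes explicit precisely the ingredients the paper's final equality leaves implicit (the conditional independence $Y\perp Z_{[0,T)}\mid Z_T$ and the Bayes formula \eqref{eq:posterior distribution}), at the modest cost of checking measurability of the kernel $y\mapsto\qbb^y$ and the Fubini interchange, both of which you correctly flag and which hold here since the integrands are nonnegative and $\xi(y)<\infty$ for $\pi_0$-a.e.\ $y$; your reduction of the density to the terminal likelihood via the closed martingale also cleanly resolves the time-index ambiguity in $h^y(T,Z_T)$. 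One phrasing slip: the martingale's value at $s=0$ is $h^y(T,Z_0)$, a random variable with \emph{expectation} $1$ rather than identically $1$ (since $Z_0$ is random); what your argument actually needs — and what your justification via $Z_T\sim\pi$ delivers — is $\ebb_{\pbb}[\xi^{-1}\exp(-\Phi(Z_T,y))]=1$.
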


\begin{proof}



We show by direct calculation that $\mathbb{Q}$ is the regular conditional probability $\pbb(\cdot \mid Y = y).$ Let $\{\mathcal{F}_s\}$ be the filtration generated by $X_s$. Let $A \in \mathcal{F}_t$. As before, denote $\xi = \ebb_{\pi}[\exp(-\Phi(X_0, y)]$. We also denote $\delta_A$ as the Dirac delta function on a set $A$, with $\delta_A(x) = 1, x\in A$, and $\delta_A(x)=0, x \notin A $. Then
\begin{subequations}
\begin{align}
    \mathbb{Q}^y(A) &= \ebb_{\mathbb{Q}^y}[\delta_A]
    \\
    &= \frac{1}{\xi}\ebb_{\pbb}[\delta_A \ebb[\exp(-\Phi(X_0, y)\mid X_t]]
    \\
    &= \frac{1}{\xi}\ebb_{\pbb}[\ebb[\delta_A \mid X_t] \exp(-\Phi(X_0, y)]
    \\
    &= \frac{1}{\xi}\ebb_{\pbb}[\pbb(A \mid X_0) \exp(-\Phi(X_0, y)]
    \\
    &= \frac{1}{\xi}\int \pbb(A \mid x_0) \exp(-\Phi(x_0, y)) \drm \pi(x_0)
    \\
    &= \int \pbb(A \mid x_0) \drm \pi^y(x_0) = \pbb(A \mid Y=y).
\end{align}
\end{subequations}

\end{proof}

\subsection{Proof of \cref{thm:h_transform_decomp}}\label{proof:h_transform_decomp}
We prove \cref{thm:h_transform_decomp}, in two lemmas for the two settings. For a proof under \cref{setting:cameron-martin} see \cref{proof:h_decomp_setting1} and  for a proof under \cref{setting:gaussian} see \cref{proof:h_decomp_setting2}.

\begin{lemma}\label{proof:h_decomp_setting1}
    Let \cref{setting:cameron-martin} hold. Then 
        \begin{align}
        C\nabla \log h^y(t, x) = s^y(t, x) - s(t, x).
    \end{align}
\end{lemma}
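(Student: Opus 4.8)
The plan is to differentiate the explicit representation of $h^y$ from \cref{lemma:cameron_martin_frechet} and match the result against the definitions of $s^y$ and $s$. Writing $N(x)=\int \exp(-\Phi(x_0))\,n_t(x_0,x)\,\drm\pi(x_0)$ and $D(x)=\int n_t(x_0,x)\,\drm\pi(x_0)$, so that $h^y(t,x)=N(x)/D(x)$, the logarithmic derivative splits as
\begin{align}
    C\nabla\log h^y = \frac{C\nabla N}{N}-\frac{C\nabla D}{D},
\end{align}
and it suffices to evaluate each ratio.

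The central observation is that premultiplication by $C$ converts the $\mathcal{H}$-gradient into the gradient with respect to the Cameron--Martin inner product $\langle\cdot,\cdot\rangle_U$ of $C$. Indeed, for any functional $F$ that is Fréchet differentiable on $\mathcal{H}$ one has $\langle\nabla_{\mathcal{H}}F,v\rangle_{\mathcal{H}}=DF[v]=\langle\nabla_U F,v\rangle_U=\langle C^{-1}\nabla_U F,v\rangle_{\mathcal{H}}$ for $v\in U$, whence $C\nabla_{\mathcal{H}}F=\nabla_U F$. Now $n_t(x_0,x)$ depends on $x$ only through $\beta_t\langle x_0,x\rangle_U$, so its $U$-gradient is immediate, $\nabla_U\log n_t(x_0,x)=\beta_t x_0$, giving $C\nabla_x n_t(x_0,x)=\beta_t\,x_0\,n_t(x_0,x)$ with no ill-defined factor $C^{-1}x_0$. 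Since in \cref{setting:cameron-martin} the support of $\pi$ lies in a ball of radius $R$ in $U$, the norms $\|x_0\|_U$, and hence all integrands and their $x$-derivatives, are uniformly bounded on the support, which justifies differentiating under the integral sign.

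Applying this to $N$ and $D$ and using the conditional-expectation representation $\ebb[g(X_0)\mid X_t=x]=\int g(x_0)n_t(x_0,x)\,\drm\pi(x_0)\big/\int n_t(x_0,x)\,\drm\pi(x_0)$ from \cref{lemma:cameron_martin_frechet}, with $g(x_0)=x_0$, I obtain $C\nabla D/D=\beta_t\,\ebb[X_0\mid X_t=x]$. For $N$, reweighting $\drm\pi$ by $\xi^{-1}\exp(-\Phi(\cdot,y))=\drm\pi^y/\drm\pi$ turns the ratio into the posterior conditional mean; since $Y$ depends on $X_0$ alone and leaves the transition kernel $n_t$ unchanged, this is exactly $C\nabla N/N=\beta_t\,\ebb[X_0\mid X_t=x,Y=y]$. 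Subtracting gives
\begin{align}
    C\nabla\log h^y(t,x)=\beta_t\bigl(\ebb[X_0\mid X_t=x,Y=y]-\ebb[X_0\mid X_t=x]\bigr).
\end{align}
Substituting $s(t,x)=-\alpha_t(x-e^{-t/2}\ebb[X_0\mid X_t=x])$, $s^y(t,x)=-\alpha_t(x-e^{-t/2}\ebb[X_0\mid X_t=x,Y=y])$ and $\beta_t=\alpha_t e^{-t/2}$ shows the difference $s^y-s$ equals the right-hand side, which is the claim.

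I expect the main obstacle to be the rigorous handling of the Cameron--Martin inner product. For fixed $x_0$ the density $n_t(x_0,\cdot)$ is not Fréchet differentiable on all of $\mathcal{H}$: its formal $\mathcal{H}$-gradient involves $C^{-1}x_0$, which generally escapes $\mathcal{H}$. The derivation therefore cannot rest on differentiating individual densities, but must use the $\mathcal{H}$-differentiability of the ratio $h^y$ already secured in \cref{lemma:cameron_martin_frechet}, together with the identity $C\nabla_{\mathcal{H}}=\nabla_U$ restricted to $U$-directions, which is exactly what makes the premultiplication by $C$ well defined.
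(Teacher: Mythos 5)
Your proof is correct, and its computational core coincides with the paper's: both rest on the Cameron--Martin form of the kernel $n_t$, the identity $C\nabla_x n_t(x_0,x)=\beta_t\,x_0\,n_t(x_0,x)$, and the representation $\ebb[g(X_0)\mid X_t=x]=\int g(x_0)n_t(x_0,x)\,\drm\pi(x_0)\big/\int n_t(x_0,x)\,\drm\pi(x_0)$ from \cref{lemma:cameron_martin_frechet}, applied under both $\pi$ and the reweighted prior $\pi^y$ (valid since $\pi^y\ll\pi$ keeps the support in the same Cameron--Martin ball, and $Y\perp X_t\mid X_0$ leaves the transition kernel unchanged). Where you deviate is the route to the decomposition: the paper first proves the intermediate identity $\ebb[\exp(-\Phi(X_0))\mid X_t=x]\propto \frac{\drm\pi^y_t}{\drm\pi_t}(x)$, then splits $C\nabla\log\frac{\drm\pi^y_t}{\drm\pi_t}$ through the reference Gaussian $\mathcal{N}(0,(1-e^{-t})C)$ and differentiates each Radon--Nikodym factor separately; you instead differentiate the quotient $h^y=N/D$ directly, bypassing the marginal laws $\pi_t,\pi^y_t$ entirely. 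Your shortcut is genuinely leaner and, in two respects, more careful than the paper: you justify differentiation under the integral via the uniform bound $\|x_0\|_U\le R$ on the support (the paper interchanges silently), and by working with the $U$-gradient identity $C\nabla_{\mathcal{H}}F=\nabla_U F$ you never write the intermediate expression $C^{-1}x_0$, which for $x_0$ merely in the range of $C^{1/2}$ need not lie in $\mathcal{H}$ (the paper's displayed $\nabla_x n_t=\beta_t C^{-1}x_0\,n_t$ is only formal until $C$ is applied). What the paper's longer route buys is conceptual: identifying $h^y$ with the ratio of marginal densities $\drm\pi^y_t/\drm\pi_t$ is the infinite-dimensional analogue of the Bayes decomposition \eqref{eq:score_decomp_finite} and makes the ``likelihood of $y$ given $x_t$'' interpretation explicit, whereas your argument delivers the same formula without that interpretive layer. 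One caveat applying equally to both proofs: the pairing $\langle x_0,x\rangle_U$ for $x\notin U$ must be read as the Paley--Wiener extension, defined only almost everywhere, a subtlety neither you nor the paper addresses.
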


\begin{proof}
We denote by $\pi_t$ the law of $X_t$, and by $\pi^y_t$ the law of the solution to the forward SDE in \eqref{eq:inf_dim_forward} but started from $X_0 \sim \pi^y$. 
First we will show that 
\begin{align}
    \ebb[
    \exp(-\Phi(X_0)) \mid X_t = x
    ]
    \propto \frac{\drm \pi^y_t}{\drm \pi_t},
\end{align}
by showing that $\ebb[\exp(-\Phi(X_0)) \mid X_t = x]$ satisfies
\begin{align}
    \int_A \ebb[\exp(-\Phi(X_0)) \mid X_t = x] \drm \pi_t(x) = \int_A \drm \pi_t^y(x).
\end{align}
To see this we note that by \cref{lemma:cameron_martin_frechet}, for a function $g$ we have almost surely
\begin{align}
    \ebb[g(X_0)\mid X_t=x] = \frac{\int g(x_0) n_t(x_0, x)\drm \pi(x_0)}{\int n_t(x_0, x) \drm \pi(x_0)},
\end{align}
and
\begin{subequations} 
\begin{align}
        \ebb\left[\delta_A \frac{\int g(x_0) n_t(x_0, X_t)\drm \pi(x_0)}{\int n_t(x_0, X_t) \drm \pi(x_0)}\right] 
        &= \int_A \int_{\mathcal H}
        g(x_0) n_t(x_0,x_t) \drm\pi(x_0)
        \drm\mathcal N(0, (1-e^{-t})C)(x_t)
        \\
        &= \int_{\mathcal H}\int_A 
        g(x_0)
        \drm\mathcal N(e^{-{\frac{t}{2}}}x_0, (1-e^{-t})C)(x_t)\drm\pi(x_0).
\end{align}
\end{subequations}

Hence,
\begin{align}
    \int_A \ebb[\exp(-\Phi(X_0)) \mid X_t = x]\drm \pi_t(x) 
    &= \int_{\mathcal H}\int_A 
        \exp(-\Phi(x_0))
        \drm\mathcal N(e^{-{\frac{t}{2}}}x_0, (1-e^{-t})C)(x_t)\drm\pi(x_0).\label{eq:helper_radon}
\end{align}
Now noting that by Bayes' formula \citep{stuart2010inverse},
$\exp(-\Phi(x_0)) \propto \frac{\drm \pi^y}{\drm \pi}(x_0)$, we see that \eqref{eq:helper_radon}  is proportional to
\begin{align}
   \int_{\mathcal H}\int_A 
        \frac{\drm \pi^y}{\drm \pi}(x_0)
        \drm\mathcal N(e^{-{\frac{t}{2}}}x_0, (1-e^{-t})C)(x_t)\drm\pi(x_0) = \int_{\mathcal H}\int_A 
        \drm\mathcal N(e^{-{\frac{t}{2}}}x_0, (1-e^{-t})C)(x_t)\drm\pi^y(x_0) = \pi^y_t(A).
\end{align}

Now, we know that
\begin{align}
    \frac{\drm \pi^y_t}{\drm \pi_t}(x) = \frac{\drm \pi^y_t}{\drm \mathcal N(0, (1-e^{-t})C)}(x) \cdot \left[\frac{\drm \pi_t}{\drm \mathcal N(0, (1-e^{-t})C)}\right]^{-1}(x).
\end{align}
Taking the logarithm, differentiating and applying $C$ therefore gives
\begin{align}
       C \nabla \log \frac{\drm \pi^y_t}{\drm \pi_t}(x) 
       = C \nabla \log \frac{\drm \pi^y_t}{\drm \mathcal N(0, (1-e^{-t})C)}(x) - C \nabla \log \frac{\drm \pi_t}{\drm \mathcal N(0, (1-e^{-t})C)}(x). 
\end{align}

Hence, to complete the proof we need to show that
\begin{align}
    C \nabla \log \frac{\drm \pi^y_t}{\drm \mathcal N(0, (1-e^{-t})C)}(x) -  
    C \nabla \log \frac{\drm \pi_t}{\drm \mathcal N(0, (1-e^{-t})C)}(x) 
    = s^y(t, x) - s(t, x).
\end{align}

By definition 
\begin{subequations}
\begin{align}
   s(t, x) 
   &= -\frac{1}{1-e^{-t}}\left(x - e^{-\frac{t}{2}}\ebb[X_0 \mid X_t = x]\right)
   \\
   & = -\frac{1}{1-e^{-t}}\left(x - e^{-\frac{t}{2}} \frac{\int x_0 n_t(x_0, x)\drm \pi(x_0)}{\int n_t(x_0, x) \drm \pi(x_0)}\right),
\end{align}
\end{subequations}
and similarly for $s^y(t, x)$ and so
\begin{align}
    s^y(t, x) - s(t, x) = \frac{e^{-\frac{t}{2}}}{1 - e^{-t}}
    \left( \frac{\int x_0 n_t(x_0, x)\drm \pi^y(x_0)}{\int n_t(x_0, x) \drm \pi^y(x_0)} 
    - \frac{\int x_0 n_t(x_0, x)\drm \pi(x_0)}{\int n_t(x_0, x) \drm \pi(x_0)}\right).\label{eq:interim_diff_s}
\end{align}
Now
\begin{subequations}  
\begin{align}
    C \nabla \log \frac{\drm \pi_t}{\drm \mathcal N(0, (1-e^{-t})C)}(x)
    &= C \nabla \log \int n_t(x_0, x) \drm \pi(x_0)
    \\
    &=\frac{ \int C\nabla n_t(x_0, x) \drm \pi(x_0)}{\int n_t(x_0, x) \drm \pi(x_0)}\label{eq:interim_full_expr}.
\end{align}
\end{subequations}
By definition,
\begin{align}
    n_t(x_0, x_t) = \exp\left(\frac{\langle e^{-\frac{t}{2}}x_0, x_t\rangle_U - e^{-t}\|x_0\|^2_U}{1-e^{-t}} \right),
\end{align}
so
\begin{align}
    \nabla_x n_t(x_0, x) = \frac{e^{-\frac{t}{2}}}{1-e^{-t}}C^{-1}x_0 n_t(x_0, x).
\end{align}
Substituting this back into \eqref{eq:interim_full_expr} gives
\begin{align}
    \frac{e^{-\frac{t}{2}}}{1-e^{-t}}\frac{ \int x_0 n_t(x_0, x) \drm \pi(x_0)}{\int n_t(x_0, x) \drm \pi(x_0)}.
\end{align}

Following the same logic for $C \nabla \log \frac{\drm \pi^y_t}{\drm \mathcal N(0, (1-e^{-t})C)}(x)$ and comparing to \eqref{eq:interim_diff_s} gives the result.
\end{proof}

\begin{lemma}\label{proof:h_decomp_setting2}
    Let \cref{setting:gaussian} hold so that $\frac{\drm \pi}{\drm \mathcal{N}(0, C_{\pi})}(x) = \exp(-\Psi(x))$. Then 
    \begin{align}
        C\nabla \log h^y(t, x) = s^y(t, x) - s(t, x).
    \end{align}
\end{lemma}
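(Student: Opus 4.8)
The plan is to mirror the proof of \cref{proof:h_decomp_setting1}, but to replace the Cameron--Martin transition density $n_t$ (unavailable here, since the support of $\pi$ need not lie in the Cameron--Martin space of $C$) with the explicit Gaussian conditional law of the auxiliary Ornstein--Uhlenbeck process $V_t$ introduced in \cref{lemma:gaussian_frechet}. Writing $\pi_t$ and $\pi^y_t$ for the laws of the forward SDE \eqref{eq:inf_dim_forward} started from $\pi$ and $\pi^y$, I would first observe that $h^y(t,x) = \frac{\drm\pi^y_t}{\drm\pi_t}(x)$. This follows by the same disintegration used in \cref{proof:h_decomp_setting1}, but carried out against the Gaussian conditional $V_0\mid V_t$ rather than $n_t$: by the Markov property $\frac{\drm\pi^y_t}{\drm\pi_t}(x) = \ebb[\tfrac{\drm\pi^y}{\drm\pi}(X_0)\mid X_t=x]$, and Bayes' rule gives $\frac{\drm\pi^y}{\drm\pi} = \xi^{-1}\exp(-\Phi(\cdot,y))$, which is precisely \eqref{eq:h-transform}.

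Next, using the law $\mathcal{N}(0,C_t)$ of $V_t$, with $C_t = e^{-t}C_\pi + (1-e^{-t})C$, as common reference measure, I would split
\begin{align}
C\nabla\log h^y(t,x) = C\nabla\log\tfrac{\drm\pi^y_t}{\drm\mathcal{N}(0,C_t)}(x) - C\nabla\log\tfrac{\drm\pi_t}{\drm\mathcal{N}(0,C_t)}(x).
\end{align}
By the computation already in \cref{lemma:gaussian_frechet}, each Radon--Nikodym derivative is, up to a normalising constant, a conditional expectation against $V_0\mid V_t \sim \mathcal{N}(K_t x, P_t)$: the denominator term is proportional to $\ebb[\exp(-\Psi(V_0))\mid V_t=x]$ and the posterior term to $\ebb[\exp(-\Psi(V_0)-\Phi(V_0))\mid V_t=x]$. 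Writing $V_0 = K_t x + \zeta$ with $\zeta\sim\mathcal{N}(0,P_t)$, differentiating under the integral (the chain rule produces a factor $K_t^*$ from the mean shift), and applying Gaussian integration by parts $\ebb[\nabla_\zeta f(\zeta)] = P_t^{-1}\ebb[\zeta f(\zeta)]$ converts the resulting $\nabla\Psi$ (resp.\ $\nabla(\Psi+\Phi)$) terms into a first moment of $\zeta$, yielding
\begin{align}
C\nabla\log\tfrac{\drm\pi_t}{\drm\mathcal{N}(0,C_t)}(x) = C K_t^* P_t^{-1}\big(\ebb[X_0\mid X_t=x] - K_t x\big),
\end{align}
and the analogue with $\ebb[X_0\mid X_t=x,Y=y]$ for the posterior. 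The deterministic $K_t x$ contributions cancel in the difference, leaving
\begin{align}
C\nabla\log h^y(t,x) = C K_t^* P_t^{-1}\big(\ebb[X_0\mid X_t=x,Y=y] - \ebb[X_0\mid X_t=x]\big).
\end{align}

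It then remains to identify the operator prefactor. I would verify the algebraic identity $C K_t^* = \beta_t P_t$ with $\beta_t = \frac{e^{-t/2}}{1-e^{-t}}$, i.e.\ $C K_t^* P_t^{-1} = \beta_t I$, by substituting $K_t^* = e^{-t/2}C_t^{-1}C_\pi$ and $P_t = C_\pi - e^{-t}C_\pi C_t^{-1}C_\pi$ and using $(1-e^{-t})C = C_t - e^{-t}C_\pi$; the point is that $(C_t - e^{-t}C_\pi)C_t^{-1}C_\pi = C_\pi - e^{-t}C_\pi C_t^{-1}C_\pi$, so the identity holds \emph{without} assuming $C$ and $C_\pi$ commute. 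Comparing the resulting expression with the score definitions \eqref{eq:inf_dim_score} and \eqref{eq:conditional_score} and using $\alpha_t e^{-t/2} = \beta_t$ identifies the right-hand side as $s^y(t,x) - s(t,x)$, which is the claim.

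The main obstacle is the rigorous justification of interchanging differentiation with integration together with the infinite-dimensional Gaussian integration by parts: one must check that $K_t x$ lies in the Cameron--Martin space of $P_t$ so that the mean-shift (Cameron--Martin) formula for $\mathcal{N}(K_t x, P_t)$ applies, and then use the growth bound $E_0 \le \Psi(u) \le E_1 + E_2\|u\|^2$ together with Lipschitz continuity of $\nabla\Psi$ and $\nabla\Phi$ from \cref{setting:gaussian} to produce the dominating functions needed for dominated convergence. Existence of the Fréchet derivative of $h^y$ is already supplied by \cref{lemma:gaussian_frechet}, so this step concerns quantitative control of the moments rather than differentiability itself.
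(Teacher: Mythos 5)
Your proposal is correct and takes essentially the same route as the paper's proof: both reduce to the auxiliary Ornstein--Uhlenbeck process with Gaussian conditional $V_0 \mid V_t \sim \mathcal{N}(K_t x, P_t)$, differentiate under the integral (picking up the factor $K_t^*$), convert the resulting $\nabla\Psi$ and $\nabla(\Psi+\Phi)$ terms into posterior means via Stein's lemma / Gaussian integration by parts, and close with the operator identity $C K_t^* = \beta_t P_t$, which indeed holds without assuming $C$ and $C_\pi$ commute. The only cosmetic differences are your intermediate identification $h^y(t,x) = \frac{\drm \pi^y_t}{\drm \pi_t}(x)$ with the $\mathcal{N}(0, C_t)$ reference split and the (ultimately cancelling) $P_t^{-1}$ bookkeeping, neither of which changes the substance of the argument.
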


\begin{proof}
Throughout the proof, we simplify notation by writing $\Phi(\cdot)$ instead of $\Phi(\cdot, y)$.
    Let $V_0, V_t, C_t, K_t, P_t$ be as defined in \cref{lemma:gaussian_frechet}. Then from there, we know that
\begin{align}
h(t, x) = \mathbb{E}_{\pi}\left[ \exp\left(-\Phi(X_0)\right) \mid X_t = x \right]
&= \frac{
\int \exp\left(-\Psi(v + K_t x) - \Phi(v + K_t x)\right)
\drm\mathcal{N}\left(0, P_t\right)(v)
}{
\int \exp\left(-\Psi(v + K_t x)\right)
\drm\mathcal{N}\left(0, P_t\right)(v)
}.
\end{align}
To simplify notation let's say that $h(t, x) = \frac{f(x)}{g(x)},$ with 
\begin{subequations}
\begin{align}
   f(x)&\coloneqq \int \exp\left(-\Psi(v + K_t x) - \Phi(v + K_t x)\right)
\drm\mathcal{N}\left(0, P_t\right)(v)
\\
g(x) &\coloneqq \int \exp\left(-\Psi(v + K_t x)\right)
\drm\mathcal{N}\left(0, P_t\right)(v).
\end{align}
\end{subequations}
Then 
\begin{subequations}  
\begin{align}
    D f(x)[a] &= -\int \exp\left(-(\Psi + \Phi)(v + K_t x)\right)(D\Psi + D\Phi)(v + K_t x)[K_t a]
\drm\mathcal{N}\left(0, P_t\right)(v)
\\
 &= -\int \exp\left(-(\Psi + \Phi)(v)\right)(D\Psi + D\Phi)(v )[K_t a]
\drm\mathcal{N}\left(K_t x, P_t\right)(v).
\end{align}
Furthermore, we see that
\begin{align}
    \frac{D f(x)[a]}{f(x)} = -\ebb_{\pi^y}[(D\Psi + D\Phi)(X_0 )[K_t a]\mid X_t= x].
\end{align}
\end{subequations}
Similarly for $g$
\begin{subequations}
\begin{align}
    D g(x)[a] &= -\int \exp\left(-\Psi(v)\right)D\Psi(v )[K_t a]
\drm\mathcal{N}\left(K_t x, P_t\right)(v),
\\
    \frac{D g(x)[a]}{g(x)} &= -\ebb_{\pi}[D(\Psi)(X_0 )[K_t a]\mid X_t = x].
\end{align}
\end{subequations}
Now
\begin{subequations}
\begin{align}
    D_x \left[\log h(t, x)\right][a] &= D \log \frac{f(x)}{g(x)} = \frac{D f(x)}{f(x)} - \frac{D g(x)}{g(x)} 
    \\
    &= \ebb_{\pi}[D\Psi(X_0 )[K_t a]\mid X_t = x]- \ebb_{\pi}[(D\Psi + D\Phi)(X_0 )[K_t a]\mid X_t = x]. 
\end{align}
\end{subequations}
By the Riesz representation theorem, let $\nabla \Psi(X_0) \in \mathcal{H}$ be the unique element such that $D \Psi(X_0)[a] = \langle \nabla \Psi(X_0),a\rangle_{\mathcal{H}},$ and similarly for $\nabla \Phi(X_0)$. 
Then we note that
\begin{subequations}
\begin{align}
        D_x \left[\log h(t, x)\right][a] 
    &= \ebb_{\pi}[\langle \nabla \Psi(X_0 ),K_t a\rangle \mid X_t = x]- \ebb_{\pi}[\langle(\nabla\Psi + \nabla\Phi)(X_0 ), K_t a\rangle\mid X_t = x]
    \\
    &= \langle\ebb[\nabla\Phi(X_0 ) \mid X_t = x], K_t a \rangle. 
\end{align}
\end{subequations}
Hence, $C\nabla \log h(t, x) = C K_t^* \ebb[\nabla\Phi(X_0 ) \mid X_t = x],$ 
where $K_t^*$ is the adjoint of $K_t$.

Now we show that $s^y(t, x) -s(t, x)$ can also be expressed in this way.

By definition,
\begin{subequations}
\begin{align}
    s^y(t, x) -s(t, x) &= \frac{e^{-t/2}}{1-e^{-t}}(\ebb[X_0 \mid X_t=x] - \ebb[X_0 \mid X_t=x, Y=y])
    \\
    &= \frac{e^{-t/2}}{1-e^{-t}}\left(
    \frac{\ebb[V_0 \exp(-\Psi(V_0))\mid V_t=x]}{\ebb[\exp(-\Psi(V_0))\mid V_t=x]} - \frac{\ebb[V_0 \exp(-\Phi(V_0)- \Psi(V_0)) \mid V_t=x]}{\ebb[ \exp(-\Phi(V_0)- \Psi(V_0)) \mid V_t=x]}
    \right).
\end{align}
\end{subequations}
By Stein's Lemma, for a function differentiable function $G: \mathcal{H} \to \rbb$ (which will represent combinations of $\Phi, \Psi$) we know
\begin{align}
    \ebb[V_0\exp(-G(V_0))\mid V_t =x] = K_tx\ebb[\exp(-G(V_0))\mid V_t =x] - P_t \ebb[\exp(-G(V_0))\nabla G(V_0) \mid V_t =x].
\end{align}
Dividing everything by $\ebb[\exp(-G(V_0)) \mid V_t =x]$ gives
\begin{align}
    \frac{\ebb[V_0\exp(-G(V_0))\mid V_t =x]}{\ebb[\exp(-G(V_0))\mid V_t =x]} = K_tx - P_t \frac{\ebb[\exp(-G(V_0))\nabla G(V_0) \mid V_t =x]}{\ebb[\exp(-G(V_0))\mid V_t =x]}.
\end{align}
Applying this with $G=\Psi$ and $G=\Psi + \Phi$ we therefore see that
\begin{subequations}
    \begin{align}
    \ebb[X_0 \mid X_t=x] &= K_tx - P_t \ebb[\nabla \Psi(X_0) \mid X_t = x],
    \\
    \ebb[X_0 \mid X_t=x, Y=y] &= K_tx - P_t \ebb[(\nabla \Phi + \nabla \Psi)(X_0) \mid X_t = x],
\end{align}
\end{subequations}
hence 
\begin{align}
    s^y(t, x) - s(t, x) = \frac{e^{-t/2}}{1-e^{-t}}P_t\ebb[\nabla \Phi(X_0) \mid X_t = x].
\end{align}
Then all that is left to show is that $\frac{e^{-t/2}}{1-e^{-t}}P_t = C K_t^*.$
For ease, we repeat the definitions of the operators here:
\begin{align}
C_t = e^{-t} C_\pi + (1 - e^{-t}) C, \quad
K_t = e^{-\frac{t}{2}} C_\pi C_t^{-1}, \quad
P_t = C_\pi - e^{-t} C_\pi C_t^{-1}C_{\pi} .
\end{align}
From this, and using that $C_t^{-1}, C_{\pi}$ are self-adjoint we see that
\begin{subequations}
    \begin{align}
    C &= (1-e^{-t})^{-1}(C_t - e^{-t}C_{\pi)},
    \\
    CK^*_t &= \frac{e^{t/2}}{1-e^{-t}}(C_t - e^{-t}C_{\pi})C_t^{-1}C_\pi
    \\
    &= \frac{e^{t/2}}{1-e^{-t}}(C_\pi - e^{-t}C_{\pi}C_t^{-1}C_\pi) = \frac{e^{t/2}}{1-e^{-t}}P_t.
\end{align}
\end{subequations}
\end{proof}

\subsection{Proof of Proposition \ref{prop:fine-tuning-loss}}\label{proof:fine-tuning-loss}
\begin{proof}
We can follow the proof of Proposition 4 in \citet{baldassari2023conditional} for the first part of the proof. It suffices to show the claim for a single $t$, due to the tower rule of expectations. First, we can decompose the score-matching loss function as 
\begin{subequations}
\begin{align}
    &\mathbb{E}_{X_t, Y}\left[ \| s^Y(t, X_t) - [s(t, X_t) + u(t, X_t, Y)]\|^2 \right]  \\&= \mathbb{E}_{X_t, Y}[\| s^Y(t, X_t) \|^2] + \mathbb{E}_{X_t, Y}[\| s(t, X_t) + u(t, X_t, Y) \|^2] - 2 \mathbb{E}_{X_t, Y}[\langle s^Y(t, X_t), s(t, X_t) + u(t, X_t, Y) \rangle]. \label{eq:proof_loss1}
\end{align}
\end{subequations}
Using Lemma 8 in \citet{pidstrigach2023infinite}, we know that $\mathbb{E}_{X_t, Y}[\| s^Y(t, X_t) \|^2]$ is bounded. Further, we can decompose the inner product, using the definition of the conditional score as 
\begin{subequations}
    \begin{align}
    \mathbb{E}_{X_t, Y}&[\langle s^Y(t, X_t), s(t, X_t) + u(t, X_t, Y) \rangle] 
    \\ 
    &= -\frac{1}{(1-e^{-t})} \mathbb{E}_{X_t, Y}\left[\left\langle \mathbb{E}[X_t - e^{-\frac{t}{2}}X_0|Y,X_t], s(t, X_t) + u(t, X_t, Y) \right\rangle \right]
    \\ 
    &=  -\frac{1}{(1-e^{-t})} \mathbb{E}_{X_t, Y}\left[\mathbb{E}_{X_0}\left[\left\langle X_t - e^{-\frac{t}{2}} X_0, s(t, X_t) + u(t, X_t, Y) \right\rangle \mid X_t, Y\right]\right] 
    \\ 
    &=  -\frac{1}{(1-e^{-t})} \mathbb{E}_{X_0, X_t, Y}\left[\left\langle X_t - e^{-\frac{t}{2}} X_0, s(t, X_t) + u(t, X_t, Y) \right\rangle \right].
\end{align}
\end{subequations}
By completing the square in \eqref{eq:proof_loss1} we get 
\begin{align}
    \mathbb{E}_{X_t, Y}&\left[ \| s^Y(t, X_t) - [s(t, X_t) + u(t, X_t, Y)]\|^2 \right]  
    \\
    &=
    B + \mathbb{E}_{X_0, X_t, Y}\left[\| -(1-e^{-t})^{-1} (X_t - e^{-\frac{t}{2}} X_0) - [s(t, X_t) + u(t, X_t, Y)] \|^2 \right] 
\end{align}
with 
\begin{align}
    B = \mathbb{E}_{X_t, Y}[\| s^Y(t, X_t) \|^2]-
    \mathbb{E}_{X_0, X_t}\left[ \| (1-e^{-t})^{-1} (X_t - e^{-\frac{t}{2}} X_0)\|^2\right],
\end{align}
which is independent of $u$. 
In order for the denoising score matching loss to be finite, we therefore need that the term $B$ is finite.


Following the proof of Lemma 7 in \cite{pidstrigach2023infinite}, we may further simplify the term $B$.
By the tower property of conditional expectations it holds
\begin{subequations}
\begin{align}
    \ebb&[\langle s^Y(t, X_t), (1-e^{-t})^{-1} (X_t - e^{-\frac{t}{2}} X_0)\rangle]
    \\
    &=-(1-e^{-t})^{-2}\ebb[\langle \ebb[ (X_t - e^{-\frac{t}{2}} X_0)\mid X_t, Y], (X_t - e^{-\frac{t}{2}} X_0)\rangle]
    \\
    &= -(1-e^{-t})^{-2}\ebb[\| \ebb[ (X_t - e^{-\frac{t}{2}} X_0)\mid X_t, Y]\|^2]
    \\
    &= -\ebb[\| s^Y(t, X_t)\|^2].
\end{align}
\end{subequations}
Then
\begin{subequations}
\begin{align}
    B &= -\ebb[\| s^Y(t, X_t)\|^2] 
    + 2\ebb\left[\left\langle s^Y(t, X_t), \frac{1}{1-e^{-t}}  (X_t - e^{-\frac{t}{2}} X_0)\right\rangle\right] 
    - \mathbb{E}_{X_0, X_t}\left[ \left\| \frac{1}{1-e^{-t}} (X_t - e^{-\frac{t}{2}} X_0)\right\|^2\right]
    \\
    &= -\ebb[\|s^Y(t, X_t) - {(1-e^{-t})}^{-1} (X_t - e^{-\frac{t}{2}} X_0)\|^2]
    \\
    &= -\frac{e^{-t}}{(1-e^{-t})^2}\ebb\left[\left\|\ebb[X_0\mid X_t, Y] - X_0\right\|_K^2\right].
\end{align}
\end{subequations}

Now, if $B$ is bounded, then the denoising score matching loss is, so the minimisers of the explicit score matching and the denoising score matching loss are identical. 

Further, as the explicit score matching objective is the expectation of a convex function, \ie the squared $L^2$ norm, the minimum is achieved only if the integrand is zero almost everywhere. Therefore we get
\begin{align}
    u^*(t, X_t, Y) = s^Y(t, X_t) - s(t, X_t).
\end{align}
As the set of measurable $L^2$ function is convex and closed, the minimiser is almost surely unique. Further, we know that $u^*$ is bounded due to assumptions on $B$.
\end{proof}

\section{Stochastic Optimal Control}
\label{app:soc}
In this section, we give some background on the SOC viewpoint discussed in \cref{sec:soc}. In particular, for the finite-dimensional this has recently seen use, see \eg, for model fine-tuning \cite{domingo2024adjoint,domingo2024stochastic}, sampling from unnormalised densities \cite{zhang2021path}, conditional sampling \cite{pidstrigach2025conditioning} or inverse problems \cite{denker2024deft}.

Let $\mathbb{P}$ be the path measure of the unconditional reverse SDE \eqref{eq:backwardSDE}, which we re-state here
\begin{align*}
    \drm Z_t = \left[\frac{1}{2}Z_t + s(T-t, Z_t)\right] \drm t +\sqrt{C}\drm W^\mathcal{H}_t.
\end{align*}
Let us define 
\begin{align}
    \mathcal{E}(\mathbb{Q}) \coloneqq \text{KL}(\mathbb{Q} || \mathbb{P}) + \mathbb{E}_\mathbb{Q}[\Phi(Z_T,y)],
\end{align}
for all $\mathbb{Q} \ll \mathbb{P}$. Then, we have the following proposition.

\begin{proposition}
The conditional path measure $\mathbb{P}^y$ is the unique minimiser of $\mathcal{F}(\mathbb{Q})$.
\end{proposition}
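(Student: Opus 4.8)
The plan is to recognise $\mathcal{E}(\mathbb{Q}) = \text{KL}(\mathbb{Q}\,\|\,\pbb) + \ebb_{\mathbb{Q}}[\Phi(Z_T,y)]$ as a relative-entropy functional and invoke the Gibbs variational principle (Donsker--Varadhan). The crucial input is supplied by \cref{thm:inf_doobs}: evaluating the $h$-transform at the terminal state $Z_T = X_0$, where the conditional expectation in \eqref{eq:h-transform} collapses, gives the explicit tilted density
\[
\frac{\drm\mathbb{P}^y}{\drm\pbb} = \xi^{-1}\exp\bigl(-\Phi(Z_T,y)\bigr),
\qquad \xi = \ebb_{\pbb}\bigl[\exp(-\Phi(Z_T,y))\bigr],
\]
where the identification $\xi = \ebb_{u\sim\pi}[\exp(-\Phi(u,y))]$ uses that $Z_T \sim \pi$ under $\pbb$. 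Since $h^y > 0$, this makes $\mathbb{P}^y$ equivalent to $\pbb$.

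First I would restrict attention to $\mathbb{Q}\ll\pbb$ with $\text{KL}(\mathbb{Q}\,\|\,\pbb) < \infty$, as otherwise $\mathcal{E}(\mathbb{Q}) = +\infty$ and there is nothing to check. For such $\mathbb{Q}$, the Radon--Nikodym chain rule together with $\mathbb{Q}\ll\pbb\sim\mathbb{P}^y$ yields the decomposition
\[
\text{KL}(\mathbb{Q}\,\|\,\pbb)
= \text{KL}(\mathbb{Q}\,\|\,\mathbb{P}^y) + \ebb_{\mathbb{Q}}\!\left[\log\frac{\drm\mathbb{P}^y}{\drm\pbb}\right]
= \text{KL}(\mathbb{Q}\,\|\,\mathbb{P}^y) - \ebb_{\mathbb{Q}}[\Phi(Z_T,y)] - \log\xi.
\]
Substituting into $\mathcal{E}$ cancels the $\ebb_{\mathbb{Q}}[\Phi(Z_T,y)]$ terms and leaves
\[
\mathcal{E}(\mathbb{Q}) = \text{KL}(\mathbb{Q}\,\|\,\mathbb{P}^y) - \log\xi.
\]
As $\log\xi$ is a constant independent of $\mathbb{Q}$, and relative entropy is nonnegative and vanishes precisely when its arguments agree, $\mathcal{E}$ is minimised uniquely at $\mathbb{Q} = \mathbb{P}^y$, attaining the value $-\log\xi$.

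The steps requiring care are all matters of integrability rather than ideas. I would verify $0 < \xi < \infty$ and that $\Phi(\cdot,y)$ is bounded below (which holds in both settings, e.g.\ $\Phi \geq 0$ for a Gaussian observation likelihood), so that $\ebb_{\mathbb{Q}}[\Phi(Z_T,y)]$ is well-defined and the rearrangement does not produce an $\infty - \infty$ indeterminacy; and I would note that the chain rule for the path-space densities is legitimate precisely because $\mathbb{P}^y\sim\pbb$. The only genuinely nontrivial ingredient is importing the closed-form terminal density from \cref{thm:inf_doobs}; once that is in hand, the remainder is the standard Gibbs computation and the main obstacle reduces to confirming the relevant expectations are finite in the infinite-dimensional setting.
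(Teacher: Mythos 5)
Your proposal is correct and follows essentially the same route as the paper: both substitute the terminal identity $\drm\mathbb{P}^y/\drm\pbb = \xi^{-1}\exp(-\Phi(Z_T,y))$ from \cref{thm:inf_doobs} into $\mathcal{E}(\mathbb{Q})$ and rearrange (your Radon--Nikodym chain rule is the same algebra as the paper's substitution $\Phi(Z_T,y) = -\log\frac{\drm\mathbb{P}^y}{\drm\pbb} - \log\xi$) to obtain $\mathcal{E}(\mathbb{Q}) = \mathrm{KL}(\mathbb{Q}\,\|\,\mathbb{P}^y) - \log\xi$ and conclude by nonnegativity of relative entropy. Your added integrability caveats ($0<\xi<\infty$, $\Phi$ bounded below, restriction to finite-KL $\mathbb{Q}$) are reasonable points of care that the paper leaves implicit, but they do not change the argument.
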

\begin{proof}
We start by rewriting $\mathcal{E}$ as
\begin{align}   
    \label{eq:F_1}
    \mathcal{E}(\mathbb{Q}) = \mathbb{E}_\mathbb{Q}\left[\log \frac{\drm \mathbb{Q}}{\drm \mathbb{P}} + \Phi(Z_T,y)\right]. 
\end{align}
From the representation of $\mathbb{P}^y$ in Theorem \ref{thm:inf_doobs} we have that 
\begin{align}
    \Phi(Z_T, y) = - \log \frac{\drm \mathbb{P}^y}{\drm \mathbb{P}} - \log \xi,
\end{align}
with $\xi = \mathbb{E}_\mathbb{P}[\exp(-\Phi(Z_T, y)]$ as the normalisation constant. Substituting this in \eqref{eq:F_1} gives us 
\begin{align}
    \mathcal{E}(\mathbb{Q}) = \mathbb{E}_\mathbb{Q}\left[\log \frac{\drm \mathbb{Q}}{\drm \mathbb{P}} - \log \frac{\drm \mathbb{P}^y}{\drm \mathbb{P}} - \log \xi \right] = \mathbb{E}_\mathbb{Q}\left[ \frac{\drm \mathbb{Q}}{\drm \mathbb{P}^y} \right] - \log \xi= \text{KL}(\mathbb{Q}|| \mathbb{P}^y) - \log \xi. 
\end{align}
As the KL divergence is non-negative and only zero if and only if $\mathbb{Q} = \mathbb{P}^y$ we have our result.
\end{proof}

Now, we consider path measures $\mathbb{Q}^u$ that are induced by the reverse SDE 
\begin{align}
    dZ_t^u = (b(t, Z_t^u) + C u(t, Z_t^u)) dt + \sqrt{C} d W_t^{\mathbb{Q}^u},
\end{align}
with $b(t, z) = \frac{1}{2}z + s(T-t, z)$ as the drift of \eqref{eq:backwardSDE} and the objective function 
\begin{align}
    \mathcal{J}(u) \coloneqq \mathcal{E}(\mathbb{Q}^u) = \text{KL}(\mathbb{Q}^u || \mathbb{P}) + \mathbb{E}_{\mathbb{Q}^u}[\Phi(Z_T^u,y)].
\end{align}
Using Girsanov's theorem \citep[Theorem 10.14]{da2014stochastic}, we can write the Radon-Nikodym derivative as 
\begin{subequations}
\begin{align}
    \frac{\drm \mathbb{Q}^u}{\drm \mathbb{P}} &= \exp\left( \int_0^T \langle \sqrt{C}u_t, dW_t^\mathbb{P} \rangle - \frac{1}{2} \int_0^T \| \sqrt{C} u_t \|_\mathcal{H}^2 \drm t\right) \\ 
    &= \exp\left( \int_0^T \langle \sqrt{C} u_t, \sqrt{C}u_t \rangle \drm t +  \int_0^T \langle \sqrt{C}u_t, dW_t^{\mathbb{Q}^u} \rangle - \frac{1}{2} \int_0^T \| \sqrt{C} u_t \|_\mathcal{H}^2 \drm t \right) \\ 
    &= \exp\left(   \int_0^T \langle \sqrt{C}u_t, dW_t^{\mathbb{Q}^u} \rangle +  \frac{1}{2} \int_0^T \| \sqrt{C} u_t \|_\mathcal{H}^2 \drm t \right)
\end{align}
\end{subequations}
as $d W_t^\mathbb{P} = \sqrt{C} u_t \drm t+\drm W_t^{\mathbb{Q}^u}$. Substituting this in $\mathcal{J}(u)$ and noting that the It\^{o} integral has expectation zero, we get the stochastic optimal control loss function 
\begin{align}
    \mathcal{J}(u) = \mathbb{E}_{\mathbb{Q}^u}\left[\Phi(Z_T^u, y) + \frac{1}{2} \int_0^T \| \sqrt{C} u_t \|_\mathcal{H}^2 dt \right].
\end{align}




\section{Experimental Details}
The code publicly available at: \url{https://github.com/alexdenker/FuncSGT}

\subsection{Sparse Observations}
\label{app:sparse_observations}
We adapt the architecture of the Fourier Neural Operator (FNO) \cite{li2020fourier} to design network architectures which can be applied to any discretisation. The original FNO architecture does not incorporate additional time information, while in our case the score depends explicitly on the time. For this, we make use of a learnable time-embedding, which was previously used for diffusion bridges \cite{yang2024infinite} and PDE applications \cite{park2023learning}. The time-independent FNO layer is defined as 
\begin{align}
    \mathcal{L}_l(v_t)(x) := \sigma(W_l v(x) + \mathcal{F}^{-1}[R_l \cdot \mathcal{F}(v)](x)),
\end{align}
with $\sigma:\R \to \R$ a (non-linear) activation function, $W_l$ a weight matrix and $\mathcal{F}$, $\mathcal{F}^{-1}$ are the Fourier transform and inverse Fourier transform, respectively, and $R_l$ is a learned filter in Fourier space. Here, both $W_l$ and $R_l$ are learnable parameters of the layer. For all of our experiments we use the sigmoid linear unit as the activation function \cite{ramachandran2017searching}. The time-dependent FNO layer, as proposed by \citet{park2023learning}, is then implemented using two time-modulations $\psi_l(t)$ and $\varphi_l(t)$ as 
\begin{align}
    \mathcal{L}_l(v_t)(x,t) := \sigma(W_l \psi_l(t) v(x) + \mathcal{F}^{-1}[ \varphi_l(t)R_l \cdot \mathcal{F}(v)](x)).
\end{align}
Motivated by the success of adaptive normalisation in diffusion models \cite{dhariwal2021diffusion}, we slightly deviate from this implementation and change the time-modulation in physical space to 
\begin{align}
    \mathcal{L}_l(v_t)(x,t) := \sigma((1 + \gamma_l(t))W_l v(x) + \beta_l(t) + \mathcal{F}^{-1}[ \varphi_l(t)R_l \cdot \mathcal{F}(v)](x)),
\end{align}
and learn both a scaling $\gamma_l(t)$ and a bias $\beta_l(t)$.

\paragraph{Dataset} We construct a synthetic prior over one-dimensional functions on the interval $[0,1]$. Each sample is generated as a random superposition of sinusoidal components with random frequencies, amplitudes and phases. In particular, for each sample we draw an integer $n_\text{terms} \sim \mathcal{U}\{1,\dots, 8\}$ and generate
\begin{align}
    f(x) = \sum_{i=1}^{n_\text{terms}} a_i \sin(\omega_i \pi x + \phi_i),
\end{align}
with amplitudes $a_i \sim \mathcal{U}[0.3, 1.0]$, frequencies $\omega_i \sim \mathcal{U}[1.0, 11.0]$ and phase $\phi_i \sim \mathcal{U}(0, 2\pi)$. This always yields smooth and bounded functions. We generate \num{10000} independent samples for training. We use $\nu=1$ for the covariance operator in \cref{eq:covariance}.

\paragraph{Forward Operator} To generate observations, we define a forward operator that approximated pointwise function evaluation using integration against a narrow Gaussian kernel. Given a function $f(x)$ and a set of locations $\{ x_i \}_{i=1}^k$, each point evaluation is computed as 
\begin{align}
    G(f, x_i) &= \int_0^1 f(x) w_\sigma(x-x_i) dx, \\
    w_\sigma(z) &= \frac{1}{Z} \exp\left(-\frac{z^2}{2\sigma^2}\right),
\end{align}
with bandwidth $\sigma=0.02$. The observations are then given as $y=\{G(f, x_i)\}$ for locations $x_i, i=1,\dots, R$. We add i.i.d. Gaussian noise to each point evaluation with a standard deviation $\sigma_\eta = 0.01$. We use $R=10$ observations. 

\paragraph{Conditional Diffusion Model} The conditional diffusion model depends explicitly on the observations $y$. However, naively concatenating the noisy signal $f_t$ and the observations $y$ does not work, as $f_t$ is discretised on a spatial grid of $n_x$ points and $y$ are sparse observations on $k$ locations. We therefore lift the sparse measurements onto the same spatial grid via a Gaussian kernel $\kappa$ (bandwidth $\sigma_x = 0.01$). In particular, we construct two additional input channels
\begin{align}
m(x) = \sum_{i=1}^k \kappa(x, x^i),  \quad v(x) = \sum_{i=1}^k y^i \kappa(x, x^i),
\end{align}
where $m(x)$ encodes observation locations and $v(x)$ encodes observed values. The network input is the channel-wise concatenation $[f_t(x), v(x), m(x)]$. This approach allows the conditional diffusion model to generalise across varying sensor numbers and locations.

\paragraph{Training Setting} For all methods, we use the ADAM optimiser \cite{kingma2014adam} with a cosine learning rate decay starting at $\num{1e-4}$ and ending in $\num{1e-5}$ and a batch size of $256$. The unconditional diffusion model is trained for \num{40000} gradient steps. We use a time-modulated FNO with $4$ layers, $16$ modes and $64$ hidden channels, which results in \num{1140161} trainable parameters. For the conditional diffusion model, we use $8$ FNO layers with $16$ modes and a hidden channel dimension of $128$, which results in \num{8737153} trainable parameters. We train it for \num{80000} gradient steps. The guidance network is implemented similarly to the unconditional model with $4$ layers, $16$ modes and $64$ hidden channels and it also trained for \num{40000} gradient steps.

\paragraph{Evaluation Metrics} Assume that $f_\text{gt}$ is the ground truth signal and we have $N$ samples $\{ f^i \}_{i=1}^N$. We compute both the root mean squared error (RMSE)
\begin{align}
    \text{RMSE}(\{f^i\}, f_\text{gt})= \left( \frac{1}{N} \sum_{i=1}^N \| f^i - f_\text{gt} \|^2 \right)^{\nicefrac{1}{2}}
\end{align}
and the energy score (ES) 
\begin{align}
\begin{split}
    \text{ES}(\{f^i\}, f_\text{gt})=\frac{1}{N} \sum_{i=1}^N \| f^i - f_\text{gt} \|^\beta - \frac{1}{2N^2} \sum_{i=1}^N \sum_{j=1}^N \| f^i - f^j \|^\beta
\end{split}
\end{align}
with $\beta=1.0$ \cite{gneiting2007strictly}. Here, the RMSE measures how close the samples are on average to the ground truth signal and the ES scores how well the distribution is captured.

\subsection{Heat Equation}
\label{app:heat_equation}

\paragraph{Training setting} For all three training setups, we use the ADAM optimiser \cite{kingma2014adam} with a cosine learning rate decay starting at $\num{1e-4}$ and ending in $\num{1e-5}$ and a batch size of $256$. The unconditional diffusion model is trained for $10^5$ gradient steps. We use $4$ time-modulated FNO layers with $16$ modes in the Fourier transform and a channel dimension of $64$. In total, the diffusion model has $\num{1140161}$ trainable parameters. We train the conditional diffusion model using the loss function in \citet{baldassari2023conditional}. The conditional diffusion models depends explicitly on the observations $y$. For this experiment both the observation and the ground truth signal are functions over $\Omega$. We discretise both on the same spatial grid and input both as two channels into the FNO. As the conditional diffusion model has to amortise over different observations $y$, we use a larger architecture. In particular, we use $8$ layers instead of $4$ and the conditional diffusion model has \num{2271745} trainable parameters. We also train the conditional diffusion model for more $\num{2e5}$ gradient steps, double the amount of the unconditional diffusion model. For the FunDPS \cite{yao2025guided} we have to choose the scaling parameter, \ie, $\gamma >0$ in \eqref{eq:approx_guidance}. We choose this via a grid search to minimise the RMSE, the final value was $\gamma=1.3$. Our learned guidance term is implemented using the parametrisation from \eqref{eq:h_trans_param}. Here, $u_\phi^1$ is implemented using the same architecture as the unconditional diffusion model and $u_\phi^2$ is a simple two layer neural networks. The guidance model is trained for \num{2e4} gradient steps. 

\paragraph{Training Data} We discretise the domain $\Omega$ with $128$ equidistant points. The ground truth signals are sampled from a parametrised distribution 
\begin{align*}
    f(x)\!=\alpha\left(\!\exp\!\left(-\gamma(x\!-\!\lambda)^2\!\right)\!-\!\exp\!\left(-\gamma(x\!-\!(1\!-\!\lambda))^2\!\right)\right),
\end{align*}
with $\lambda \sim \mathcal{U}[0.2,0.5]$, $\gamma \sim \mathcal{U}[75, 115]$ and $\alpha \sim \mathcal{U}[0.9, 1.1]$. This corresponds to a distribution of simple functions with two Gaussian bumps. We solve the heat equation via the Crank-Nicolson method with $dt=0.001$ \cite{crank1947practical} to simulate measurements. We use $\nu=1$ for the covariance operator in \cref{eq:covariance}.

\begin{figure*}[t]
\centering
\begin{subfigure}{0.32\textwidth}
    \centering
    \includegraphics[width=\linewidth]{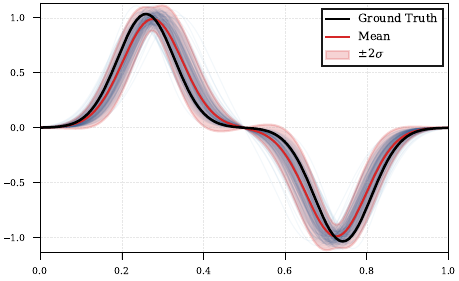}
\end{subfigure}\hfill
\begin{subfigure}{0.32\textwidth}
    \centering
    \includegraphics[width=\linewidth]{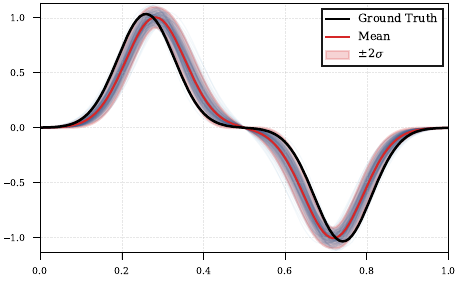}
\end{subfigure}\hfill
\begin{subfigure}{0.32\textwidth}
    \centering
    \includegraphics[width=\linewidth]{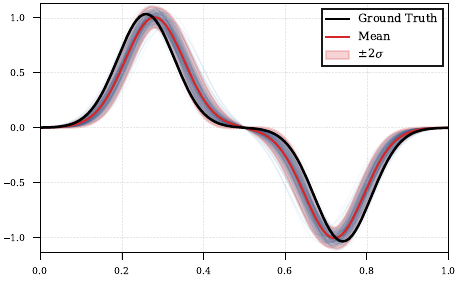}
\end{subfigure}

\begin{subfigure}{0.32\textwidth}
    \centering
    \includegraphics[width=\linewidth]{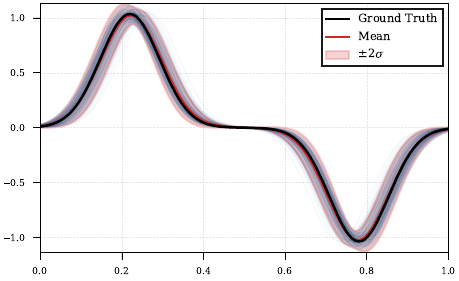}
\end{subfigure}\hfill
\begin{subfigure}{0.32\textwidth}
    \centering
    \includegraphics[width=\linewidth]{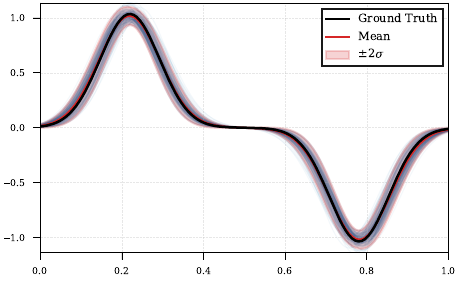}
\end{subfigure}\hfill
\begin{subfigure}{0.32\textwidth}
    \centering
    \includegraphics[width=\linewidth]{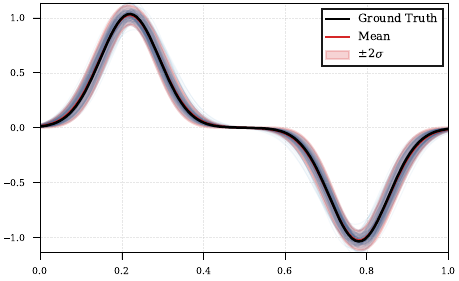}
\end{subfigure}

\begin{subfigure}{0.32\textwidth}
    \centering
    \includegraphics[width=\linewidth]{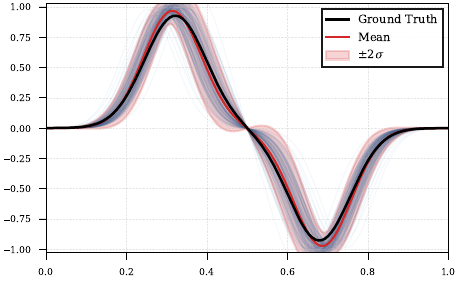}
    \caption{Diffusion Posterior Sampling}
\end{subfigure}\hfill
\begin{subfigure}{0.32\textwidth}
    \centering
    \includegraphics[width=\linewidth]{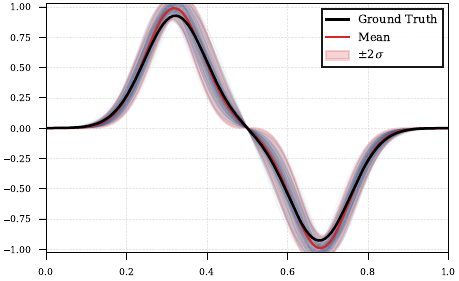}
    \caption{Conditional Diffusion}
\end{subfigure}\hfill
\begin{subfigure}{0.32\textwidth}
    \centering
    \includegraphics[width=\linewidth]{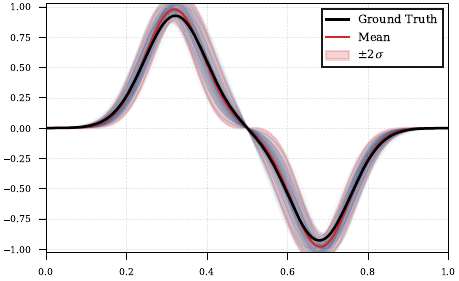}
    \caption{Supervised Guidance Training}
\end{subfigure}
\caption{Posterior sampling for the heat equation. We show the ground truth signal in black, the mean sample in red and individual samples in light blue. Note, that due to the boundary condition $f(0)=f(1)=0$ the uncertainty at the boundary of the domain is small.} 
\label{fig:heat_samples_appendix}
\end{figure*}

\subsection{Shape Inpainting}
\label{app:shape_inpainting}

\paragraph{Elliptic Fourier descriptors} We make use of elliptic Fourier descriptors (EFD) \cite{kuhl1982elliptic}. The curve $(x(t),y(t)), t \in [0,T]$ is expanded as 
\begin{align}
    x(t) = a_0 + \sum_{n=1}^N (a_n \cos(nt) + b_n \sin(nt)), \\
    y(t) = c_0 + \sum_{n=1}^N (c_n \cos(nt) + d_n \sin(nt)),
\end{align}
where centering removed the zeroth-order terms. This results in a finite-dimensional representation of the curve, suitable as an input to a neural network.

\paragraph{Shape Network} We implement the diffusion models using a similar design principle as the FNO \cite{li2020fourier}. Given a noisy shape represented by landmarks, the model acts in three stages. First, we convert the noisy landmarks into the EFD coefficients, where we use $N=16$ modes. The score function $s_\theta(\cdot, t)$ is then parametrised directly in Fourier space by a small residual multi-layer perceptron (MLP). The time $t$ is incorporated as an additional input channel via a time embedding. The predicted score in Fourier space is mapped back to the landmark space using an inverse Fourier transform, producing a vector field on the landmark coordinates. This architecture ensures that all learned operations act on fixed-dimensional coefficients, while making the model compatible with variable-resolution landmarks input. We train the unconditional network using the ADAM optimiser, using a fixed learning rate of $\num{1e-3}$ for $\num{5000}$ epochs. We train the conditional diffusion model using the same architecture as the unconditional model with the conditional information, \ie, the first $5$ EFD modes, appended to the input, which results in $s_\theta([x_t, c], t)$. We discretise the reverse SDE using Euler-Maruyama with $1000$ timesteps. For SGT we use the likelihood-informed inductive bias, see Section \ref{sec:likelihood_bias}, with $k=0$ for the preconditioner. We use $\nu=1$ for the covariance operator in \cref{eq:covariance}.

\paragraph{Conditional Sampling} We compare against FunDPS \cite{yao2025guided}. Note that the standard version of FunDPS trains a joint score model on both the observations and ground truth data and solves the conditional sampling task by masking one of the input channels. Here, we compare against the version in Appendix D of \citet{yao2025guided}, which is closer to the finite-dimensional version of DPS \cite{chung2022diffusion}. FunDPS has a hyperparameter $\gamma >0$ for scaling the likelihood term. We tune this value via a grid search on the first $5$ shapes of the training set. Similar to observations in the finite-dimensional case, we find that the choice of $\gamma$ can be unstable. A large $\gamma$ is often required to satisfy consistency to measurements. However, for a large $\gamma$ the reverse SDE becomes unstable and sometimes diverges. The final values for $\gamma$ were, $\gamma=10.0$ for $2$ coeffs., $\gamma=20.0$ for $4$ coeffs., $\gamma=100.0$ for $6$ coeffs. and $\gamma=165.0$ for $8$ observed coefficients. We discretise the reverse SDE using Euler-Maruyama with $1000$ timesteps.


\begin{figure}
    \centering
    \includegraphics[width=0.8\linewidth]{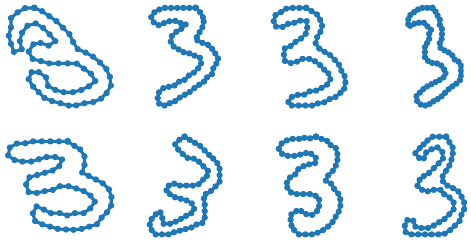}
    \caption{The first $8$ examples of the MNIST test dataset, parametrised using $64$ landmarks.}
    \label{fig:mnist_shape_dataset}
\end{figure}

\begin{figure}
    \centering
     \begin{subfigure}[t]{0.48\linewidth}
        \centering
        \includegraphics[width=\linewidth]{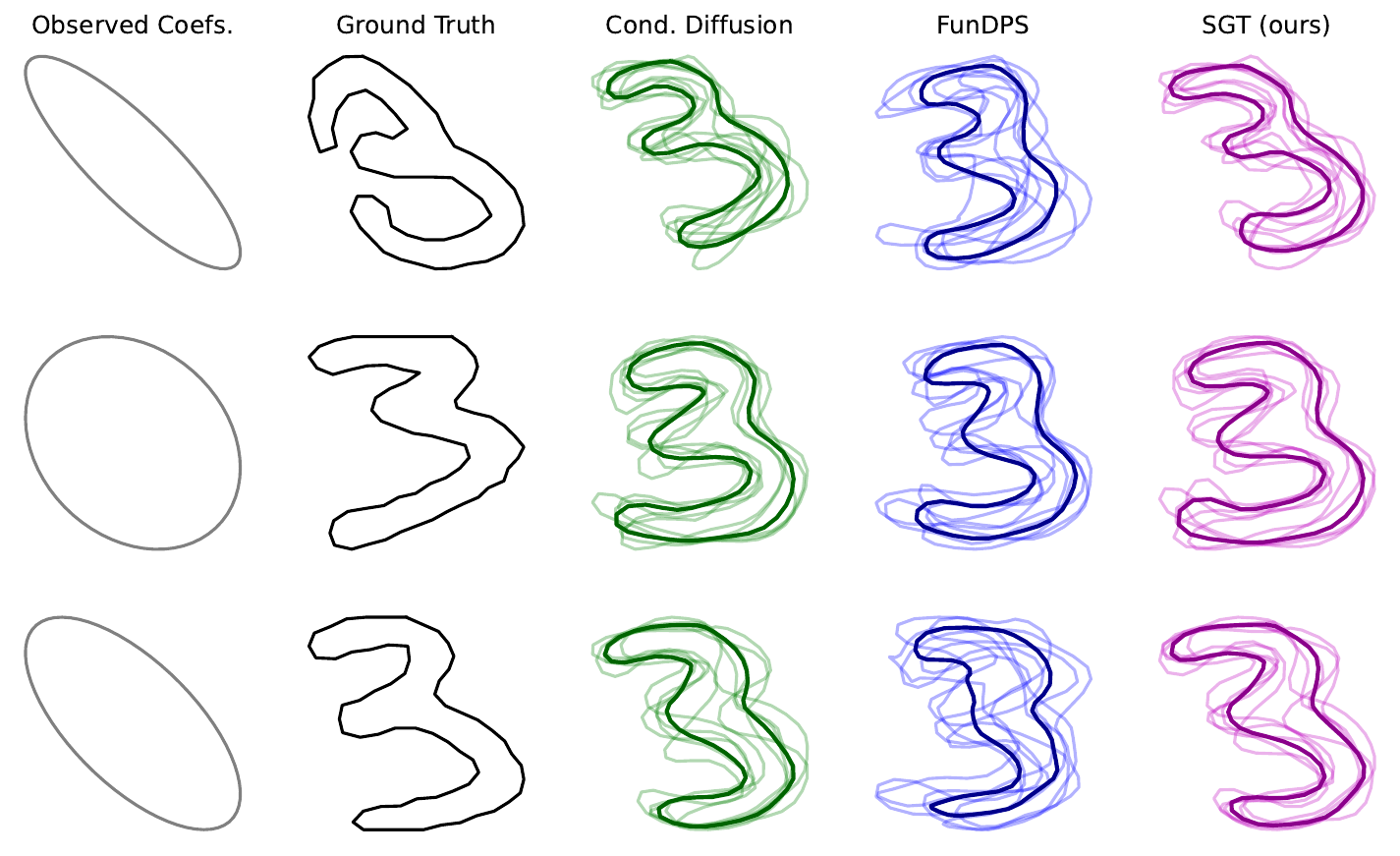}
        \caption{2 coefs.}
    \end{subfigure}
    \hfill
    \begin{subfigure}[t]{0.48\linewidth}
        \centering
        \includegraphics[width=\linewidth]{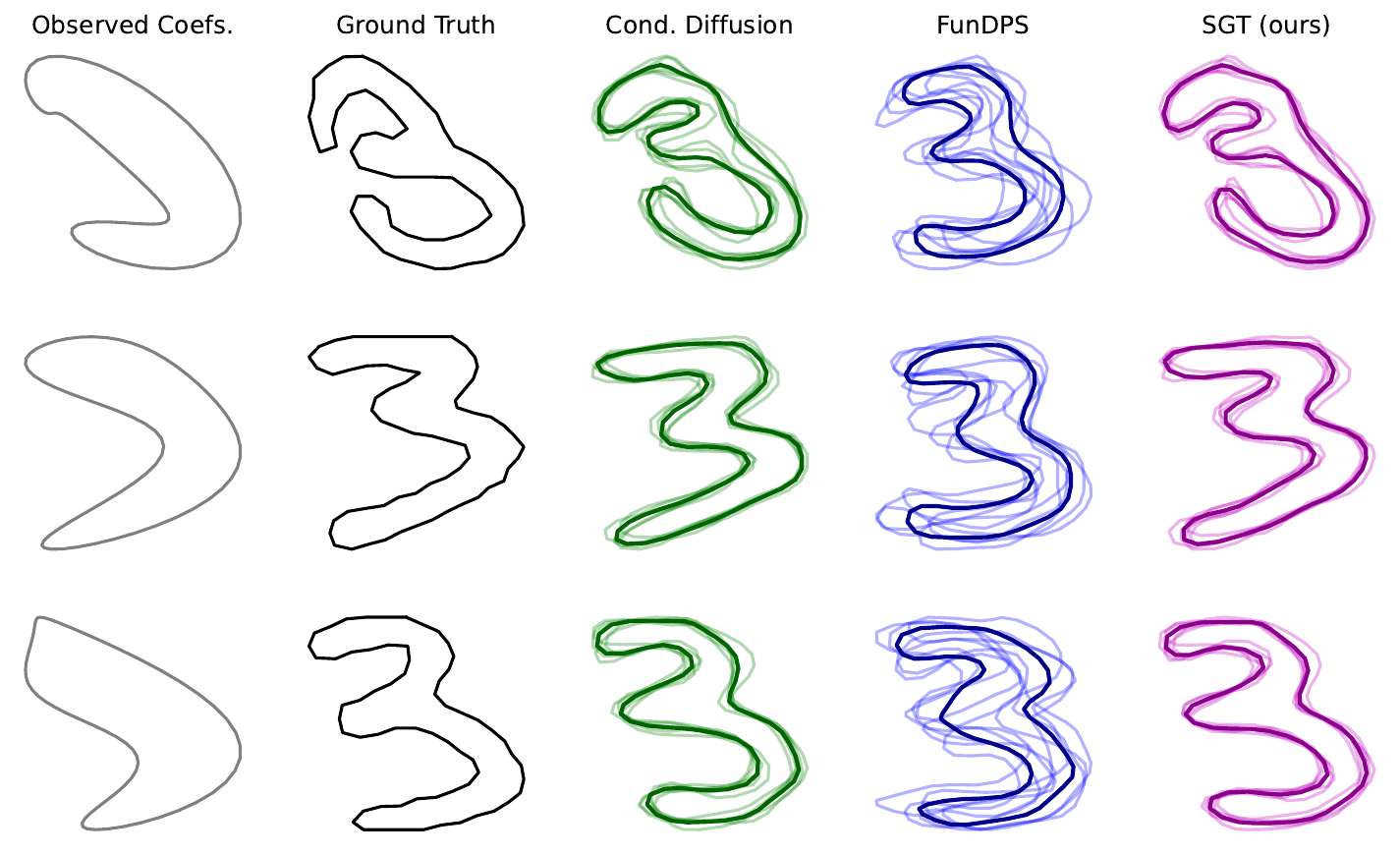}
        \caption{4 coefs.}
    \end{subfigure}
    \begin{subfigure}[t]{0.48\linewidth}
        \centering
        \includegraphics[width=\linewidth]{img/shapes/comparison_cond_dps_htrans_num_coeffs=6.pdf}
        \caption{6 coefs.}
    \end{subfigure}
    \hfill
    \begin{subfigure}[t]{0.48\linewidth}
        \centering
        \includegraphics[width=\linewidth]{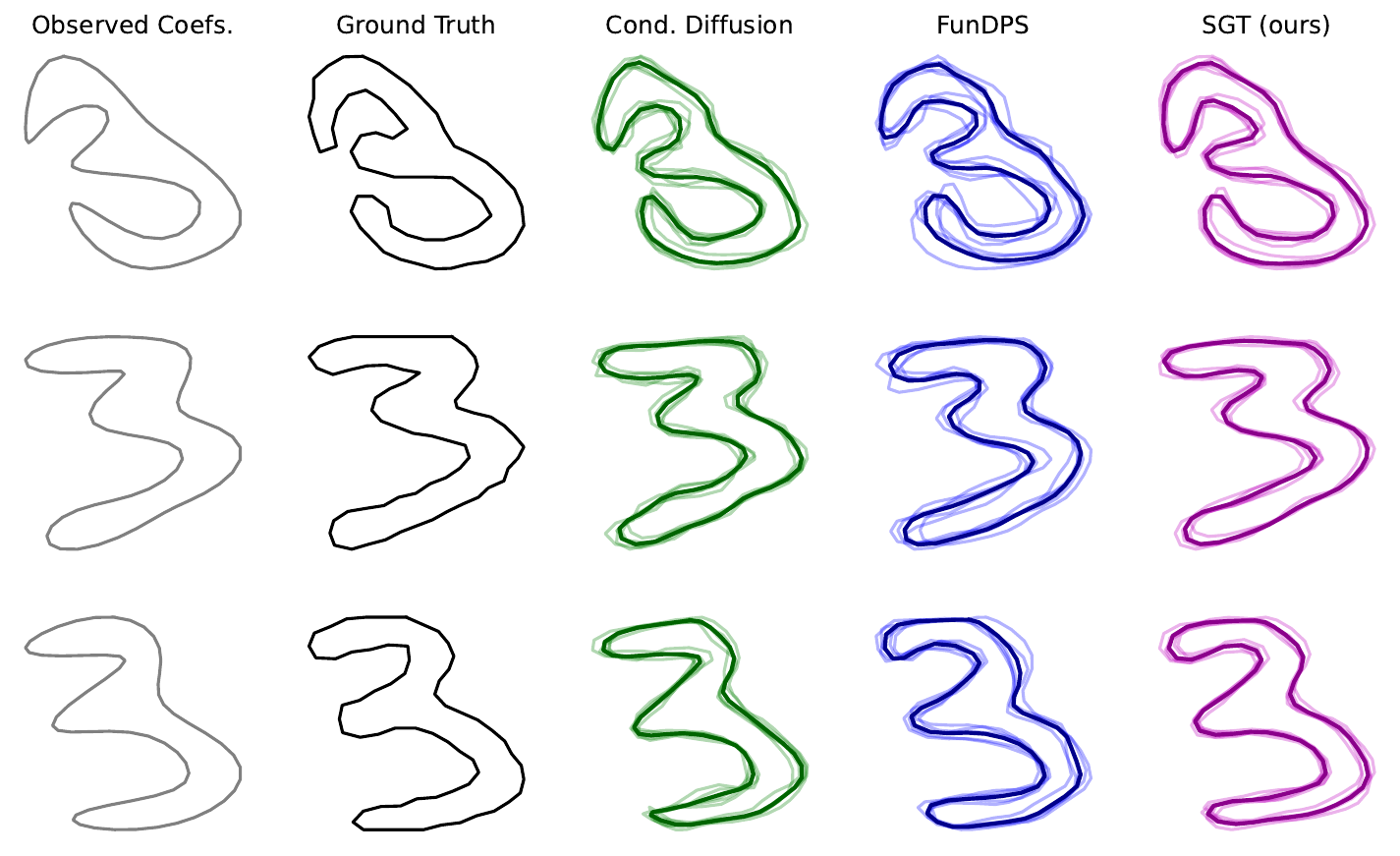}
        \caption{8 coefs.}
    \end{subfigure}
    \caption{Conditional sampling results for the shape data. We show the ground truth shape and $5$ samples per method (overlay). The first column is the reconstruction of the shape from the first $(2,4,6,8)$ coefficients.}
    \label{fig:shape_conditional_additional}
\end{figure}

\subsection{Finite-dimensional vs. Infinite-dimensional}
\label{app:finite_vs_infinite}
A key advantage of the infinite-dimensional diffusion framework is its ability to generate samples at resolutions different from those used during training. In contrast, standard finite-dimensional diffusion models typically struggle to generalise across resolutions. This phenomenon was also observed in Figure 2 of \citet{hagemann2025multilevel}. Here, we reproduce a similar experiment using our datasets and models.

We train a 1D-U-Net within the finite-dimensional diffusion framework using the dataset from Section~\ref{app:sparse_observations}, where signals are discretised on $32$ grid points. Further, we train the neural operator model in the infinite-dimensional framework on the same dataset, also discretised to $32$ points. We then evaluate both models by generating samples at resolutions different from the training resolution.

Although the U-Net is fully convolutional and can therefore be applied to inputs of higher resolution, this does not guarantee meaningful generalisation. To assess this, we generate samples at resolutions $n = 32$, $64$, and $128$ for both models. The samples are shown in Figure~\ref{fig:finite_vs_infinite}. The infinite-dimensional diffusion model generates consistent samples across resolutions with the same structural characteristics. In contrast, the finite-dimensional model fails to generalise in the same way. Samples at lower resolutions remain reasonable. However, those generated at $n=128$ grid points exhibit significantly higher-frequency artifacts and deviate from the structure observed at the training resolution.

\begin{figure*}[t]
\centering

\begin{subfigure}[t]{0.32\textwidth}
    \centering
    \includegraphics[width=\linewidth]{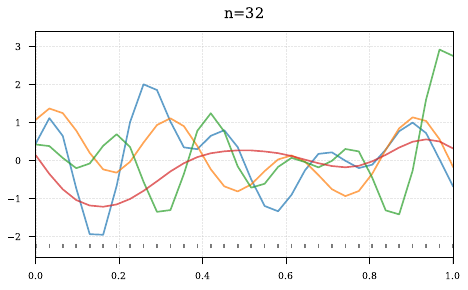}
    \caption*{}
\end{subfigure}
\hfill
\begin{subfigure}[t]{0.32\textwidth}
    \centering
    \includegraphics[width=\linewidth]{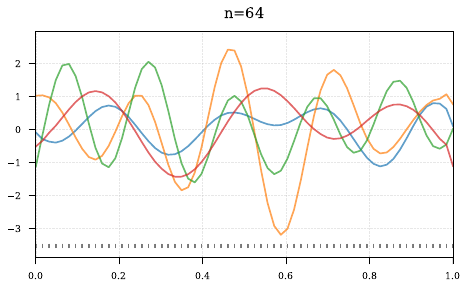}
    \caption*{Infinite-dimensional Model}
\end{subfigure}
\hfill
\begin{subfigure}[t]{0.32\textwidth}
    \centering
    \includegraphics[width=\linewidth]{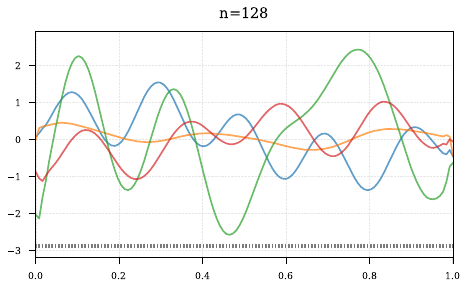}
    \caption*{}
\end{subfigure}

\begin{subfigure}[t]{0.32\textwidth}
    \centering
    \includegraphics[width=\linewidth]{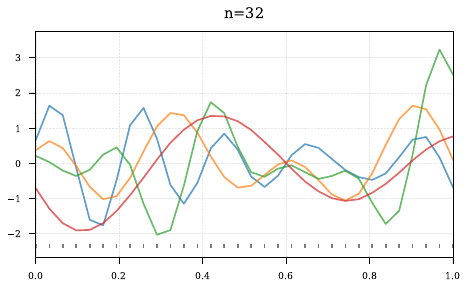}
    \caption*{}
\end{subfigure}
\hfill
\begin{subfigure}[t]{0.32\textwidth}
    \centering
    \includegraphics[width=\linewidth]{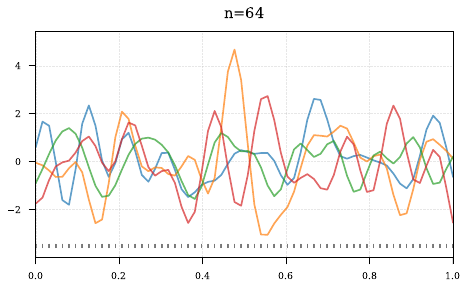}
    \caption*{Finite-dimensional Model}
\end{subfigure}
\hfill
\begin{subfigure}[t]{0.32\textwidth}
    \centering
    \includegraphics[width=\linewidth]{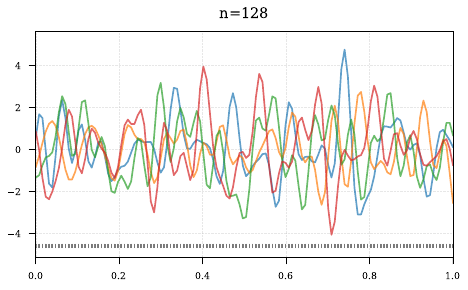}
    \caption*{}
\end{subfigure}
\caption{Comparison of unconditional samples from a finite-dimensional (trained at resolution $n=32$) and an infinite-dimensional diffusion model (also trained using only resolution $n=32$). Samples are generated at resolutions $n=32$, $64$, and $128$.}
\label{fig:finite_vs_infinite}
\end{figure*}

\begin{figure}
    \centering

\begin{subfigure}[t]{0.32\textwidth}
    \centering
    \includegraphics[width=\linewidth]{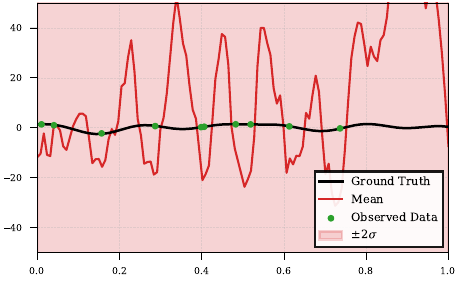}
    \caption*{(a) Diffusion Posterior Sampling (FunDPS)}
\end{subfigure}
\hfill
\begin{subfigure}[t]{0.32\textwidth}
    \centering
    \includegraphics[width=\linewidth]{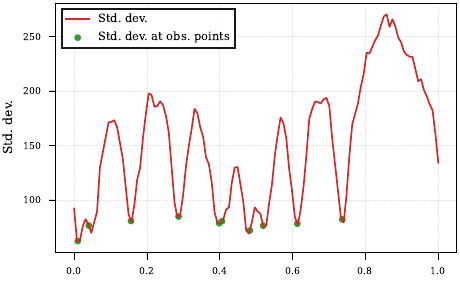}
    \caption*{(b) Std. dev. of FunDPS}
\end{subfigure}
\hfill
\begin{subfigure}[t]{0.32\textwidth}
    \centering
    \includegraphics[width=\linewidth]{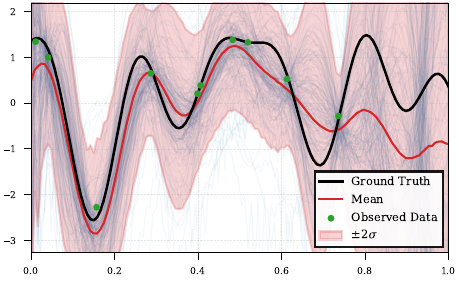}
    \caption*{(c) Supervised Guidance Training}
\end{subfigure}
    \caption{We compare SGT and FunDPS with a randomly initialised base model. With an untrained base model FunDPS is unstable, whereas SGT is able to produce somewhat realistic samples. The FunDPS approximation results in a high standard deviation over the full interval (see (b)).}
    \label{fig:random_base_model}
\end{figure}


\end{document}